\documentclass[journal]{IEEEtai}  

\usepackage[colorlinks,urlcolor=blue,linkcolor=blue,citecolor=blue]{hyperref}
\usepackage{color,array}
\usepackage{graphicx}
\usepackage{amsmath,amssymb,amsthm,amsfonts} 
\usepackage{booktabs}
\usepackage{epsfig}
\usepackage{epstopdf}
\usepackage{subfigure}  
\usepackage{cite}
\usepackage{multirow} 
\usepackage{algorithm,algpseudocode}
\usepackage{xcolor}
\usepackage{float}
\usepackage{subcaption}
\usepackage{tabularx}
\usepackage{ragged2e}
\usepackage{siunitx}

\usepackage{booktabs}
\newtheorem{definition}{Definition}
\newtheorem{theorem}{Theorem}

\newtheorem{proposition}{Proposition}[definition]

\newtheorem{remark}{Remark}

\begin{document}

\title{Fractional Order Federated Learning}

\author{
\IEEEauthorblockN{
Mohammad Partohaghighi\IEEEauthorrefmark{1,3},
Roummel Marcia\IEEEauthorrefmark{2},
and YangQuan Chen\IEEEauthorrefmark{1,3}}%
\thanks{\IEEEauthorrefmark{1}Electrical Engineering and Computer Science, University of California at Merced, USA; e-mail: \texttt{mpartohaghighi@ucmerced.edu}}
\thanks{\IEEEauthorrefmark{2}Department of Applied Mathematics, University of California Merced, Merced, CA, USA; e-mail: \texttt{rmarcia@ucmerced.edu}}
\thanks{\IEEEauthorrefmark{3}Mechatronics, Embedded Systems and Automation (MESA) Lab, Department of Mechanical Engineering, School of Engineering, University of California, Merced, CA 95343, USA; \texttt{ychen53@ucmerced.edu} (corresponding author).}
\thanks{Manuscript received XXXXX XX, 20XX; revised XXXXX XX, 20XX; accepted XXXXX XX, 20XX. Date of publication XXXXX XX, 20XX; date of current version XXXXX XX, 20XX. (Corresponding author: YangQuan Chen.)}
\thanks{This work was supported in part by   the University of California Merced Climate Action Seed Fund 2023-2026 for CMERI - The Center for Methane Emission Research and Innovation ({\tt http://methane.ucmerced.edu/}) and a funding from DOE Methane project (2025-2029). This paper is part of the research excellence focused topic ``Fractional Calculus for Federated Learning (FC4FL)"  at MESA Lab of UC Merced  ({\tt http://mechatronics.ucmerced.edu/fc4fl/}) }}
\markboth{Journal of IEEE Transactions on Artificial Intelligence, Vol. XX, No. XX, Month 20XX}
{Partohaghighi \MakeLowercase{\textit{et al.}}: Fractional Order Federated Learning}

\maketitle

\begin{abstract}
Federated learning (FL) allows remote clients to train a global model collaboratively while protecting client privacy. Despite its privacy-preserving benefits, FL has significant drawbacks, including slow convergence, high communication cost, and non-independent-and-identically-distributed (non-IID) data. In this work, we present a novel FedAvg variation called Fractional-Order Federated Averaging (FOFedAvg), which incorporates Fractional-Order Stochastic Gradient Descent (FOSGD) to capture long-range relationships and deeper historical information. {\color{black}By introducing memory-aware fractional-order updates, FOFedAvg improves communication efficiency and accelerates convergence while mitigating instability caused by heterogeneous, non-IID client data.} {\color{black}We compare FOFedAvg against a broad set of established federated optimization algorithms on benchmark datasets including MNIST, FEMNIST, CIFAR-10, CIFAR-100, EMNIST, the Cleveland heart disease dataset, Sent140, PneumoniaMNIST, and Edge-IIoTset. Across a range of non-IID partitioning schemes, FOFedAvg is competitive with, and often outperforms, these baselines in terms of test performance and convergence speed. On the theoretical side, we prove that FOFedAvg converges to a stationary point under standard smoothness and bounded-variance assumptions for fractional order $0<\alpha\le 1$. Together, these results show that fractional-order, memory-aware updates can substantially improve the robustness and effectiveness of federated learning, offering a practical path toward distributed training on heterogeneous data.}
\end{abstract}

\begin{IEEEImpStatement}
Federated learning (FL) seeks to enable collaborative model training across geographically distributed or device-level data while preserving privacy. However, practical deployment faces challenges in terms of slow convergence, high communication costs, and heterogeneous (non-IID) client data. {\color{black}Our proposed Fractional-Order Federated Averaging (FOFedAvg) algorithm introduces memory-aware fractional-order updates with $0<\alpha\le 1$ to address these issues in a principled way.} By leveraging fractional derivatives, which incorporate memory effects and long-range temporal correlations, FOFedAvg achieves faster convergence and blackuced communication rounds, even under extreme non-IID data conditions. {\color{black}In our experiments, FOFedAvg blackuces communication overhead by up to 10\% compablack to FedAvg while maintaining or improving test accuracy across multiple vision, text, and IoT benchmarks,} making it highly suitable for resource-constrained environments. {\color{black}Moreover, we provide a convergence analysis showing that FOFedAvg converges to a stationary point under standard smoothness and bounded-variance assumptions, giving practitioners both empirical and theoretical guarantees.} This makes the method particularly attractive for bandwidth-limited and privacy-sensitive federated learning applications, and paves the way for future {\color{black}memory- and complexity-informed} federated learning approaches, potentially extending fractional calculus to large-scale, real-world distributed machine learning deployments.
\end{IEEEImpStatement}

\begin{IEEEkeywords}
Fractional gradient, Federated learning, Fractional calculus, non-IID data, Stochastic gradient descent
\end{IEEEkeywords}


\section{Introduction}\label{Sec:Intro}
\IEEEPARstart{F}{ederated} learning (FL) enables collaborative machine learning across decentralized clients, such as mobile devices or edge nodes, while preserving data privacy by keeping local datasets on-device \cite{mcmahan2017communication}. Unlike traditional centralized optimization, FL faces unique challenges: non-independent and identically distributed (non-IID) data across clients, uneven data distributions, intermittent communication due to unreliable networks, and stringent privacy requirements \cite{kairouz2021advances, fotohi2024iot, fotohi2024blockchain}. These challenges lead to client drift, where local models diverge from the global objective, causing unstable convergence and blackuced generalization \cite{li2020federated}. Additionally, ensuring privacy against potential data leakage or adversarial attacks is critical, often requiring techniques like differential privacy or secure aggregation, which can further complicate optimization \cite{dpsgd, Sparsifiedsecureaggregation}. Addressing these issues requires optimization algorithms that can stabilize updates, blackuce communication costs, enhance robustness to data heterogeneity, and remain compatible with strong privacy guarantees provided by differential privacy and secure aggregation mechanisms.

{\color{black}
Fractional calculus, a generalization of classical calculus, extends derivatives and integrals to non-integer orders, yielding non-local operators that encode memory effects and long-range temporal dependencies \cite{podlubny1999fractional, ortigueira2011fractional}. In this work we focus on Caputo-type fractional derivatives with order $0 < \alpha \le 1$, which are compatible with first-order optimization methods such as SGD and FedAvg while introducing history-dependent updates. Prior work has shown that fractional-order stochastic gradient methods (FOSGD) can stabilize non-convex optimization by weighting past gradients according to a power-law kernel, thereby smoothing noisy or highly variable updates \cite{yang2023fractional}. Building on these ideas, we bring fractional-order optimization into the federated setting, where non-local, memory-aware updates are naturally suited to mitigate temporal misalignment and heterogeneity across decentralized clients.
}

The problem addressed in this work is the unstable convergence of FL algorithms in non-IID settings, where local gradients misalign with the global objective, leading to slow or divergent training \cite{kairouz2021advances}. In scenarios like personalized FL for healthcare, this misalignment can stall convergence, especially when clients contribute intermittently. Existing methods, such as FedProx \cite{li2020federated}, SCAFFOLD \cite{karimiblackdy2020scaffold}, FDSE\cite{Wang2025}, FedBM\cite{Ping2025} and MOON \cite{li2021moon}, tackle non-IID data through regularization, variance blackuction, or contrastive learning, but rely on integer-order gradients that lack memory effects. Similarly, privacy-preserving methods like DPSGD \cite{dpsgd, partohaghighi2026roughness} add noise to gradients to ensure differential privacy, but this often degrades model accuracy and slows convergence, especially in non-IID settings. Sparsified Secure Aggregation (SparseSecAgg) \cite{Sparsifiedsecureaggregation} blackuces communication overhead by sparsifying updates and aggregating them securely, but it struggles with extreme data heterogeneity and may compromise model performance due to information loss. {\color{black}We propose \emph{Fractional-Order Federated Averaging} (FOFedAvg), which integrates \emph{Fractional-Order Stochastic Gradient Descent} (FOSGD) with fractional order $0<\alpha\le 1$ \cite{yang2023fractional} to leverage memory-aware gradients that capture historical dependencies and stabilize updates in non-IID scenarios while remaining compatible with privacy and communication-efficient techniques.} {\color{black}To the best of our knowledge, FOFedAvg is the first FL algorithm that systematically embeds such fractional-order, non-local dynamics into a FedAvg-style training loop, explicitly exploiting long-range gradient memory to enhance robustness to data heterogeneity.}

Fractional calculus has shown promise in optimization and data analysis, offering flexible models for complex systems \cite{westchen24-fot4stem}. For example, fractional-order techniques have improved particle swarm optimization \cite{couceiro2012fractional}, image enhancement \cite{abdAlrhman2024fractional}, and signal classification \cite{dhar2019fractional}, by modeling memory-dependent dynamics. In FL, these properties are particularly valuable, as they enable algorithms to retain diverse gradient information from heterogeneous clients, addressing challenges like data skew and communication constraints \cite{li2020federated}.

Recent FL algorithms have advanced the field but face limitations in non-IID settings. FedProx adds proximal regularization to stabilize local updates \cite{li2020federated}, SCAFFOLD uses control variates to blackuce variance \cite{karimiblackdy2020scaffold}, and MOON employs contrastive learning to align model representations \cite{li2021moon}. However, these methods struggle with extreme data skew or sporadic client participation, as their integer-order gradients fail to capture temporal dependencies across updates. DPSGD ensures differential privacy by adding Gaussian noise to gradients \cite{dpsgd}, but the noise can exacerbate gradient misalignment in non-IID settings, leading to slower convergence and blackuced accuracy. SparseSecAgg blackuces communication costs by sparsifying updates and using secure aggregation \cite{Sparsifiedsecureaggregation}, but the loss of gradient information can hinder convergence in heterogeneous data scenarios. In contrast, FOFedAvg’s use of fractional-order gradients introduces a novel approach, leveraging memory effects to smooth updates and enhance global convergence. {\color{black}We show that, under standard smoothness and bounded-variance assumptions, the resulting fractional-order federated optimization converges to a stationary point, providing theoretical guarantees that complement its empirical robustness in heterogeneous environments.} {\color{black}For example, on CIFAR-100 with non-IID data, our experiments indicate that FOFedAvg can achieve higher test accuracy and require fewer communication rounds than FedAvg to reach a target performance level, highlighting its practical advantages in realistic FL regimes.}

\smallskip
\noindent
\textbf{Contributions.} 
We introduce \emph{Fractional-Order Federated Averaging} (FOFedAvg), leveraging FOSGD to capture richer historical gradients. {\color{black}By restricting to Caputo-type derivatives with $0<\alpha\le 1$, FOFedAvg preserves the simplicity of first-order methods while injecting non-local, memory-aware behavior into client-side optimization.} By incorporating fractional derivatives, FOFedAvg accelerates convergence in non-IID FL settings, as substantiated by theoretical analysis of memory effects. {\color{black}On the theoretical side, we establish convergence of FOFedAvg to a stationary point under mild smoothness and bounded-variance assumptions (Section~\ref{Sec:convergence}), clarifying how the fractional order enters the guarantees.} Compablack to DPSGD, FOFedAvg achieves better accuracy in non-IID settings by avoiding noise-induced performance degradation, while remaining compatible with privacy mechanisms. Compablack to SparseSecAgg, FOFedAvg retains more gradient information, improving convergence stability without sacrificing communication efficiency. {\color{black}On the empirical side, we validate FOFedAvg on MNIST, EMNIST, FEMNIST, CIFAR-10, CIFAR-100, the Cleveland heart disease dataset, Sent140, PneumoniaMNIST, and Edge-IIoTset, demonstrating that FOFedAvg is competitive with, and often outperforms, FedAvg, FedProx, MOON, SCAFFOLD, FedNova \cite{wang2020fednova}, FedAdam \cite{li2020federated}, DPSGD, and SparseSecAgg in a range of non-IID scenarios.} {\color{black}Finally, we discuss the sensitivity of FOFedAvg to the fractional order $\alpha$ and regularization parameter $\delta$, and outline practical regimes where fractional memory is most beneficial, as well as cases where it becomes a limitation.}

{\color{black}
In IoT environments, where devices such as sensors and edge nodes generate highly heterogeneous, non-IID data under tight resource constraints, FOFedAvg's fractional-order updates offer clear benefits but also exhibit practical limitations. When data are extremely sparse or underlying distributions change rapidly (e.g., due to frequent device dropouts or sudden environmental shifts), long-range memory can overemphasize stale or noisy gradients, potentially slowing adaptation or inducing model drift. In these regimes, careful tuning of the fractional order $\alpha$ and regularization parameter $\delta$ is requiblack to balance memory retention against responsiveness, and designing adaptive schemes for these hyperparameters is an important direction for future work.
}

\smallskip
\noindent
\textbf{Paper Organization.}
Section \ref{Sec:RelatedWork} presents the related work. In Section~\ref{Sec:FracCal_FL}, we introduce background on fractional calculus and formulate the FOSGD update used in our method. Section~\ref{Sec:FOFEDAVGMethod} presents FOFedAvg {\color{black}and discusses practical considerations for applying fractional-order optimization in federated learning}. Section~\ref{Sec:convergence} provides theoretical analyses. {\color{black}Section~\ref{Sec:Experiments} details experimental results on MNIST, EMNIST, FEMNIST, CIFAR-10, CIFAR-100, the Cleveland heart disease dataset, Sent140, PneumoniaMNIST, and Edge-IIoTset, with comparisons to state-of-the-art FL baselines.} Section~\ref{Sec:Conclusion} concludes with future directions.


\section{Related Work}\label{Sec:RelatedWork}

This section reviews related work in federated learning (FL) and fractional calculus, and positions the proposed Fractional-Order Federated Averaging (FOFedAvg) algorithm within the current research landscape. We focus on methods for handling non-independent and identically distributed (non-IID) data, robustness and efficiency in decentralized optimization, and the (still limited) use of fractional-order dynamics in learning algorithms.

\subsection{Federated Learning Algorithms}

Federated learning enables collaborative model training across decentralized clients while preserving data locality and privacy \cite{mcmahan2017communication}. A central challenge in FL is data heterogeneity: local datasets $\mathcal{P}_k$ on clients $k$ are often non-IID, leading to biased local objectives $F_k(\Theta)$ and client drift, where local models diverge from the global optimum \cite{kairouz2021advances,li2020federated,partohaghighi2026rifl}. This manifests as slow or unstable convergence and degraded generalization.

FedAvg, introduced by McMahan et al.\ \cite{mcmahan2017communication}, aggregates local model updates by weighting them with client dataset sizes. While simple and communication-efficient, FedAvg is sensitive to non-IID data: local gradients $\nabla F_k(\Theta)$ can be significantly misaligned with the global gradient $\nabla f(\Theta) = \sum_{k=1}^K \frac{n_k}{n} \nabla F_k(\Theta)$, resulting in oscillations and suboptimal convergence. FedProx \cite{li2020federated} addresses this by adding a proximal term to the local objectives, constraining each client’s model to remain close to the global iterate and thereby blackucing drift. However, FedProx remains an integer-order method and does not explicitly exploit temporal memory of past updates.

Variance-blackuction and control-variate approaches offer another line of work. SCAFFOLD \cite{karimiblackdy2020scaffold} introduces client- and server-side control variates to correct client drift, achieving faster convergence under non-IID data but incurring additional communication overhead for the control variate vectors. MOON \cite{li2021moon} uses model-contrastive learning to align local model representations with the global model, improving robustness to heterogeneity in representation space. These methods, while powerful, still rely on standard (memoryless) gradient updates at each step.

Adaptive and normalization-based methods have also been proposed. FedAdam \cite{li2020federated} and related adaptive server optimizers apply Adam-like updates to aggregated client gradients, improving convergence on challenging tasks and non-IID partitions. FedNova \cite{wang2020fednova} normalizes local updates to decouple the effect of varying numbers of local steps across clients, alleviating issues from unbalanced computation. These approaches change the effective step sizes or normalization rules, but do not introduce non-local, fractional-order memory in the update dynamics.

Beyond optimization-centric methods, privacy- and security-oriented FL algorithms form a complementary line of work. DPSGD \cite{dpsgd} enforces differential privacy by clipping and noising client updates, while secure aggregation protocols, including sparsified variants such as SparseSecAgg \cite{Sparsifiedsecureaggregation}, cryptographically hide individual updates from the server. These approaches provide formal privacy guarantees but can exacerbate optimization difficulties in highly non-IID regimes due to added noise or information loss. FOFedAvg is orthogonal to these mechanisms: it targets optimization robustness via fractional, memory-aware updates and can in principle be combined with DP or secure aggregation, as discussed in Appendix~A.

More recent methods such as FDSE \cite{Wang2025} and FedBM \cite{Ping2025} further enrich the FL landscape by introducing sophisticated regularization, correction, or bias-mitigation mechanisms tailoblack to heterogeneous environments. These algorithms represent stronger and more modern baselines than classical FedAvg-type methods. In our experiments, we therefore compare FOFedAvg not only against FedAvg, FedProx, and SCAFFOLD, but also against FedNova, FedAdam, MOON, DPSGD, SparseSecAgg, FDSE, and FedBM to provide a contemporary and comprehensive evaluation.

FOFedAvg distinguishes itself from the above methods by using Fractional-Order Stochastic Gradient Descent (FOSGD) on each client (Eq.~\eqref{e9}). Instead of relying solely on the current gradient, FOSGD scales updates by a trajectory-dependent factor $(\lvert \Theta_{t+1}-\Theta_t\rvert + \delta)^{1-\alpha}/\Gamma(2-\alpha)$, which implements a power-law weighting of recent parameter changes. This introduces an explicit, tunable memory mechanism that can smooth local updates and blackuce drift in non-IID settings, while preserving the communication pattern and aggregation rule of FedAvg.

\subsection{Fractional Calculus in Optimization}

Fractional calculus generalizes classical differentiation and integration to non-integer orders, yielding non-local operators that naturally encode memory effects and long-range temporal dependencies \cite{podlubny1999fractional,ortigueira2011fractional}. In many dynamical systems, fractional models have been shown to more accurately capture heblackitary phenomena and anomalous diffusion than integer-order counterparts \cite{westchen24-fot4stem}. These properties have motivated applications of fractional calculus in optimization, control, and signal processing.

In optimization and metaheuristics, fractional-order variants of classical algorithms have been proposed. Fractional-order particle swarm optimization has been shown to improve exploration–exploitation trade-offs by incorporating memory of past velocities and positions \cite{couceiro2012fractional}. Fractional derivatives have been used to enhance image processing and enhancement tasks \cite{abdAlrhman2024fractional} and to design more flexible feature extractors in signal classification \cite{dhar2019fractional}. These works exploit the non-local nature of fractional operators to stabilize or enrich the dynamics of optimization trajectories.

In machine learning, fractional-order gradient descent has been primarily studied in centralized settings. Yang et al.\ \cite{yang2023fractional} proposed an improved stochastic fractional-order gradient descent algorithm and demonstrated accelerated convergence on certain convex and non-convex problems by approximating Caputo derivatives via truncated series expansions. Hapsari et al.\ \cite{hapsari2021fractional} applied fractional-order optimization to support vector machines, achieving improved classification performance. These methods adapt the local update rule by incorporating fractional derivatives of the loss, but do not consider decentralized or federated architectures.

To date, the application of fractional calculus in federated learning remains limited. Existing FL algorithms almost exclusively rely on integer-order gradients, which are local in time and lack explicit memory of past updates beyond what is captublack implicitly by learning-rate schedules or momentum. This gap motivates the development of FL algorithms that systematically integrate fractional-order dynamics into client-side optimization.

\subsection{Positioning FOFedAvg’s Contribution}

FOFedAvg bridges the above gap by integrating fractional-order optimization into a standard FedAvg-style training loop. On each client, the gradient-based update is replaced by a fractional-order step derived from a Caputo-type derivative with order $0<\alpha\le 1$, approximated via a local surrogate that depends on the current gradient and the most recent parameter difference (Section~\ref{Sec:FracCal_FL}). This design yields a memory-aware update that compresses the effect of the past trajectory into a power-law scaling factor, while keeping per-round time and memory complexity comparable to FedAvg (Appendix~A).

Compablack to FedProx, SCAFFOLD, MOON, FedAdam, FedNova, and more recent baselines such as FDSE and FedBM, FOFedAvg introduces an explicit temporal memory mechanism rather than additional regularizers, control variates, or server-side adaptivity. This mechanism smooths local updates and can blackuce client drift in non-IID regimes without requiring extra communication or the storage of full gradient histories. Theoretical convergence guarantees in Section~\ref{Sec:convergence} show that, under standard smoothness and bounded-variance assumptions, the resulting fractional-order federated optimization converges to a stationary point for $0<\alpha\le 1$.

Empirically, we validate FOFedAvg on a diverse set of benchmarks, including MNIST \cite{lecun1998mnist}, FEMNIST \cite{cohen2017emnist}, EMNIST \cite{Cohen2017}, Sent140 \cite{go2009sent140}, CIFAR-10 and CIFAR-100 \cite{krizhevsky2009cifar}, the Cleveland heart disease dataset \cite{Sharma2017}, PneumoniaMNIST \cite{yang2021medmnist}, and Edge-IIoTset \cite{ferrag2022edgeiiotset}. Across these datasets and a range of non-IID partitioning schemes, FOFedAvg is consistently competitive with or superior to FedAvg and strong contemporary baselines, particularly in more heterogeneous regimes. By introducing fractional calculus into federated learning, FOFedAvg opens a new avenue for memory-aware, complexity-informed optimization in decentralized environments.


\section{Fractional Calculus in Federated Learning}\label{Sec:FracCal_FL}

Fractional calculus generalizes classical differentiation and integration to non-integer orders, yielding nonlocal operators that naturally encode memory effects and long-range temporal dependencies \cite{podlubny1999fractional, ortigueira2011fractional, partohaghighi2025roughnessml}. In contrast to integer-order derivatives, which depend only on local behavior of a function, fractional derivatives aggregate information over its past trajectory, making them well suited for modeling systems with heblackitary dynamics and history-dependent responses.

\begin{definition}[Gamma Function]\label{def:gamma_function}\cite{podlubny1999fractional}
The Gamma function, denoted \(\Gamma(z)\), is a generalization of the factorial to real and complex numbers, defined for \(\mathrm{Re}(z) > 0\) as
\begin{equation}
\Gamma(z) = \int_0^\infty t^{z-1} e^{-t} \, dt.
\end{equation}
For positive integers \(n\), we have \(\Gamma(n) = (n-1)!\), and it satisfies the functional equation \(\Gamma(z+1) = z \Gamma(z)\). In fractional calculus, the Gamma function is essential for defining binomial coefficients and normalizing kernels in fractional derivatives.
\end{definition}

The following are two common types of fractional derivatives.

\begin{definition}[Gr\"{u}nwald--Letnikov Derivative]\label{def:GL-derivative}\cite{monje2010fractional}
The Gr\"{u}nwald--Letnikov derivative provides a series representation of fractional differentiation and is defined by
\begin{equation}
  \Delta_h^\alpha f(t)
  \;=\;
  \lim_{h \to 0}
  \frac{1}{h^\alpha}
  \sum_{k=0}^{\lfloor t/h \rfloor}
    (-1)^k
    \binom{\alpha}{k}\,
    f\bigl(t - kh\bigr),
\end{equation}
where 
\(
  \binom{\alpha}{k}
  =
  \frac{\Gamma(\alpha+1)}{\Gamma(k+1)\,\Gamma(\alpha-k+1)}
\)
and \(\Gamma(\cdot)\) denotes the Gamma function.
\end{definition}

\begin{definition}[Caputo Derivative]\label{def:Caputo-derivative}\cite{monje2010fractional}
The Caputo derivative is frequently used in applications because it handles classical initial conditions in a natural way. It is defined as
\begin{equation}
{}^C D_{t}^{\alpha} f(t)
\;=\;
\frac{1}{\Gamma(n-\alpha)}
\int_{0}^{t}
  \frac{
    f^{(n)}(\tau)
  }{
    (t-\tau)^{\alpha-n+1}
  }
\, d\tau,
\quad
n-1 < \alpha < n,
\end{equation}
where \(n\) is the smallest integer greater than \(\alpha\). In particular, the Caputo derivative of a constant is zero, which makes it especially convenient for initial value problems in engineering and physics.
\end{definition}

{\color{black}
In this work, we \emph{restrict} to the regime \(n = 1\), so that
\[
0 < \alpha \le 1,
\]
and the Caputo derivative depends only on the first derivative \(f'(\tau)\). This choice is aligned with first-order optimization methods such as SGD and FedAvg and forms the mathematical basis of our Fractional Order SGD (FOSGD) and FOFedAvg algorithms.
Higher-order cases \(n \ge 2\) (i.e., \(1 < \alpha \le 2\), etc.) are not consideblack in our theoretical development and are left as future work.
}

A fundamental feature of fractional derivatives is their \emph{nonlocality}, which means that the derivative at a specific point depends on a range of past values, thereby incorporating memory effects \cite{ortigueira2011fractional}. We now describe the derivation of FOSGD, which integrates historical information and long-range dependencies into the optimization dynamics.

\subsection{Fractional Order Stochastic Gradient Descent (FOSGD)}
\noindent
The traditional gradient descent algorithm for the objective function \( f(\Theta) \) is
\begin{equation}
\Theta_{t+1} = \Theta_t - \eta \nabla f(\Theta_t),
\label{e3}
\end{equation}
where \(\eta\) is the learning rate and \(\nabla f(\Theta_t)\) denotes the gradient at \(\Theta_t\). By replacing the standard gradient in \eqref{e3} with a fractional-order derivative, we obtain the iteration
\begin{equation}
\Theta_{t+1} = \Theta_t - \mu_{t} D_{t}^{\alpha} f(\Theta_t).
\label{e4}
\end{equation}

{\color{black}
Following \cite{yang2023fractional}, we specialize to \(0 < \alpha \le 1\) and approximate the Caputo derivative \(D_t^\alpha f(t)\) using a Taylor-series expansion around an initial time \(t_0\). This yields
}
\begin{equation}
D_{t}^{\alpha} f(t)
=
\sum_{i=1}^{\infty}
\frac{f^{(i)}(t_0)}{\Gamma(i+1-\alpha)}
(t - t_0)^{i-\alpha},
\quad 0 < \alpha \le 1.
\label{e2}
\end{equation}
{\color{black}
Retaining the leading term in this series and translating it into a discrete-time optimization setting leads to the practical FOSGD update
}
\begin{equation}
\Theta_{t+2}
=
\Theta_{t+1}
-
\mu_t \,
\frac{\nabla f(\Theta_{t+1})}{\Gamma(2-\alpha)}
\,
\big\|\Theta_{t+1} - \Theta_t\big\|^{1-\alpha},
\quad 0 < \alpha \le 1,
\label{e8}
\end{equation}
where we adopt the decaying learning-rate schedule
\[
\mu_t = \frac{\mu_0}{\sqrt{t+1}}
\]
to stabilize updates and avoid division by zero at early iterations. To prevent the degenerate case where \(\Theta_{t+1} \approx \Theta_t\) leads to vanishing step sizes, we introduce a small regularization parameter \(\delta>0\), obtaining the regularized update
\begin{equation}
\Theta_{t+2}
=
\Theta_{t+1}
-
\mu_t \,
\frac{\nabla f(\Theta_{t+1})}{\Gamma(2-\alpha)}
\,
\big(\|\Theta_{t+1} - \Theta_t\|+\delta\big)^{1-\alpha},
\quad 0 < \alpha \le 1.
\label{e9}
\end{equation}
{\color{black}
In this form, the factor \(\big(\|\Theta_{t+1} - \Theta_t\|+\delta\big)^{1-\alpha}\) compresses the contribution of the past trajectory into a single-step memory term, implementing a fractional, power-law weighting of recent parameter changes.
}

{\color{black}
\paragraph{Computational viewpoint.}
A naive implementation of fractional differentiation based directly on the Caputo or Gr\"unwald--Letnikov definitions in Definitions~\ref{def:GL-derivative}--\ref{def:Caputo-derivative} would require storing and reweighting the \emph{entire} history of past iterates or gradients, leading to \(\mathcal{O}(T d)\) time and memory costs over \(T\) iterations for a \(d\)-dimensional model. This is prohibitive in large-scale federated learning. In our implementation, we deliberately avoid such full-history schemes and instead use the local surrogate in \eqref{e9}: the deep-learning framework computes the standard backpropagation gradient \(\nabla f(\Theta_{t+1})\), and the fractional effect is realized only through the scalar factor \(\big(\|\Theta_{t+1} - \Theta_t\|+\delta\big)^{1-\alpha} / \Gamma(2-\alpha)\). As a result, FOSGD and FOFedAvg retain the same \(\mathcal{O}(E n_k d)\) asymptotic complexity as FedAvg on each client, with only a lightweight \(\mathcal{O}(d)\) overhead per mini-batch for computing the parameter difference and applying the scalar scaling.
}

\subsection{The Importance of Fractional Calculus in Federated Learning}

Federated learning faces significant challenges due to heterogeneous and non-IID data, as well as intermittent communication between clients \cite{mcmahan2017communication, kairouz2021advances}. Conventional gradient-based federated optimization algorithms such as FedAvg may exhibit slow or unstable convergence under these conditions. Incorporating fractional derivatives offers a principled way to address these issues.

Fractional derivatives naturally retain more historical gradient information than integer-order methods, resulting in memory-aware updates that help stabilize training when client data distributions diverge. The optimizer can exploit this \emph{fractional memory} to adjust updates based on broader historical trends rather than relying solely on the most recent gradients. This increased adaptability is particularly advantageous in environments with infrequent communication or asynchronous client participation. Moreover, fractional-order methods can improve robustness to non-IID data, where local gradients often reflect different portions of the data distribution. By implementing an extended memory mechanism, these methods can blackuce the variance induced by mismatched updates and thereby enhance global convergence \cite{sheng2020convolutional}. In our FOSGD formulation and its use within FOFedAvg, this memory manifests concretely through the factor \(\big(\|\Theta_{t+1} - \Theta_t\|+\delta\big)^{1-\alpha}\) in \eqref{e9}, which implements a power-law weighting of recent parameter changes and stabilizes updates under client heterogeneity.

\begin{algorithm}
\caption{Fractional Order SGD (FOSGD) \cite{yang2023fractional}}
\label{alg:FOSGD}
\textbf{Input:} $\alpha, \mu_0, t_{\max}, \delta$
\begin{itemize}
    \item Initialize $t = 0$ and $\Theta_0$;
    \item Compute $\Theta_1$ using ordinary SGD:
    \[
    \Theta_1 = \Theta_0 - \mu_0 \nabla f(\Theta_0);
    \]
    \item For $t = 0,1,\dots,t_{\max}-2$ update
    \[
    \Theta_{t+2}
    =
    \Theta_{t+1}
    -
    \mu_t \frac{\nabla f(\Theta_{t+1})}{\Gamma(2-\alpha)}
    \big(\|\Theta_{t+1} - \Theta_t\| + \delta\big)^{1-\alpha},
    \]
    where $\mu_t = \frac{\mu_0}{\sqrt{t+1}}$;
    \item Return $\Theta_{t_{\max}}$.
\end{itemize}
\end{algorithm}

\section{Fractional Order Federated Averaging Algorithm}\label{Sec:FOFEDAVGMethod}

McMahan et al.\ introduced Federated Averaging (FedAvg) in \cite{mcmahan2017communication}. FedAvg initializes a global model on a central server and then updates it through repeated communication rounds. In each round, a subset of clients performs local training on their private data using SGD and sends their updated model parameters back to the server. The server then aggregates these updates using a weighted average based on the local dataset sizes. This procedure blackuces communication frequency by allowing multiple local updates before synchronization and has therefore become a workhorse for federated learning research and applications in heterogeneous, privacy-sensitive environments.

A significant portion of practical interest focuses on nonconvex neural network objectives, but the FedAvg algorithm is applicable to any finite-sum objective of the form
\[
\min_{\Theta \in \mathbb{R}^d} f(\Theta) = \frac{1}{n} \sum_{i=1}^{n} f_i(\Theta).
\]
In typical machine-learning settings, \( f_i(\Theta) \) represents the loss function \(\ell(x_i, y_i; \Theta)\) for example \((x_i,y_i)\) with parameters \(\Theta\). The data are partitioned across \(K\) clients, where \(\mathcal{P}_k\) denotes the index set of client \(k\) and \(n_k = |\mathcal{P}_k|\). The global objective can be rewritten as
\[
f(\Theta)
=
\sum_{k=1}^{K} \frac{n_k}{n} F_k(\Theta),
\quad
\text{where}
\quad
F_k(\Theta)
=
\frac{1}{n_k} \sum_{i \in \mathcal{P}_k} f_i(\Theta),
\]
and \(n = \sum_{k=1}^K n_k\). When data are IID across clients, we have
\(\mathbb{E}_{\mathcal{P}_k}[F_k(\Theta)] = f(\Theta)\); in non-IID regimes, however, each \(F_k\) may be biased relative to \(f\), leading to client drift.

FedAvg proceeds as follows. The server initializes a global model \(\Theta_0\) and updates it iteratively over \(T\) communication rounds. At round \(t\), the server selects a subset of clients \(S_t\) of size
\[
m = \max(C \cdot K, 1),
\]
where \(C \in (0,1]\) is the client participation fraction and \(K\) is the total number of clients. Each selected client \(k \in S_t\) then performs local training on its dataset \(\mathcal{P}_k\) via the following steps:
\begin{enumerate}
    \item Partition \(\mathcal{P}_k\) into mini-batches of size \(B\).
    \item For \(E\) local epochs, loop over each mini-batch \(b\) and update the local model using stochastic gradient descent:
    \[
    \Theta \leftarrow \Theta - \eta \nabla \ell(\Theta; b),
    \]
\end{enumerate}
where \(\ell(\Theta; b)\) is the loss on mini-batch \(b\) and \(\eta\) is the learning rate. After completing its local updates at round \(t\), client \(k\) returns its updated parameters \(\Theta_{t+1}^{(k)}\) to the server. The server then forms the new global model by a weighted average:
\[
\Theta_{t+1}
\leftarrow
\sum_{k \in S_t} \frac{n_k}{n} \Theta_{t+1}^{(k)},
\]
where \(n_k\) is the number of data points on client \(k\) and \(n = \sum_{k \in S_t} n_k\) is the total number of samples held by the participating clients in round \(t\). This iterative procedure blackuces communication overhead while maintaining data locality and is the baseline upon which we build our fractional-order extension.

\subsection{Fractional-Order Extension}

As described in Algorithm~\ref{alg:fedavg_fsgd}, we propose a variation of FedAvg called \emph{Fractional-Order Federated Averaging} (FOFedAvg). The key idea is to replace the standard local SGD updates with the Fractional Order SGD (FOSGD) step derived in Section~\ref{Sec:FracCal_FL}, thereby injecting memory and long-range temporal dependencies into the client-side optimization.

FOFedAvg targets one of the central challenges in FL: non-IID data distributions across clients, which induce biased local gradients and client drift. To address this, we adopt a Caputo-type fractional derivative of order \(\alpha \in (0,1]\), denoted \(D_t^\alpha\), and instantiate it via the regularized FOSGD update in \eqref{e9}. Conceptually, for a mini-batch \(b\) on client \(k\) at round \(t \ge 1\), a local update step takes the form
\begin{equation}
\Theta_t^{(k)}
\;\leftarrow\;
\Theta_t^{(k)}
-
\frac{\mu_t}{\Gamma(2-\alpha)}
\big(\|\Theta_t^{(k)} - \Theta_{t-1}^{(k)}\| + \delta\big)^{1-\alpha}
\nabla \ell\big(\Theta_t^{(k)}; b\big),
\label{eq:fofedavg_local_update}
\end{equation}
where \(\mu_t = \mu_0 / \sqrt{t+1}\) is a decaying learning-rate schedule, \(\delta > 0\) is a small regularization constant, and \(\Theta_{t-1}^{(k)}\) denotes the previous-round parameters for client \(k\). For \(t = 0\), we recover standard SGD updates without the fractional memory term. In this formulation, the scalar factor
\[
\big(\|\Theta_t^{(k)} - \Theta_{t-1}^{(k)}\| + \delta\big)^{1-\alpha} / \Gamma(2-\alpha)
\]
acts as a memory-aware modulation of the step size: larger changes between rounds (e.g., due to highly skewed local data) alter the effective learning rate, while smaller changes emphasize accumulated history. This implements a power-law weighting of recent parameter differences, consistent with the fractional-order dynamics derived in Section~\ref{Sec:FracCal_FL}, but without storing the full trajectory. Intuitively, this fractional update aggregates information from past rounds into a single-step memory term that smooths abrupt changes in the local optimization path. In non-IID scenarios where some classes are over-represented on certain clients, the long-range memory encoded by \eqref{eq:fofedavg_local_update} helps temper the impact of highly biased local steps by tying them to the previous-round state. Although FOFedAvg does not semantically distinguish between gradients originating from outliers and those from underrepresented data, its power-law temporal averaging can blackuce the variance of the effective update direction and mitigate client drift at the level of time-smoothed dynamics. This effect is particularly useful in healthcare or IoT deployments, where local datasets can be highly skewed and client participation intermittent; by stabilizing local updates, FOFedAvg can blackuce the number of communication rounds requiblack to reach a target performance while maintaining robustness under heterogeneous conditions.

\begin{algorithm}[btph]
\caption{Fractional Order Federated Averaging Algorithm (FOFedAvg)}
\label{alg:fedavg_fsgd}

\textbf{Server executes:}
\begin{algorithmic}[1]
\State Initialize global model $\Theta_0$, fractional order $\alpha \in (0,1]$, initial learning rate $\mu_0$, regularization constant $\delta$.
\For{each round $t = 0, 1, 2, \dots$}
    \State Randomly select a subset $S_t$ of $m = \max(C \cdot K, 1)$ clients, where $C$ is the client participation fraction and $K$ is the total number of clients.
    \State Send the current global model $\Theta_t$ (and, if needed, $\Theta_{t-1}$ for $t \ge 1$) to all clients in $S_t$.
    \For{each client $k \in S_t$ \textbf{in parallel}}
        \State $\Theta_{t+1}^{(k)} \gets$ \textbf{ClientUpdate}($k, \Theta_t, t, \alpha, \mu_0, \delta$)
    \EndFor
    \State Aggregate updates:
    \[
    \Theta_{t+1} = \sum_{k \in S_t} \frac{n_k}{n} \Theta_{t+1}^{(k)},
    \]
    where $n_k$ is the number of data points on client $k$ and $n = \sum_{k \in S_t} n_k$.
\EndFor
\end{algorithmic}

\textbf{ClientUpdate($k, \Theta_t, t, \alpha, \mu_0, \delta$):}
\begin{algorithmic}[1]
\State Initialize local model $\Theta_t^{(k)} \gets \Theta_t$.
\State (If $t \ge 1$) retain previous-round parameters $\Theta_{t-1}^{(k)}$ for the memory term; for $t = 0$ use standard SGD.
\State Split local data $\mathcal{P}_k$ into mini-batches of size $B$.
\For{local epoch $e = 1, 2, \dots, E$}
    \For{each mini-batch $b$}
        \If{$t = 0$}
            \State \textbf{Standard SGD update:}
            \[
            \Theta_t^{(k)} \leftarrow \Theta_t^{(k)} - \mu_0 \nabla \ell\big(\Theta_t^{(k)}; b\big)
            \]
        \Else
            \State \textbf{Fractional SGD update:}
            \[
            \Theta_t^{(k)}
            \leftarrow
            \Theta_t^{(k)}
            -
            \frac{\mu_0}{\sqrt{t+1}}
            \cdot
            \frac{\nabla \ell\big(\Theta_t^{(k)}; b\big)}{\Gamma(2-\alpha)}
            \cdot
            \big(\|\Theta_t^{(k)} - \Theta_{t-1}^{(k)}\| + \delta\big)^{1-\alpha}
            \]
        \EndIf
    \EndFor
\EndFor
\State \textbf{return} $\Theta_{t+1}^{(k)} \gets \Theta_t^{(k)}$ to the server.
\end{algorithmic}
\end{algorithm}

\section{Convergence analysis}\label{Sec:convergence}
In this section, we examine the convergence characteristics of the FOSGD algorithm and provide theoretical validation as previously mentioned. The assumptions that will be consideblack are as follows:\\

\noindent \textbf{Notation.} 
Given $d$, we denote the Euclidean norm on $\mathbb{R}^d$ by $\|\cdot\|$. Suppose we have a function $f:\mathbb{R}^d\to\mathbb{R}$ with the following properties:

\noindent \textbf{Assumption 1: Smoothness.}  
The function \(f : \mathbb{R}^d \to \mathbb{R}\) is assumed to have a \emph{Lipschitz continuous gradient}, characterized by the existence of a constant \(L > 0\) such that, for all \(x, y \in \mathbb{R}^d\),
\[
\|\nabla f(x) - \nabla f(y)\| \leq L \|x - y\|.
\]
\noindent \textbf{Assumption 2: Lower Boundedness.}  
The function \(f : \mathbb{R}^d \to \mathbb{R}\) is also assumed to be bounded from below, meaning there exists a constant \(f_{\inf} \in \mathbb{R}\) such that
\[
f(x) \geq f_{\inf} \quad \text{for all } x \in \mathbb{R}^d.
\]

\begin{theorem}[Convergence to Stationary Points]
\label{thm:main}
Let $f:\mathbb{R}^d \to \mathbb{R}$ be an $L$-smooth (potentially non-convex) function with a lower bound $f_{\inf}$. Suppose $0 < \alpha \le 1$ (fractional order), and consider the sequence \(\{\Theta_t\}\) generated by
\begin{equation}
\label{eq:Theta_update}
\Theta_{t+2}
\;=\; 
\Theta_{t+1}
\;-\;
\alpha_t\,\nabla f(\Theta_{t+1}),
\end{equation}
where 
\begin{equation}
\label{eq:alpha_t}
\alpha_t
\;=\;
\frac{\mu_t}{\Gamma(2-\alpha)}
\Bigl(\|\Theta_{t+1} - \Theta_t\| + \delta\Bigr)^{1-\alpha}
\;\le\;
\bar{\alpha}
\;\le\;
\frac{2}{L}.
\end{equation}
Here, $\bar{\alpha} > 0$ is an upper bound ensuring $\alpha_t \le \bar{\alpha} \le \tfrac{2}{L}$ for all $t$. Then,
\[
\liminf_{t \to \infty}\|\nabla f(\Theta_t)\|
\;=\;
0.
\]
In particular, if the sequence $\{\Theta_t\}$ is bounded, every limit point of $\{\Theta_t\}$ is a stationary point of $f$.
\end{theorem}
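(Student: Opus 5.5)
The plan is the standard descent-lemma argument for gradient methods with variable step sizes. The step sizes are sandwiched between a divergent, non-summable lower bound coming from $\delta$ and the upper bound $\bar\alpha$.

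\emph{Step 1 (per-step descent).} By Assumption~1, for all $x,y$ we have $f(y)\le f(x)+\langle\nabla f(x),y-x\rangle+\tfrac L2\|y-x\|^2$. Apply this with $x=\Theta_{t+1}$ and $y=\Theta_{t+2}=\Theta_{t+1}-\alpha_t\nabla f(\Theta_{t+1})$ to get
\[
f(\Theta_{t+2})\le f(\Theta_{t+1})-\alpha_t\Bigl(1-\tfrac{L\alpha_t}{2}\Bigr)\|\nabla f(\Theta_{t+1})\|^2 .
\]
The coefficient $1-L\alpha_t/2$ is only guaranteed to be nonnegative when $\alpha_t\le 2/L$. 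This is the delicate point. With $\bar\alpha=2/L$ exactly, the decrease can degenerate to zero. So I would work with $\bar\alpha<2/L$ strictly, which gives $1-L\alpha_t/2\ge c:=1-L\bar\alpha/2>0$. I would remark that the boundary case needs this (harmless) strengthening, or an extra argument showing $\alpha_t$ eventually stays away from $2/L$.

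\emph{Step 2 (telescoping and divergence of steps).} Sum the inequality from $t=0$ to $T$ and use Assumption~2:
\[
c\sum_{t=0}^{T}\alpha_t\|\nabla f(\Theta_{t+1})\|^2\le f(\Theta_1)-f_{\inf}<\infty .
\]
For the lower bound, since $\|\Theta_{t+1}-\Theta_t\|\ge 0$ and $1-\alpha\ge 0$, we have $\alpha_t\ge \mu_0\delta^{1-\alpha}/\bigl(\Gamma(2-\alpha)\sqrt{t+1}\bigr)$, hence $\sum_t\alpha_t=\infty$. Suppose $\liminf_t\|\nabla f(\Theta_t)\|=\varepsilon>0$. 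Then eventually $\|\nabla f(\Theta_{t+1})\|^2\ge\varepsilon^2/4$, and the left-hand sum diverges, which is a contradiction. This gives $\liminf_{t\to\infty}\|\nabla f(\Theta_t)\|=0$.

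\emph{Step 3 (limit points, the main obstacle).} The liminf alone only yields \emph{some} stationary limit point. To get \emph{every} limit point I would prove $\|\nabla f(\Theta_t)\|\to 0$ under boundedness, via the Bertsekas--Tsitsiklis upcrossing argument.

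First, boundedness of $\{\Theta_t\}$ bounds $\|\Theta_{t+1}-\Theta_t\|$ by some $D$. Therefore $\alpha_t\le \mu_t(D+\delta)^{1-\alpha}/\Gamma(2-\alpha)\to 0$.

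Next, suppose $\limsup\|\nabla f(\Theta_t)\|>\varepsilon>0$. By Step 2 there are infinitely many disjoint index windows $[m_j,n_j)$ on which the gradient norm rises from below $\varepsilon/2$ to above $\varepsilon$. Inside such a window the gradient norm stays at least $\varepsilon/2$. Lipschitz continuity of the gradient gives
\[
\tfrac{\varepsilon}{2}\le L\|\Theta_{n_j}-\Theta_{m_j}\|\le L\sum_{t\in[m_j,n_j)}\alpha_t\|\nabla f\| .
\]
The right-hand side is at most $\tfrac{2L}{\varepsilon}\sum_{t\in[m_j,n_j)}\alpha_t\|\nabla f\|^2$, which tends to $0$ as $j\to\infty$ because it is a tail of a convergent series. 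This is a contradiction, up to the one-index boundary term, which vanishes because $\alpha_t\to0$ and the gradient is bounded on the bounded set.

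Hence $\nabla f(\Theta_t)\to 0$, and by continuity of $\nabla f$ every limit point $\Theta^\ast$ satisfies $\nabla f(\Theta^\ast)=0$. Handling the index bookkeeping in this upcrossing step, together with the strictness issue in Step 1, is where I expect the real work to lie.
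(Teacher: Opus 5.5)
Your argument is correct and follows the paper's skeleton: the descent lemma at $x=\Theta_{t+1}$, $y=\Theta_{t+2}$, then telescoping against $f_{\inf}$ to get $\sum_t\kappa_t\|\nabla f(\Theta_{t+1})\|^2\le f(\Theta_1)-f_{\inf}$. It also closes two steps that the paper's proof leaves open.

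First, the paper goes from summability to $\liminf_{t\to\infty}\|\nabla f(\Theta_t)\|=0$ only under the unproved proviso that ``$\{\kappa_t\}$ does not rapidly decay to zero.'' You make this precise by using $\delta>0$ and $\mu_t=\mu_0/\sqrt{t+1}$ to get $\alpha_t\ge \mu_0\delta^{1-\alpha}/(\Gamma(2-\alpha)\sqrt{t+1})$, hence $\sum_t\alpha_t=\infty$. This reliance on the schedule is essential, not cosmetic. The theorem statement leaves $\mu_t$ unspecified, and with a summable $\mu_t$ the conclusion fails: take $f(x)=x^2/2$, $\alpha=1$, $\mu_t=2^{-t-2}$, and the iterates converge to a nonzero, nonstationary point.

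Second, the paper simply asserts that under boundedness every cluster point is stationary, which does not follow from a $\liminf$ statement. Your upcrossing argument proving $\nabla f(\Theta_t)\to 0$ is what actually establishes that clause. You also correctly flag the boundary case $\alpha_t=2/L$, where $\kappa_t=0$; the paper's proof glosses over it.

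You do not, however, need to strengthen the hypothesis to $\bar\alpha<2/L$. The argument runs as follows.
\begin{itemize}
\item Since $\alpha_t\le 2/L$ already gives $\kappa_t\ge 0$, the values $f(\Theta_t)$ are nonincreasing for $t\ge 1$.
\item Applying the descent inequality at $y=x-\nabla f(x)/L$ gives $\|\nabla f(x)\|^2\le 2L\,(f(x)-f_{\inf})$. Hence $\|\nabla f(\Theta_t)\|\le G:=\sqrt{2L(f(\Theta_1)-f_{\inf})}$ for $t\ge 1$.
\item Every step then satisfies $\|\Theta_{t+1}-\Theta_t\|\le D:=\max\{\|\Theta_1-\Theta_0\|,\bar\alpha G\}$, so $\alpha_t\le \mu_0(D+\delta)^{1-\alpha}/(\Gamma(2-\alpha)\sqrt{t+1})\to 0$.
\item Eventually $\alpha_t\le 1/L$, so $\kappa_t\ge\alpha_t/2$ on a tail, and your Step~2 goes through with $\bar\alpha=2/L$.
\end{itemize}

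The same bounds give $\alpha_t\to 0$ in your Step~3 without using boundedness of $\{\Theta_t\}$. The boundary term there only needs $\alpha_{m_j}\to 0$, since $\|\nabla f(\Theta_{m_j})\|<\varepsilon/2$ by construction. So in fact $\nabla f(\Theta_t)\to 0$ holds unconditionally under the paper's schedule. The boundedness hypothesis only serves to guarantee that limit points exist.
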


\begin{proof}
\noindent
Define a Lyapunov function $V_t = f(\Theta_t)$. We aim to show $V_t$ cannot keep decreasing indefinitely without forcing the gradient norms to vanish. Because $f$ is $L$-smooth, for any $x,y \in \mathbb{R}^d$, we have:
\begin{equation}
\label{eq:Lsmooth}
f(y)
\;\le\;
f(x) 
\;+\;
\nabla f(x)^\top (y - x)
\;+\;
\frac{L}{2}\,\|y - x\|^2.
\end{equation}
Taking $x = \Theta_{t+1}$ and $y = \Theta_{t+2}$, we substitute into \eqref{eq:Lsmooth}:

\begin{equation}
\begin{split}
f(\Theta_{t+2})
&\le
f(\Theta_{t+1}) 
\;+\;
\nabla f(\Theta_{t+1})^\top\! (\Theta_{t+2} - \Theta_{t+1})
\\[5pt]
&\quad+\;
\frac{L}{2}\,\|\Theta_{t+2} - \Theta_{t+1}\|^2.
\end{split}
\end{equation}

\medskip
\noindent
By \eqref{eq:Theta_update}, we have 
\[
\Theta_{t+2} - \Theta_{t+1}
\;=\;
-\,\alpha_t \,\nabla f(\Theta_{t+1}).
\]
Plugging this back in, we get:
\begin{equation}
\begin{split}
f(\Theta_{t+2})
&\le 
f(\Theta_{t+1}) 
\;+\;
\nabla f(\Theta_{t+1})^\top\!\Bigl(-\,\alpha_t\,\nabla f(\Theta_{t+1})\Bigr)
\\[5pt]
&\quad+\;
\frac{L}{2}\,\Bigl\| -\,\alpha_t\,\nabla f(\Theta_{t+1})\Bigr\|^2
\\[5pt]
&=\;
f(\Theta_{t+1}) 
\;-\;
\alpha_t\,\|\nabla f(\Theta_{t+1})\|^2
\;+\;
\frac{L\,\alpha_t^2}{2}\,\|\nabla f(\Theta_{t+1})\|^2.
\end{split}
\end{equation}

Rearranging yields the sufficient decrease inequality:
\begin{equation}
\label{eq:suffdec}
f(\Theta_{t+2})
\;\le\;
f(\Theta_{t+1})
\;-\;
\Bigl(\alpha_t - \tfrac{L}{2}\,\alpha_t^2\Bigr)\,
\|\nabla f(\Theta_{t+1})\|^2.
\end{equation}
Define 
\[
\kappa_t 
\;=\; 
\alpha_t - \tfrac{L}{2}\,\alpha_t^2
\;=\;
\alpha_t\,\Bigl(1 - \tfrac{L}{2}\,\alpha_t\Bigr).
\]
Then \eqref{eq:suffdec} becomes
\[
f(\Theta_{t+2})
\;\le\;
f(\Theta_{t+1})
\;-\;
\kappa_t\,\|\nabla f(\Theta_{t+1})\|^2.
\tag{$*$}
\]

\medskip
\noindent
For $\kappa_t$ to be strictly positive, we need 
\[
\alpha_t 
\;\le\;
\frac{2}{L}.
\]
From \eqref{eq:alpha_t}, this means
\[
\frac{\mu_t}{\Gamma(2 - \alpha)} 
\bigl(\|\Theta_{t+1} - \Theta_t\| + \delta\bigr)^{1-\alpha}
\;\le\;
\bar{\alpha}
\;\le\;
\frac{2}{L}.
\]
Thus, provided we select $\mu_t$ (and $\delta$) such that the above expression remains bounded by $\tfrac{2}{L}$ (e.g., by choosing a small enough constant $\mu_t$ or a diminishing sequence in $t$), we guarantee $\kappa_t \ge 0$.  

\smallskip
\noindent
Under $L$-smoothness and typical gradient-descent assumptions, the iterates often remain in a bounded region. In particular, if $\|\nabla f(\Theta_t)\|$ is not excessively large or if $\mu_t$ diminishes over time, then $\|\Theta_{t+1} - \Theta_t\|$ will not explode. This ensures the factor $(\|\Theta_{t+1} - \Theta_t\| + \delta)^{1-\alpha}$ remains finite, helping keep $\alpha_t \le \frac{2}{L}$.

\medskip
\noindent
Let $V_{t+1} = f(\Theta_{t+1})$. Summing $(*)$ from $t=0$ to $t=T-1$, we obtain:
\[
\begin{aligned}
f(\Theta_{T+1})
&\;=\;
V_{T+1}
\;\le\;
V_{1} 
\;-\;
\sum_{t=0}^{T-1}
\kappa_t\,\|\nabla f(\Theta_{t+1})\|^2.
\end{aligned}
\]
Since $f(\Theta_{T+1}) \ge f_{\inf}$ for all $T$, it follows that
\[
f_{\inf}
\;\le\;
V_{1}
\;-\;
\sum_{t=0}^{T-1}
\kappa_t\,\|\nabla f(\Theta_{t+1})\|^2,
\]
so
\[
\sum_{t=0}^{T-1}
\kappa_t\,\|\nabla f(\Theta_{t+1})\|^2
\;\le\;
V_{1} 
\;-\; 
f_{\inf}.
\]
Taking $T \to \infty$ shows
\[
\sum_{t=0}^{\infty}
\kappa_t\,\|\nabla f(\Theta_{t+1})\|^2
\;<\;
\infty.
\]
Since each term is nonnegative, we must have 
\[
\kappa_t\,\|\nabla f(\Theta_{t+1})\|^2 \;\to\; 0 
\quad\text{as } t\to\infty.
\]
\noindent
If $\{\kappa_t\}$ does not rapidly decay to zero (ensublack by keeping $\alpha_t$ in a reasonable range and not letting $\mu_t$ shrink too fast), then
\[
\liminf_{t\to\infty}\|\nabla f(\Theta_{t+1})\|\;=\;0.
\]
Relabeling the index gives
\[
\liminf_{t\to\infty}\|\nabla f(\Theta_t)\|\;=\;0.
\]
If, in addition, the sequence $\{\Theta_t\}$ is bounded, any cluster point $\Theta^*$ of $\{\Theta_t\}$ satisfies 
\[
\nabla f(\Theta^*) = 0,
\]
meaning \(\Theta^*\) is a \emph{stationary point} of \(f\). 

We note that $\alpha \in (0,1]$ and 
\[
\alpha_t 
\;=\;
\frac{\mu_t}{\Gamma(2-\alpha)}
\Bigl(\|\Theta_{t+1} - \Theta_t\| + \delta\Bigr)^{1-\alpha}
\]
captures the “fractional” aspect in the step size. However, the sufficient condition $\alpha_t \le \frac{2}{L}$ aligns with conventional non-convex optimization theory. By combining fractional-order terms with a properly bounded (or diminishing) $\mu_t$, we preserve the standard guarantee that the iterates asymptotically approach the set of stationary points of $f$.
\end{proof}

\begin{remark}[Role of $\delta>0$]
Incorporating $\delta$ into the expression $\bigl(\|\Theta_{t+1} - \Theta_t\| + \delta\bigr)^{1-\alpha}$ mitigates problematic scenarios in which $\|\Theta_{t+1} - \Theta_t\|\approx 0$ might lead to an unstable or excessively large effective step size. 
\end{remark}
\textcolor{black}{To see a theoretical foundation for the robustness and contributions of the Fractional-Order Federated Averaging (FOFedAvg) algorithm, including its handling of non-IID data, smoothness properties, memory effects, bias analysis, impact of fractional order $\alpha$, justification of local epochs, and privacy considerations, refer to Appendix A.}

\section{Experiments}\label{Sec:Experiments}
\textcolor{black}{To investigate the efficacy of the proposed algorithm we conduct a comprehensive experimental study. Our experiments evaluate a comprehensive suite of algorithms, including FedAvg, FOFedAvg, MOON, SCAFFOLD, FedProx, FedNova, FedAdam, FDSE, FedBM, and SparseSecAgg (SSecAgg) across diverse datasets under both independent and identically distributed (IID) and non-IID data distributions. The primary objectives are to assess convergence behavior, test accuracy, communication efficiency, and robustness to heterogeneous data. We consider nine benchmark datasets: MNIST, EMNIST, Cleveland heart disease, FEMNIST, CIFAR-10, CIFAR-100, Sent140, PneumoniaMNIST, and Edge-IIoTset, each partitioned across clients using Dirichlet or pathological non-IID schemes to simulate real-world federated settings. These experiments provide deep insights into each algorithm's scalability, stability, and performance in challenging federated learning scenarios. We summarize the key mathematical symbols in Table~\ref{tab:symbols}.}

\begin{table}[H]
\fontsize{7}{9}\selectfont
\centering
\caption{Key Mathematical Symbols Used in FOFedAvg}
\label{tab:symbols}
\begin{tabularx}{0.7\columnwidth}{c>{\RaggedRight}X}
\toprule
\textbf{Symbol} & \textbf{Description} \\
\midrule
$\Gamma(z)$ & Gamma function, generalizing the factorial \\
$\alpha$ & Fractional order, $0 < \alpha \leq 1$ \\
$\mu_t$ & Learning rate, $\mu_t = \mu_0 / \sqrt{t+1}$ \\
$\mu_0$ & Initial learning rate (FOSGD algorithm) \\
$\delta$ & Regularization constant, $\delta > 0$ \\
$\Theta_t$ & Model parameters at iteration $t$ \\
$\nabla f(\Theta_t)$ & Gradient of the objective function at $\Theta_t$ \\
$\mathrm{D}_t^\alpha$ & Fractional derivative of order $\alpha$ \\
$\Delta_h^\alpha$ & Grünwald-Letnikov fractional derivative \\
${}^C \mathrm{D}_t^\alpha$ & Caputo fractional derivative \\
$\binom{\alpha}{k}$ & Generalized binomial coefficient \\
$f(\Theta)$ & Global objective function \\
$F_k(\Theta)$ & Local objective function for client $k$ \\
$\mathcal{P}_k$ & Local dataset of client $k$ \\
$n_k$ & Number of data points in client $k$'s dataset \\
$n$ & Total number of data points across selected clients \\
$K$ & Total number of clients \\
$C$ & Fraction of participating clients \\
$S_t$ & Subset of clients selected at round $t$ \\
$B$ & Mini-batch size (FOFedAvg algorithm) \\
$E$ & Number of local epochs (FOFedAvg algorithm) \\
$L$ & Lipschitz constant for gradient smoothness \\
$\kappa_t$ & Decrease factor in convergence analysis \\
$f_{\inf}$ & Lower bound of the objective function \\
\bottomrule
\end{tabularx}
\end{table}

\begin{figure}[H]
\centering
\includegraphics[height=5cm,width=7cm]{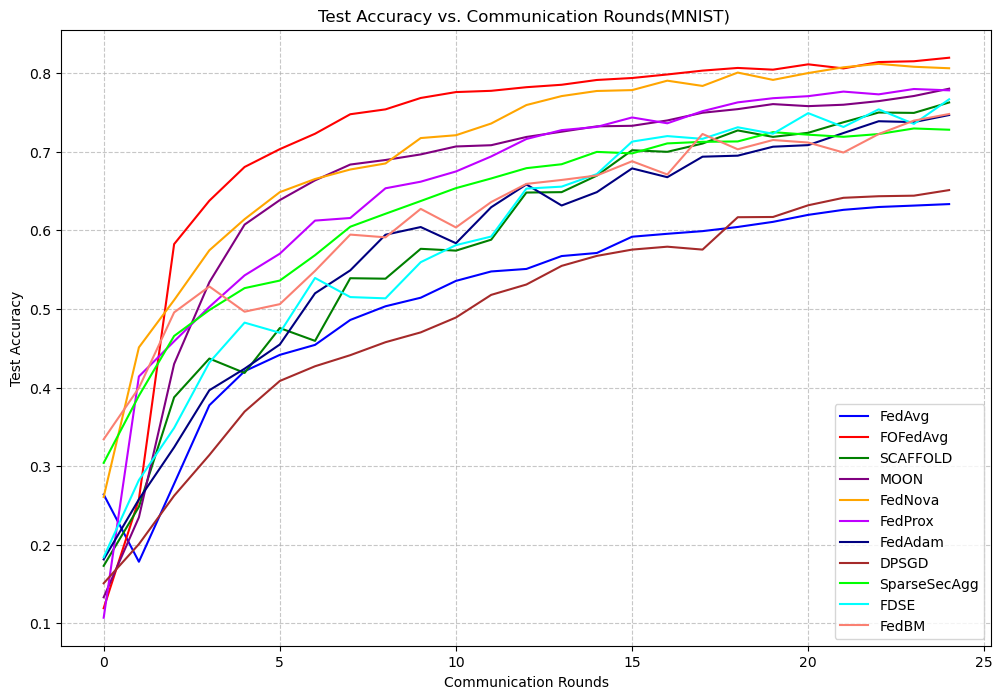}
\caption{Comparison of federated algorithms on the MNIST dataset under a non-IID setting with 10 clients.}
\label{fig:federated_learning_plotMNIST}
\end{figure}

The performance of ten federated learning algorithms --- FedAvg, FOFedAvg, SCAFFOLD, MOON, FedNova, FedProx, FedAdam, FDSE, FedBM, and SparseSecAgg --- is compablack based on their test accuracy over 25 communication rounds on the MNIST dataset. Figure~\ref{fig:federated_learning_plotMNIST} illustrates the accuracy trends. FOFedAvg and FedNova consistently outperform the other methods, achieving accuracies around 0.81–0.82, with FOFedAvg showing a steep initial increase and sustained high performance. FedNova starts strong and maintains competitive accuracy. MOON and FedProx also perform well, reaching approximately 0.78–0.80, followed by SCAFFOLD, FedAdam, FDSE, FedBM, and SparseSecAgg. FedAvg, the baseline, exhibits the slowest convergence and lowest final accuracy, highlighting the limitations of simple averaging in heterogeneous settings.

\begin{figure}[H]
\centering
\includegraphics[height=5cm,width=7cm]{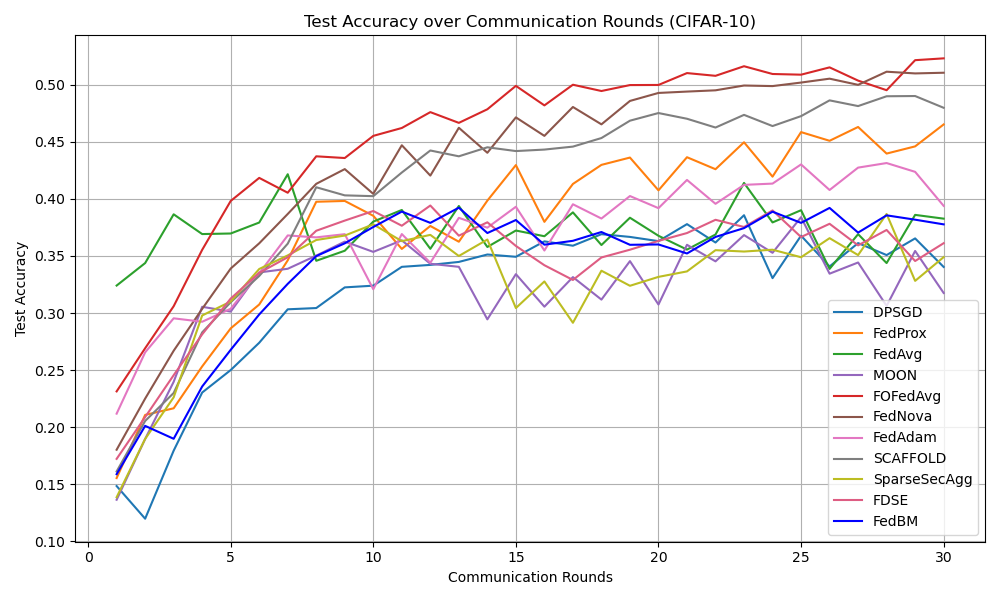}
\caption{Comparison of federated algorithms on the CIFAR-10 dataset under a non-IID setting.}
\label{fig:federated_learning_plotCIFAR10}
\end{figure}

Federated learning on complex datasets like CIFAR-10 poses unique challenges, demanding robust algorithms to achieve high accuracy under decentralized constraints. Figure~\ref{fig:federated_learning_plotCIFAR10} visualizes the test accuracy of ten methods over 30 communication rounds. FOFedAvg leads with a final accuracy of 0.5231, surging early (0.2314 at round 1) and maintaining a strong trajectory. FedNova follows closely, reaching 0.5105 with consistent gains, showcasing its ability to handle heterogeneity compablack to other algorithms.

\begin{figure}[H]
\centering
\includegraphics[height=5cm,width=7cm]{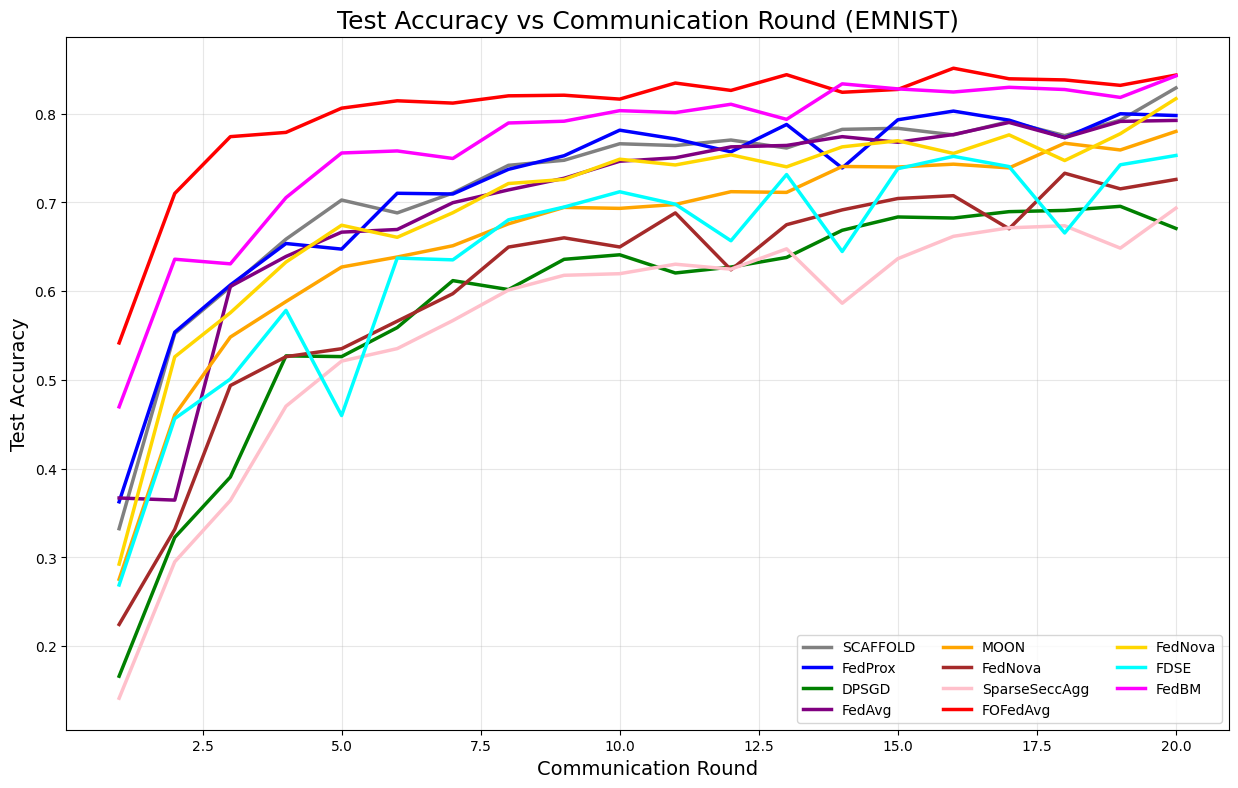}
\caption{Comparison of federated learning algorithms on the EMNIST dataset.}
\label{fig:EMNIST3_comparison}
\end{figure}

Figure~\ref{fig:EMNIST3_comparison} illustrates the test accuracy of ten federated learning algorithms over 20 communication rounds on the EMNIST dataset. FOFedAvg stands out as the top performer, showing rapid convergence and strong final results. FedNova, MOON, and FDSE exhibit moderate progress, with steady but slower advancement throughout. FedAvg, along with SparseSecAgg and some of the other baselines, lags behind, reflecting challenges in this heterogeneous setting and underscoring the benefit of memory-aware updates.

\begin{figure}[H]
\centering
\includegraphics[height=5cm,width=7cm]{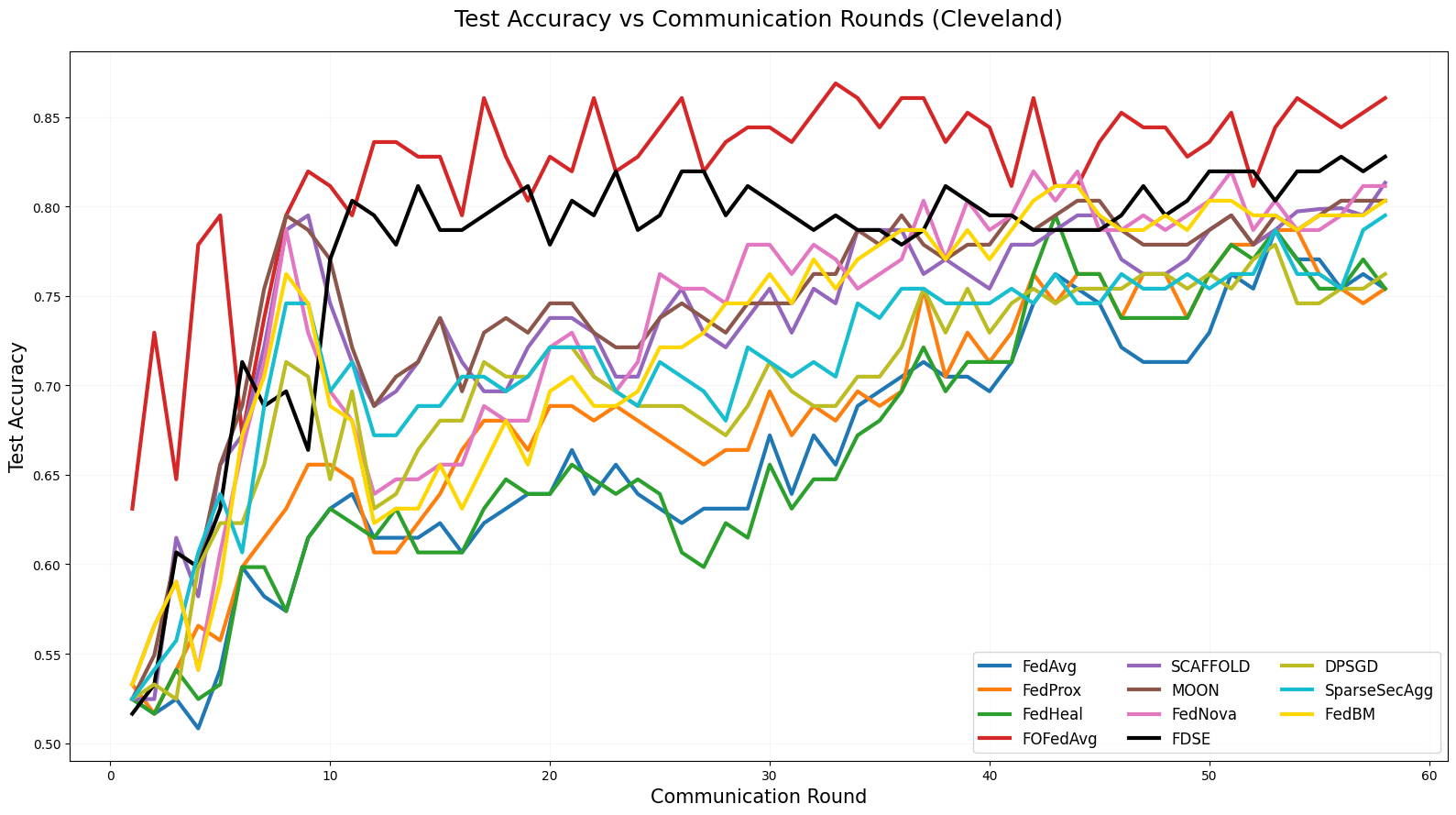}
\caption{Comparison of federated learning algorithms on the Cleveland heart disease dataset.}
\label{fig:Cleveland3}
\end{figure}

Figure~\ref{fig:Cleveland3} shows the test accuracy of ten federated learning algorithms on the Cleveland heart disease dataset. FOFedAvg clearly outperforms all other methods, achieving the fastest convergence and the highest final accuracy. Standard FedAvg shows comparatively slower convergence, while SparseSecAgg attains the lowest overall accuracy, indicating a noticeable accuracy–communication/privacy trade-off in this setting.

\begin{figure}[H]
\centering
\includegraphics[height=5cm,width=7cm]{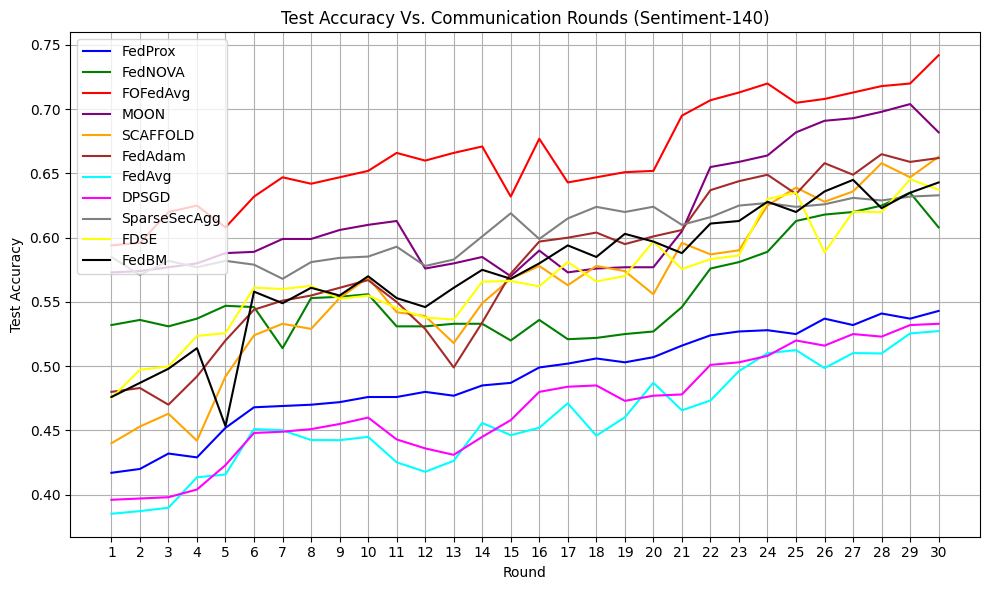}
\caption{Test accuracy of federated learning algorithms on the Sent140 dataset.}
\label{fig:test_accuracy_sent140}
\end{figure}

\noindent Figure~\ref{fig:test_accuracy_sent140} shows the evolution of test accuracy for different federated learning algorithms on the Sent140 dataset. Across nearly all rounds, the fractional-order method FOFedAvg yields the highest accuracy and exhibits a steady upward trend, indicating both fast adaptation and strong final performance. A second group of methods, including SCAFFOLD, MOON, FedAdam, and FedBM, closely track one another and consistently outperform the classical FedAvg baseline, suggesting that variance-blackuction, proximal, or adaptive mechanisms are beneficial on this highly non-IID text task. In contrast, FedAvg and, to a lesser extent, FedProx, FedNova, SparseSecAgg, and FDSE lag behind the leading methods, reflecting slower convergence and lower asymptotic accuracy under the same communication budget.

\begin{figure}[H]
\centering
\includegraphics[height=6cm,width=8cm]{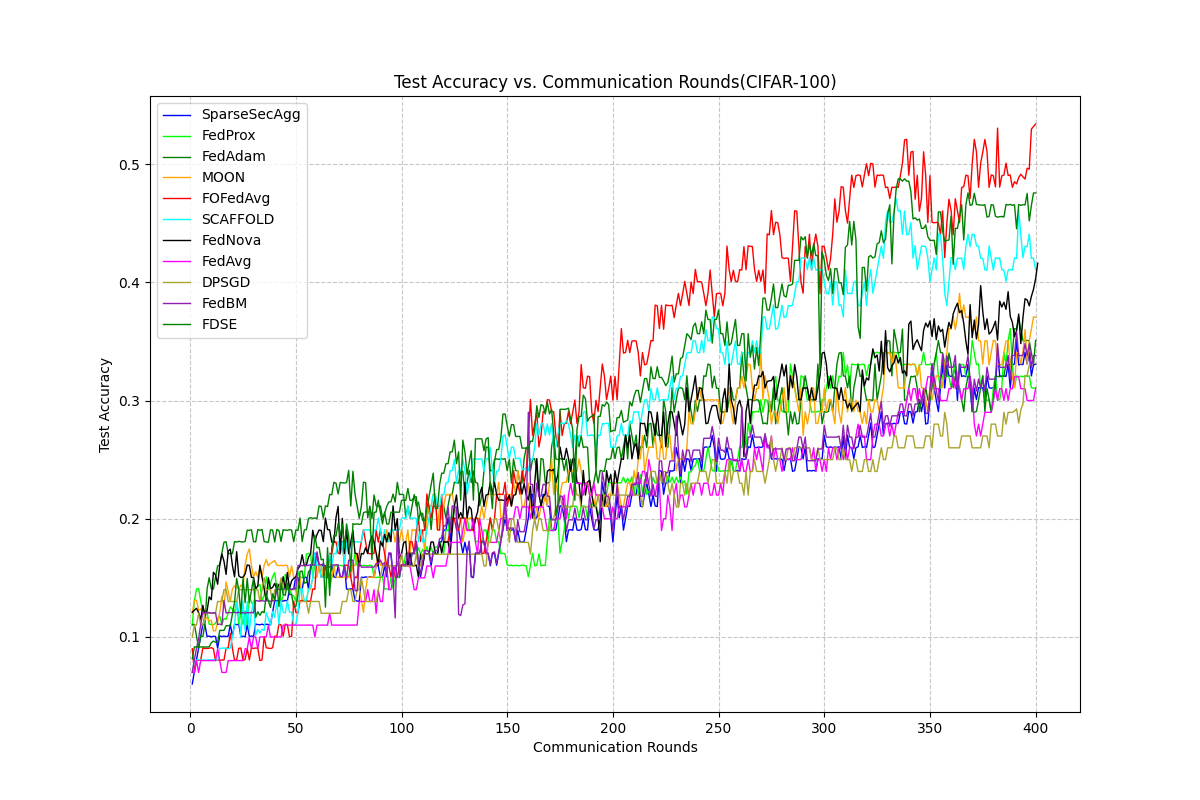}
\caption{Test accuracy of federated learning algorithms on the CIFAR-100 dataset.}
\label{fig:test_accuracy_cifar100}
\end{figure}

The test-accuracy trajectory in Figure~\ref{fig:test_accuracy_cifar100} reveals a clear trend: although FedAdam leads during the early rounds, its advantage fades around round 170, where our proposed method begins to dominate in the highly heterogeneous setting. Beyond this point, FOFedAvg consistently outperforms all baselines, followed by FedAdam and SCAFFOLD.

\begin{figure}[H] 
\centering
\includegraphics[height=5cm,width=7cm]{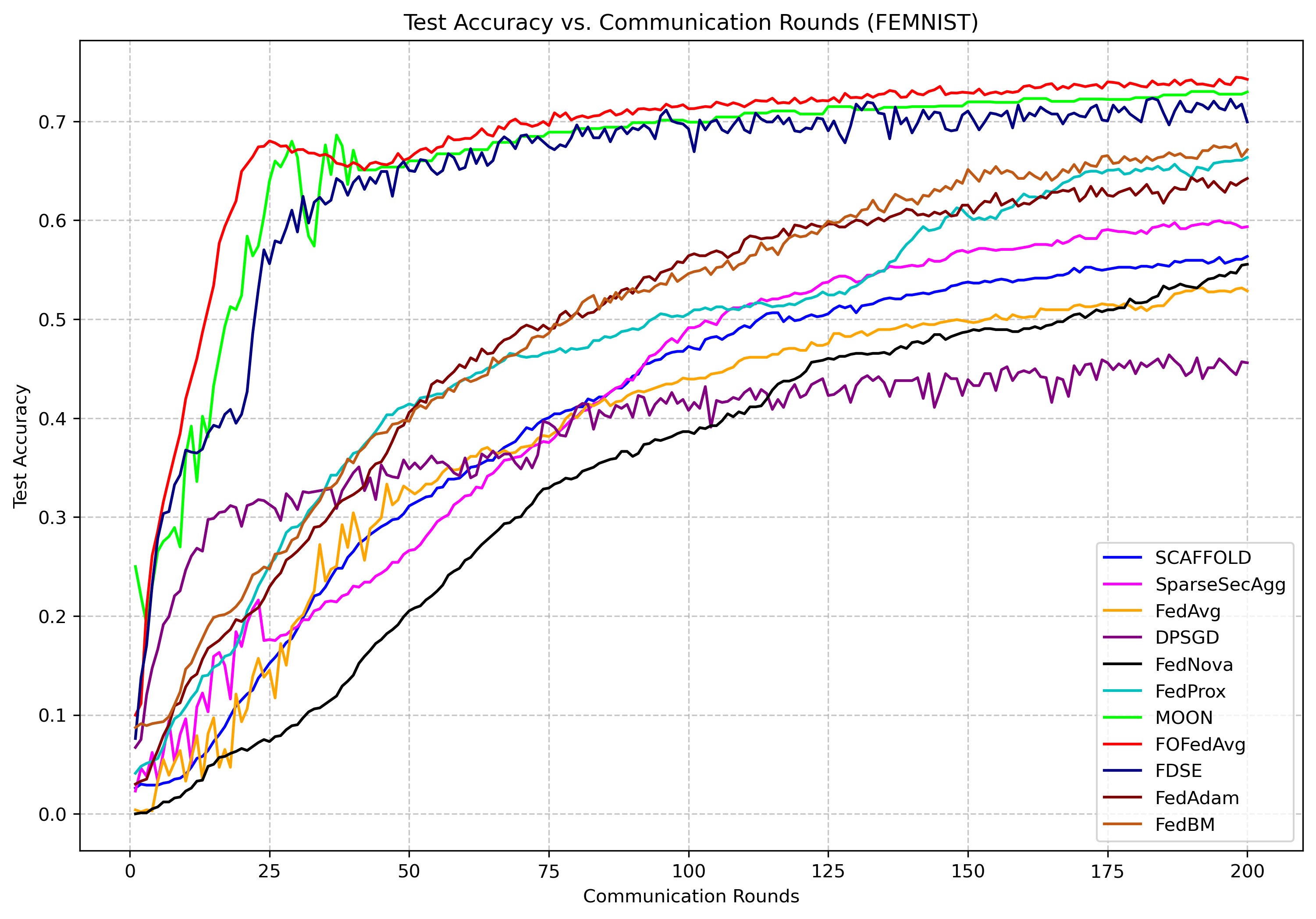} 
\caption{Test accuracy of federated learning algorithms on the FEMNIST dataset.} 
\label{fig:test_accuracy_efmnist} 
\end{figure}

Figure~\ref{fig:test_accuracy_efmnist} compares the test accuracy of several federated learning baselines on FEMNIST. The fractional-order method FOFedAvg exhibits both the fastest initial gain and the highest final accuracy, maintaining a clear margin over all competitors after roughly \(50\) rounds. MOON, FedProx, SCAFFOLD, FDSE, FedAdam, and FedBM form a second tier: they converge substantially faster and to higher accuracies than the vanilla FedAvg baseline, but remain consistently below FOFedAvg in the steady state. In contrast, SparseSecAgg, FedAvg, and FedNova lag behind throughout training, reflecting slower optimization and lower asymptotic performance under the same communication budget.


\begin{figure}[H]
\centering
\includegraphics[height=5cm,width=7cm]{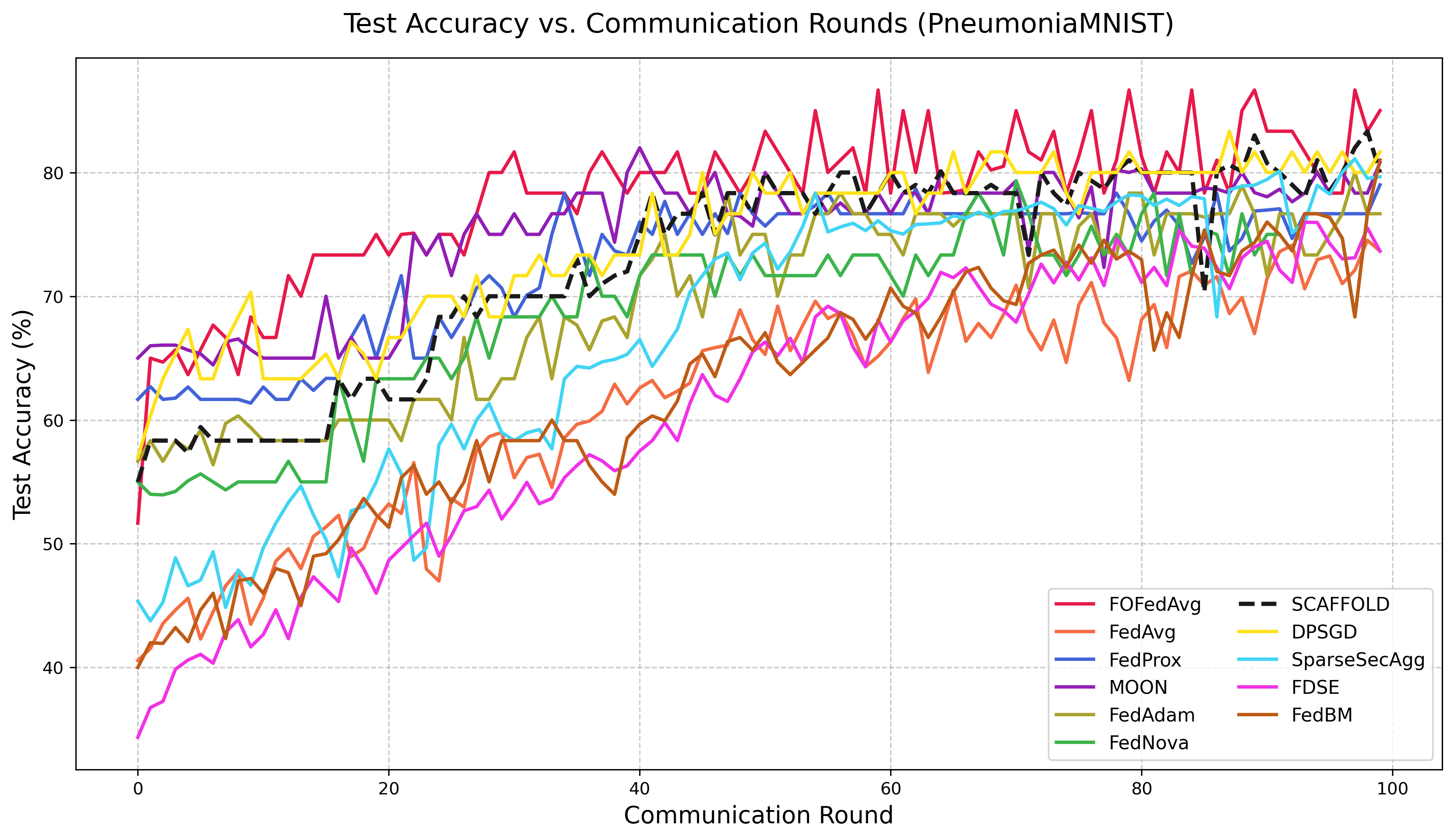}
\caption{Test accuracy comparison of FL algorithms on the PneumoniaMNIST dataset.}
\label{fig:PneumoniaMNIST}
\end{figure}

\begin{figure}[H]
\centering
\includegraphics[height=5cm,width=7cm]{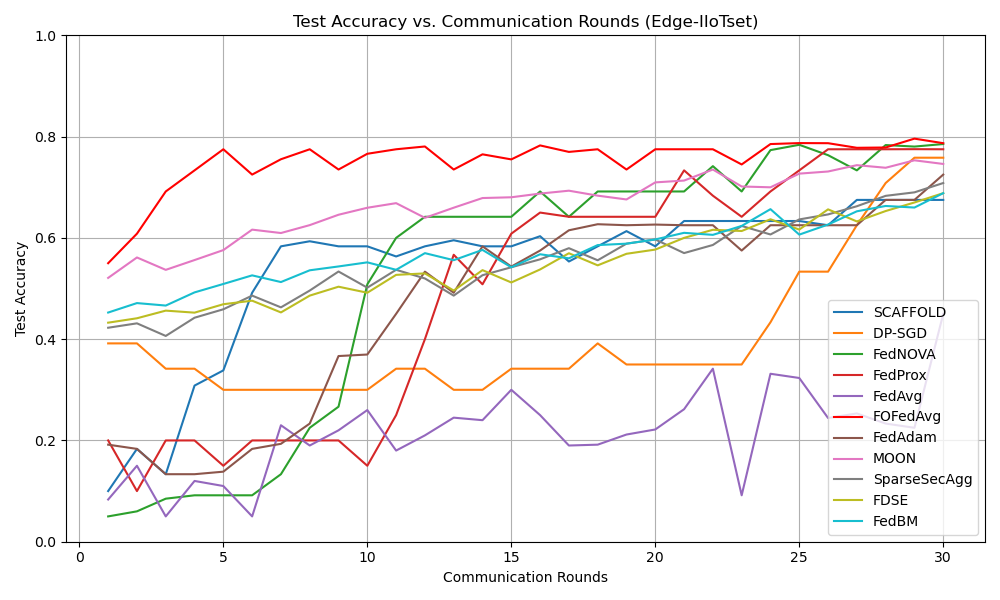}
\caption{Test accuracy comparison of FL algorithms on the Edge-IIoTset dataset.}
\label{fig:Edge}
\end{figure}

\noindent Figures~\ref{fig:Edge} and \ref{fig:PneumoniaMNIST} compare the test accuracy of FL algorithms on the Edge-IIoTset and PneumoniaMNIST datasets, respectively. Across both tasks, FOFedAvg consistently attains the highest accuracy and converges rapidly, highlighting the benefit of memory-aware updates in challenging non-IID regimes. On Edge-IIoTset, MOON, FedAdam, and SparseSecAgg form a competitive second tier, clearly outperforming the classical FedAvg baseline. On PneumoniaMNIST, FOFedAvg again leads, closely followed by SCAFFOLD and FDSE, while FedAvg, FedProx, FedAdam, FedNova, and FedBM cluster in the middle range and SparseSecAgg typically lags behind, underscoring a consistent trade-off between robustness, communication efficiency, and final accuracy across both medical and industrial IoT workloads.

\begin{figure}[H]
\centering
\includegraphics[height=5cm,width=9cm]{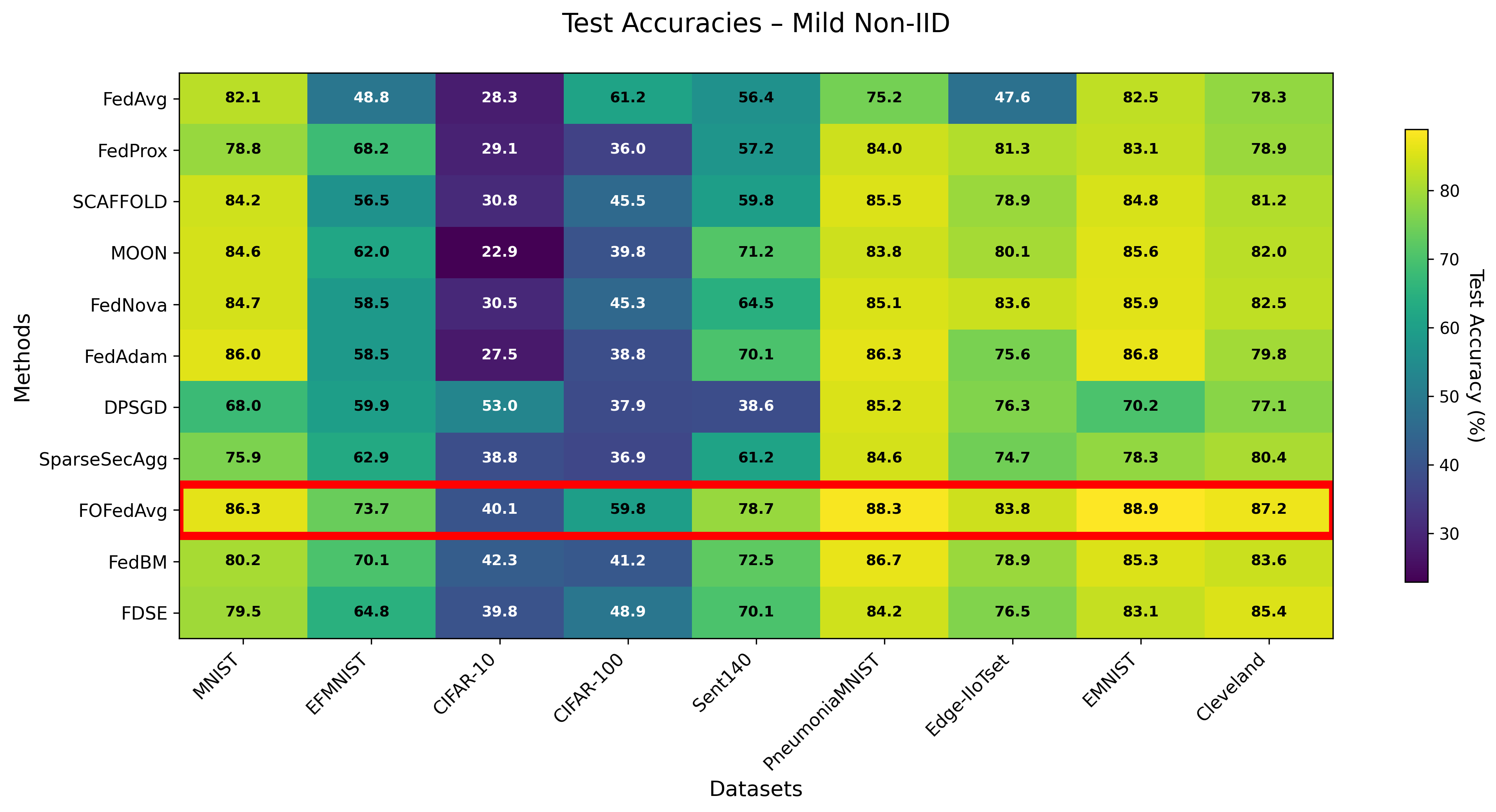}
\caption{Test accuracies of federated learning methods under mild non-IID conditions.}
\label{fig:mild_non_iid}
\end{figure}

As shown in Figure~\ref{fig:mild_non_iid}, in the mild non-IID scenario, most federated learning methods achieve relatively high and stable test accuracies across all nine datasets. FOFedAvg consistently ranks among the top-performing approaches, reaching 86.3\% on average and outperforming baselines such as FedAvg and FedProx by 4–6 percentage points in many cases. Communication-efficient or regularized methods such as FedNova, MOON, and FedAdam also perform well. SparseSecAgg shows moderate degradation, reflecting the cost of aggressive sparsification and secure aggregation, while the overall accuracy variation between methods stays limited, indicating good robustness when data heterogeneity is mild.

\begin{figure}[t]
\centering
\includegraphics[height=4cm,width=4cm]{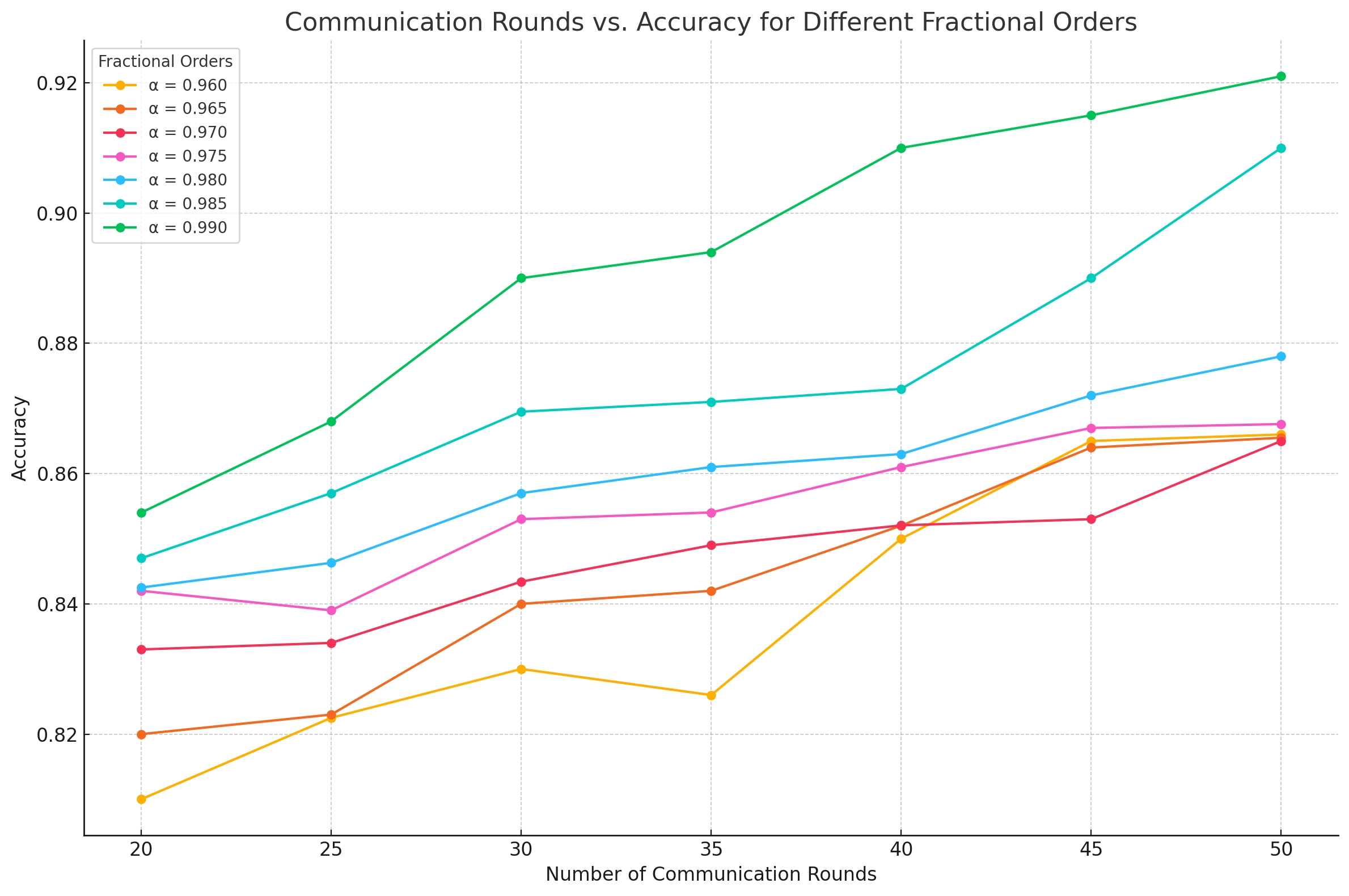}
\includegraphics[height=4cm,width=4cm]{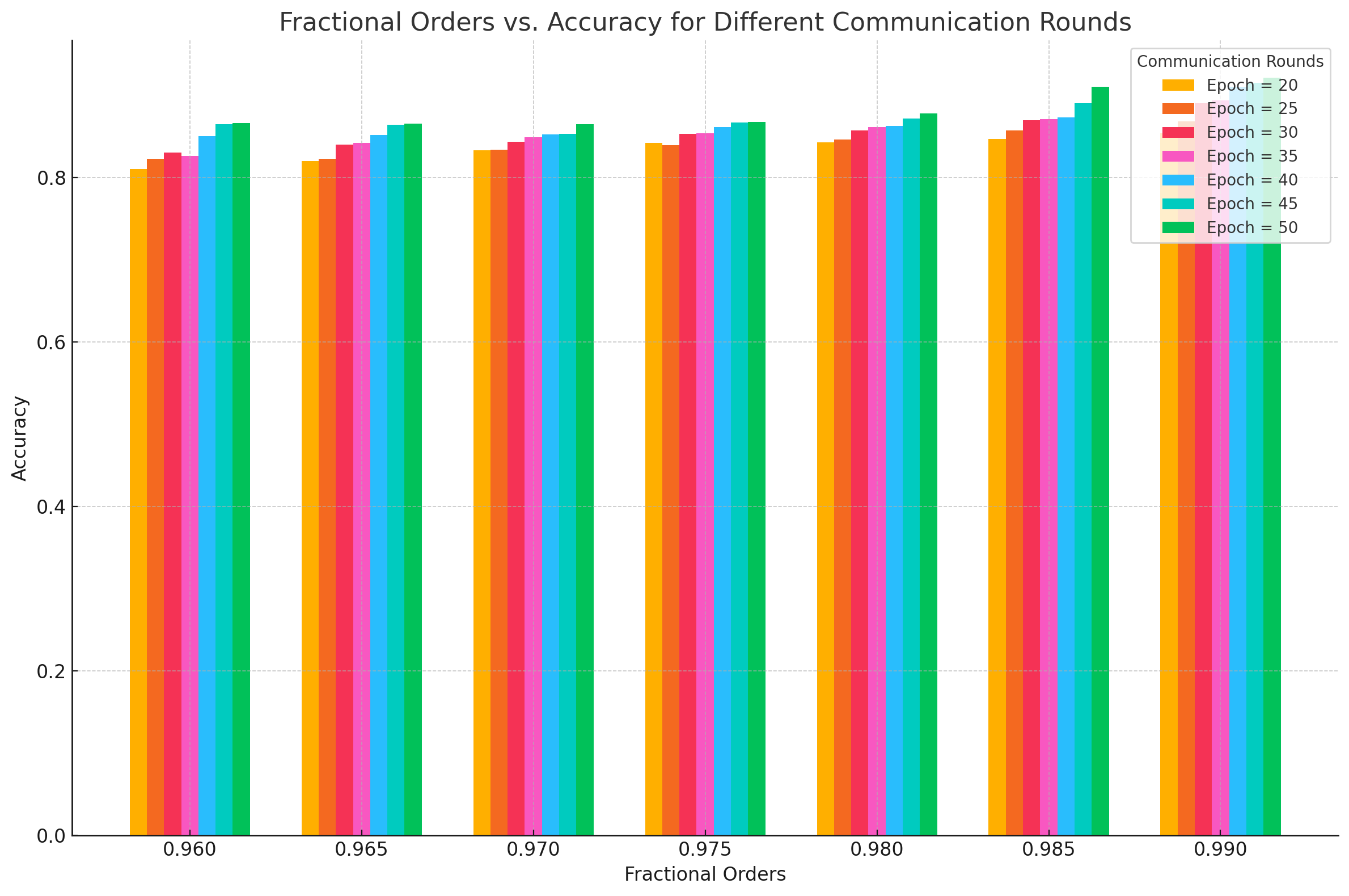}
\caption{Test accuracy visualization for the MNIST dataset ($B = 50$, $E = 20$) across fractional orders and communication rounds in a non-IID setting.}
\label{mn2}
\end{figure}

Figure~\ref{mn2} illustrates the relationship between fractional orders ($\alpha$), communication rounds, and test accuracy in a non-IID federated learning setting. The line plot on the left shows that as fractional orders increase, accuracy consistently improves, with a particularly notable jump for $\alpha > 0.97$. Additionally, higher communication rounds further boost accuracy, highlighting the combined benefits of both factors. The bar plot on the right reinforces this trend, showing that higher communication rounds consistently result in better performance across all fractional orders. These results demonstrate the synergistic effect of tuning fractional orders and increasing communication rounds in achieving superior accuracy in federated learning, especially under non-IID conditions.

\begin{figure}[H]
\centering
\includegraphics[height=4.5cm,width=4.5cm]{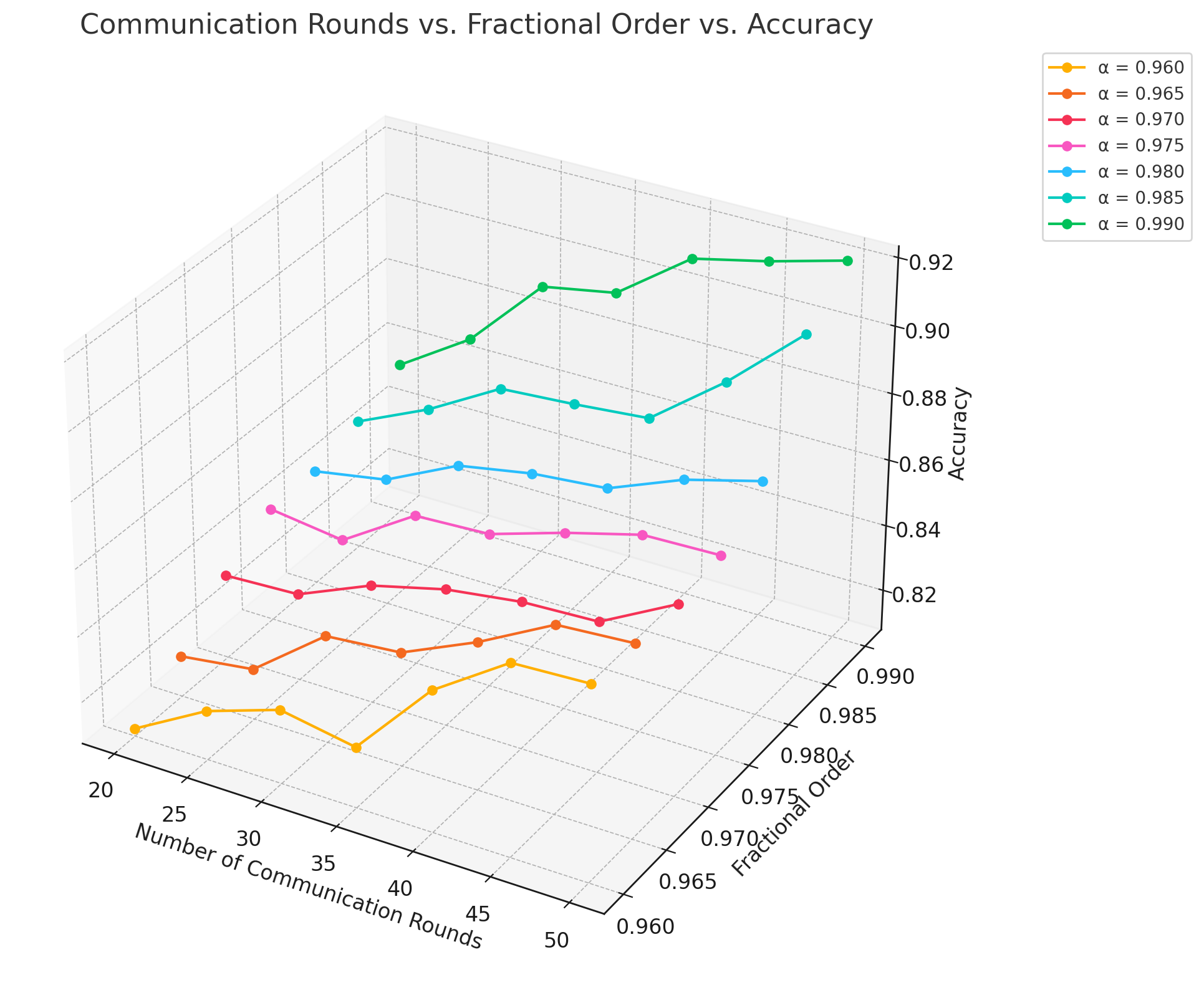}
\includegraphics[height=4cm,width=4cm]{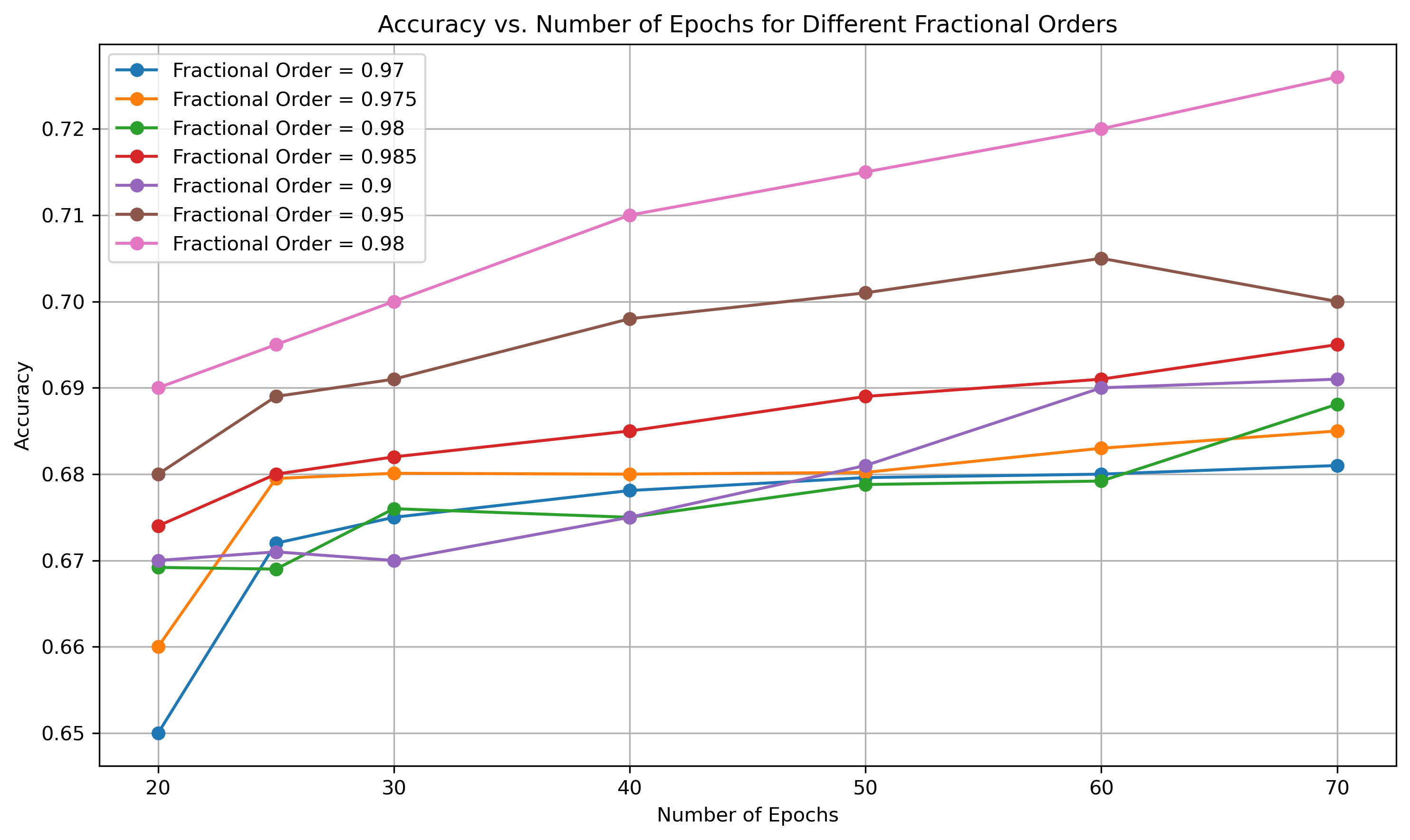}
\caption{Left: 3D line visualization of test accuracy for MNIST ($B = 50$, $E = 20$) across fractional orders and communication rounds in a non-IID setting. Right: test accuracy for CIFAR-10 across fractional orders and communication rounds in an IID setting.}
\label{mnistg}
\end{figure}

Figure~\ref{mnistg} (left) provides a 3D line visualization illustrating the relationship between fractional orders ($\alpha$), communication rounds, and accuracy in non-IID federated learning settings using the FOFedAvg algorithm. The left plot shows that increasing fractional orders and communication rounds leads to consistent improvements in accuracy, with the most significant gains observed for $\alpha > 0.97$ and higher communication rounds. Fractional-order dynamics effectively balance exploration and exploitation, enabling FOFedAvg to converge faster even in heterogeneous data environments. This highlights the synergistic effect of fractional-order optimization and communication frequency in achieving better performance in non-IID scenarios. The choice of $E = 20$ was inspiblack by \cite{mcmahan2017communication}.

Figure~\ref{mnistg} (right) highlights the effect of fractional orders ($\alpha$) and communication rounds on test accuracy for the CIFAR-10 dataset in an IID setting using a CNN architecture. The line plot on the right demonstrates a clear trend where accuracy consistently improves as fractional orders increase, particularly for higher communication rounds. For instance, at $\alpha = 0.99$, the accuracy is significantly higher compablack to lower fractional orders such as $\alpha = 0.96$ or $\alpha = 0.97$. Additionally, higher communication rounds, such as 70 rounds, amplify the benefits of higher fractional orders, leading to further improvements in accuracy. This trend highlights the synergistic effect of fractional-order dynamics and communication frequency in federated learning, where both factors contribute to achieving better performance.

\begin{figure}[H]
\centering
\includegraphics[height=4cm,width=4cm]{3Dlineplotmnistnoniidcnn.png}
\includegraphics[height=4cm,width=4cm]{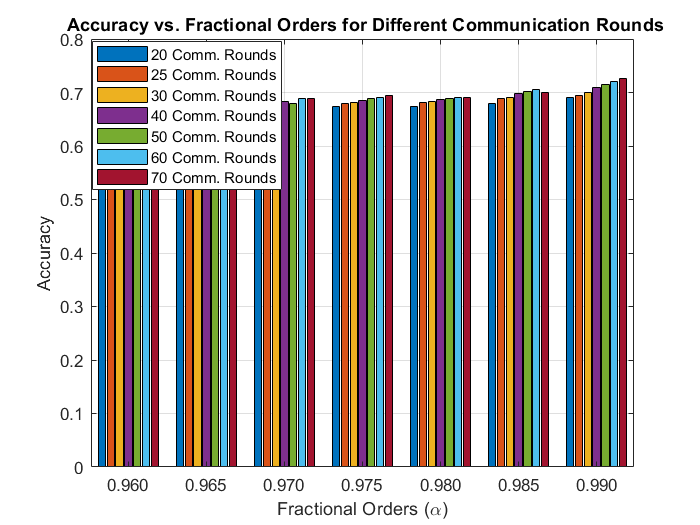}
\caption{3D and 2D visualizations of test accuracy trends on CIFAR-10 across fractional orders and communication rounds.}
\label{cifar10mmm}
\end{figure}

Figure~\ref{cifar10mmm} presents 3D and 2D visualizations of test accuracy trends for CIFAR-10 across fractional orders ($\alpha$) and communication rounds, highlighting the impact of these parameters on model performance. The left 3D plot illustrates the combined effect of fractional orders and communication rounds on test accuracy, while the right 2D bar plot provides a comparative view of accuracy for specific communication rounds across different fractional orders.

The 3D surface plot on the left demonstrates a clear trend: test accuracy improves with both increasing fractional orders and communication rounds. Fractional orders higher than $\alpha = 0.97$ consistently yield better accuracy, particularly when paiblack with a higher number of communication rounds (e.g., 70 rounds). The surface rises more steeply for higher fractional orders, reflecting their ability to enhance the learning process by balancing local updates and global aggregation more effectively. This visualization underscores the synergistic relationship between fractional orders and communication rounds in achieving superior model performance.

The 2D bar plot on the right complements the 3D visualization by showing accuracy trends for specific communication rounds. For all communication rounds, higher fractional orders ($\alpha = 0.985$ and $\alpha = 0.990$) result in the highest accuracy, confirming their effectiveness across various communication budgets. The differences between fractional orders become more pronounced as the communication rounds increase, further emphasizing the role of fractional orders in improving generalization.

Figure~\ref{cifar10mmm} highlights the importance of tuning fractional orders and communication rounds to optimize accuracy in federated learning. The combined visualizations show that fractional orders greater than $\alpha = 0.97$, coupled with sufficient communication rounds, are crucial for achieving reliable and consistent performance on non-trivial datasets like CIFAR-10. These results validate the potential of fractional-order optimization in blackucing communication costs while maintaining high accuracy, making FOFedAvg a practical solution for federated learning tasks. 

\textcolor{black}{For a comprehensive evaluation of the Fractional-Order Federated Averaging (FOFedAvg) algorithm, including detailed experimental settings, datasets, model architectures, hyperparameters, statistical analyses, and performance comparisons across various configurations and datasets, refer to Appendix B.}


\section{Conclusion and Future Work}\label{Sec:Conclusion}
\textcolor{black}{ In this work, we introduced Fractional-Order Federated Averaging (FOFedAvg), a novel enhancement to the integer-order FedAvg algorithm that leverages Fractional-Order Stochastic Gradient Descent (FOSGD) to address key challenges in federated learning. By incorporating fractional derivatives, FOFedAvg effectively captures long-range dependencies and deeper historical information, leading to improved convergence rates, enhanced communication efficiency, and greater robustness against non-IID data distributions.  Our extensive empirical evaluations on a diverse set of benchmark datasets, including MNIST, EMNIST, FEMNIST, the Cleveland Heart Disease dataset, CIFAR-10, CIFAR-100, Sent140, PneumoniaMNIST, and Edge-IIoTset, show that FOFedAvg is competitive with, and often outperforms, a wide range of established federated learning algorithms such as FedProx, FedNova, FedAdam, SCAFFOLD, MOON, FDSE, FedBM, FedAvg, DPSGD, and SparseSecAgg, particularly in highly challenging non-IID scenarios. Furthermore, our theoretical analysis confirms FOFedAvg's convergence to stationary points under standard smoothness assumptions, providing a solid foundation for its practical applicability. Future work will explore the optimization of FOFedAvg by addressing the challenge of identifying the optimal fractional order, which is critical for maximizing FOFedAvg's performance across diverse federated learning scenarios.}

\section*{Conflict of Interest Statement}
The authors have no conflicts of interest to declare.


\section{Appendix}

\section{Theoretical Analysis and Experiments}
We present a comprehensive evaluation of Fractional-Order Federated Averaging (FOFedAvg) through theoretical analyses and additional experiments, detailed in Appendices~A and~B, respectively, to demonstrate its robustness and efficacy in federated learning settings.

\section{Appendix A: Theoretical Analysis}
\label{sec:appendix_theoretical}

This section consolidates additional theoretical analyses to support the claims and robustness of FOFedAvg, focusing on the properties of Fractional-Order Stochastic Gradient Descent (FOSGD) and its application in federated learning (FL). Each subsection addresses a specific theoretical aspect, ensuring a comprehensive understanding of FOFedAvg’s contributions without blackundancy.

\subsection{Smoothness Assumption in FOSGD}
\label{subsec:smoothness_fosgd}

Here, we provide a formal analysis, including a proposition and proof, to clarify how the standard smoothness assumption interacts with FOSGD.

The smoothness assumption is that the objective function \(f: \mathbb{R}^d \to \mathbb{R}\) is \(L\)-smooth, meaning it has a Lipschitz continuous gradient with constant \(L > 0\):
\begin{equation}
\|\nabla f(x) - \nabla f(y)\| \leq L \|x - y\| \quad \text{for all } x, y \in \mathbb{R}^d,
\label{eq:grad_lipschitz}
\end{equation}
which implies:
\begin{equation}
f(y) \leq f(x) + \nabla f(x)^\top (y - x) + \frac{L}{2} \|y - x\|^2,
\label{eq:smoothness_implication}
\end{equation}
a standard condition for ensuring convergence in SGD and related methods. In standard SGD, the update rule is:
\begin{equation}
\Theta_{t+1} = \Theta_t - \eta \nabla \ell(\Theta_t; b),
\label{eq:theta_update_simple}
\end{equation}
where \(\eta\) is the learning rate, and \(\nabla \ell(\Theta_t; b)\) is the stochastic gradient on a mini-batch \(b\). The smoothness assumption is a property of the objective function \(f\) (or its local components), not of the algorithm itself, and ensures that gradient changes are bounded, facilitating convergence analysis.

In FOFedAvg, FOSGD modifies the update rule for client \(k\) at iteration \(t\), where the effective step size is:
\begin{equation}
\alpha_t^{(k)} = \frac{\mu_0}{\sqrt{t+1} \cdot \Gamma(2-\alpha)} \left( \|\Theta_t^{(k)} - \Theta_{t-1}^{(k)}\| + \delta \right)^{1-\alpha},
\label{eq:alpha_t2}
\end{equation}
for \(0 < \alpha \le 1\). The fractional term
\[
\frac{1}{\Gamma(2-\alpha)} \left( \|\Theta_t^{(k)} - \Theta_{t-1}^{(k)}\| + \delta \right)^{1-\alpha}
\]
scales the standard gradient \(\nabla f(\Theta_t^{(k)}; b)\), introducing a dependency on past parameter changes \(\|\Theta_t^{(k)} - \Theta_{t-1}^{(k)}\|\). We emphasize that the smoothness assumption pertains to the objective function \(f\), not the optimization algorithm’s update mechanism; the update still relies on the standard gradient \(\nabla f(\Theta_t^{(k)}; b)\), which inherits the smoothness properties of \(f\).

\begin{proposition}
\label{prop:smoothness_fosgd}
Let \(f: \mathbb{R}^d \to \mathbb{R}\) be an \(L\)-smooth function, i.e., satisfying \eqref{eq:grad_lipschitz}. Assume that the stochastic gradient \(\nabla f(\Theta_t^{(k)}; b)\) is computed for a mini-batch \(b\) drawn from the local dataset \(\mathcal{P}_k\), where \(f\) is the loss function contributing to the global objective. Then, the FOSGD update operates under the same smoothness assumption: the fractional term in \(\alpha_t^{(k)}\) does not alter the smoothness properties of \(f\), but only rescales the step size.
\end{proposition}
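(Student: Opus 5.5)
The plan is to separate two things the statement conflates: the smoothness condition \eqref{eq:grad_lipschitz}, which is a property of $f$ alone, and the update rule, which only chooses which pairs $(x,y)$ we plug into its consequences. The first step is to note that \eqref{eq:grad_lipschitz} and its consequence \eqref{eq:smoothness_implication} involve no algorithmic quantity. No choice of $\alpha$, $\mu_0$, $\delta$, or iterate history changes $L$. The content of the proposition is therefore that \eqref{eq:smoothness_implication} can be applied to the FOSGD iterates exactly as for SGD, with $\eta$ replaced by $\alpha_t^{(k)}$.

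Second, I would show that $\alpha_t^{(k)}$ in \eqref{eq:alpha_t2} is a well-defined, finite, nonnegative scalar, measurable with respect to the past iterates $\Theta_t^{(k)},\Theta_{t-1}^{(k)}$. The reasons are:
\begin{itemize}
\item $\Gamma(2-\alpha)\in[\Gamma(3/2)\cdot\text{const},\,1]$ is positive and bounded for $\alpha\in(0,1]$, since $2-\alpha\in[1,2)$ and $\Gamma$ is positive and continuous there;
\item $\|\Theta_t^{(k)}-\Theta_{t-1}^{(k)}\|+\delta\ge\delta>0$, so the power $1-\alpha\in[0,1)$ is finite, and it equals $1$ when $\alpha=1$.
\end{itemize}
The update then reads $\Theta_{t+1}^{(k)}=\Theta_t^{(k)}-\alpha_t^{(k)}\nabla f(\Theta_t^{(k)};b)$. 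The direction is the ordinary stochastic gradient, and the only modification is a data-dependent positive rescaling of the step length.

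Third, I would instantiate \eqref{eq:smoothness_implication} with $x=\Theta_t^{(k)}$ and $y=\Theta_{t+1}^{(k)}$. This gives
\[
f(\Theta_{t+1}^{(k)})\le f(\Theta_t^{(k)})-\alpha_t^{(k)}\nabla f(\Theta_t^{(k)})^\top g_t+\tfrac{L}{2}(\alpha_t^{(k)})^2\|g_t\|^2,
\]
where $g_t=\nabla f(\Theta_t^{(k)};b)$. This is literally the SGD descent inequality with $\eta\to\alpha_t^{(k)}$, which is the sense in which FOSGD "operates under the same smoothness assumption."

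The main obstacle is the stochastic case. Because $\alpha_t^{(k)}$ depends on $\Theta_t^{(k)}-\Theta_{t-1}^{(k)}$, it is random. For the inequality to be useful in expectation, I will condition on the history $\mathcal{F}_t$ up to $\Theta_t^{(k)}$. Under that conditioning $\alpha_t^{(k)}$ is $\mathcal{F}_t$-measurable, so it factors out of the conditional expectation:
\[
\mathbb{E}[\alpha_t^{(k)}\nabla f^\top g_t\mid\mathcal{F}_t]=\alpha_t^{(k)}\|\nabla f(\Theta_t^{(k)})\|^2.
\]
This assumes the minibatch $b$ is drawn independently of $\mathcal{F}_t$ and that the stochastic gradient is unbiased for the local objective. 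The plan is to verify exactly this measurability: the memory term uses only past iterates, never the current minibatch.
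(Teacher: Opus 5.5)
Your proposal is correct and shares the paper's core observation: smoothness is a property of $f$ alone, and $\alpha_t^{(k)}$ is a positive, finite scalar, because $\Gamma(2-\alpha)>0$ and $\|\Theta_t^{(k)}-\Theta_{t-1}^{(k)}\|+\delta\ge\delta>0$. The update is therefore the ordinary stochastic gradient with a rescaled step. Where you differ is in how you make ``operates under the same smoothness assumption'' concrete.

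The paper asserts that the mini-batch gradients are themselves $L$-Lipschitz. It infers smoothness of each $F_k$ and each mini-batch loss from smoothness of $f=\sum_k \frac{n_k}{n}F_k$; this does not follow for a sum, so it is effectively an extra common-constant assumption. It then defers any descent statement to Theorem~\ref{thm:main}, claiming $\alpha_t^{(k)}\le 2/L$ by choice of $\mu_0$ and $\delta$.

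You instead plug consecutive FOSGD iterates into \eqref{eq:smoothness_implication} and get the SGD descent inequality with $\eta$ replaced by $\alpha_t^{(k)}$, plus $\mathcal{F}_t$-measurability of $\alpha_t^{(k)}$. This needs no smoothness of the individual losses, only pointwise finiteness of $\alpha_t^{(k)}$ rather than a uniform bound, which is all the proposition asserts. It is also the stochastic analogue of the inequality that drives the proof of Theorem~\ref{thm:main}.

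One caution about your final step, which the proposition does not require. The identity $\mathbb{E}[\alpha_t^{(k)}\nabla f(\Theta_t^{(k)})^\top g_t\mid\mathcal{F}_t]=\alpha_t^{(k)}\|\nabla f(\Theta_t^{(k)})\|^2$ needs $g_t$ to be conditionally unbiased for $\nabla f$ itself. So it holds only if $f$ is read as the client objective $F_k$. If $f$ is the global objective, as in the paper's proof, unbiasedness for the local objective gives $\alpha_t^{(k)}\nabla f(\Theta_t^{(k)})^\top\nabla F_k(\Theta_t^{(k)})$ instead. That difference is exactly the non-IID gap the paper treats separately in its bias analysis.
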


\begin{proof}
The FOSGD update for client \(k\) at iteration \(t\) is:
\begin{equation}
\Theta_{t+1}^{(k)} = \Theta_t^{(k)} - \alpha_t^{(k)} \nabla f(\Theta_t^{(k)}; b),
\label{eq:theta_update}
\end{equation}
where the effective step size \(\alpha_t^{(k)}\) is defined in \eqref{eq:alpha_t2}. The gradient \(\nabla f(\Theta_t^{(k)}; b)\) is the standard stochastic gradient of the loss function, computed at \(\Theta_t^{(k)}\) for mini-batch \(b\). Since
\[
f(\Theta) = \sum_{k=1}^K \frac{n_k}{n} F_k(\Theta), 
\quad
F_k(\Theta) = \frac{1}{n_k} \sum_{i \in \mathcal{P}_k} \ell(\Theta; x_i, y_i),
\]
the smoothness of \(f\) implies that each \(F_k\) (and the corresponding mini-batch losses) is also \(L\)-smooth (assuming a common Lipschitz constant for simplicity). Thus:
\begin{equation}
\|\nabla f(\Theta; b) - \nabla f(\Theta'; b)\| \leq L \|\Theta - \Theta'\|,
\label{eq:lipschitz}
\end{equation}
for any \(\Theta, \Theta' \in \mathbb{R}^d\) and mini-batch \(b\), since the stochastic gradient is an average of individual gradients, each satisfying the smoothness condition.

The fractional term in \eqref{eq:alpha_t2} is a scalar function of the parameter difference \(\Delta_t = \|\Theta_t^{(k)} - \Theta_{t-1}^{(k)}\|\). For \(0 < \alpha \leq 1\):
\begin{itemize}
\item \(\Gamma(2-\alpha)\) is a positive constant, finite for \(\alpha \in (0,1]\);
\item \(\left( \Delta_t + \delta \right)^{1-\alpha}\) is positive and continuous, as \(\delta > 0\) ensures \(\Delta_t + \delta > 0\) and \(1-\alpha \ge 0\).
\end{itemize}
Thus, \(\alpha_t^{(k)}\) is a positive, bounded scalar (with \(\alpha_t^{(k)} \le 2/L\) enforced by an appropriate choice of \(\mu_0\) and \(\delta\)), which scales the gradient but does not modify its functional form or smoothness properties. The smoothness condition \eqref{eq:grad_lipschitz} remains unchanged, as it depends solely on \(f\)’s gradient, not on the step size or the algorithm.

To connect this with the fractional interpretation, consider the approximation:
\begin{equation}
D_t^\alpha f(\Theta_t^{(k)}) \approx \frac{\nabla f(\Theta_t^{(k)}; b)}{\Gamma(2-\alpha)} \left( \|\Theta_t^{(k)} - \Theta_{t-1}^{(k)}\| + \delta \right)^{1-\alpha}.
\label{eq:frac_deriv_approx}
\end{equation}
Here, \(\nabla f(\Theta_t^{(k)}; b)\) is the standard gradient, and the fractional scaling approximates the nonlocal behavior of the Caputo derivative. While the true fractional derivative \(D_t^\alpha f\) involves an integral over past states, the FOSGD implementation uses the current gradient scaled by a history-dependent factor. This scaling does not affect the smoothness of \(f\), as it operates on the output of \(\nabla f\), not on its definition. Therefore, the smoothness assumption holds for FOSGD, and the convergence proof in Theorem~1 remains valid for the fractional updates.
\end{proof}

In FL, the objective \(f(\Theta) = \sum_{k=1}^K \frac{n_k}{n} F_k(\Theta)\) aggregates local objectives, each assumed \(L\)-smooth, as is standard in FL literature. The smoothness assumption is critical for ensuring that local gradients \(\nabla F_k(\Theta)\) are well-behaved, even in non-IID settings. FOFedAvg’s use of FOSGD does not alter this property, as shown in Proposition~\ref{prop:smoothness_fosgd}. The fractional updates enhance stability in non-IID scenarios by incorporating memory effects, but these effects operate within the framework of the original smoothness assumption.

{\color{black}
It is important to emphasize that the fractional scaling in $\alpha_t^{(k)}$ affects only the \emph{magnitude} of the update step, not the direction of the expected gradient. In particular, under the usual unbiasedness assumption for the stochastic gradient, we still have
\[
\mathbb{E}_b\bigl[\nabla \ell(\Theta_t^{(k)}; b)\bigr] = \nabla F_k(\Theta_t^{(k)}),
\]
so the structural deviation between a local client and the global objective remains
\[
\nabla F_k(\Theta_t^{(k)}) - \nabla f(\Theta_t^{(k)}),
\]
exactly as in standard local-SGD. The fractional term modifies how aggressively this direction is followed over time (via a history-dependent step size), but it does not by itself remove or re-weight the underlying non-IID bias. This clarifies that our use of fractional calculus builds a memory-aware step-size schedule on top of the standard smoothness framework rather than changing the fundamental gradient structure.
}

\subsection{Drawbacks and Bias Analysis of FOSGD}
\label{subsec:fosgd_drawbacks}

We provide an analysis to identify FOSGD’s limitations, evaluate the potential for historical gradients to introduce bias, and discuss their implications for federated learning (FL), supported by theoretical derivations and references to fractional calculus and FL literature. The FOSGD algorithm modifies the standard SGD update \eqref{eq:theta_update_simple} to incorporate a fractional-order gradient approximation. Compablack to SGD, FOSGD introduces additional hyperparameters: the fractional order \(\alpha\), initial learning rate \(\mu_0\), and regularization constant \(\delta\). Tuning these is critical: small \(\alpha\) amplifies historical effects and can slow adaptation, while large \(\alpha\) approaches standard SGD but loses most of the memory benefits. Despite these drawbacks, FOSGD’s memory effects and long-range dependencies (Section~\ref{subsec:memory_analysis}) provide advantages in non-IID FL settings, stabilizing updates and improving robustness to data heterogeneity, as discussed below.

\subsubsection{Potential Bias from Historical Gradients}

This section revises the analysis of potential bias introduced by historical gradients in Fractional-Order Stochastic Gradient Descent (FOSGD) within the Fractional-Order Federated Averaging (FOFedAvg) algorithm, ensuring a rigorous and clear evaluation for federated learning (FL). FOSGD modifies the standard SGD update \eqref{eq:theta_update_simple} with the fractional-order update \eqref{eq:theta_update}. FOSGD’s reliance on historical gradients via \(\|\Theta_t^{(k)} - \Theta_{t-1}^{(k)}\|\) introduces memory effects that enhance stability in non-IID FL settings (Section~\ref{subsec:memory_analysis}), but raises questions about potential bias.

In this context, “bias” refers to the deviation of the expected update direction from the true global gradient of the objective
\[
f(\Theta) = \sum_{k=1}^K \frac{n_k}{n} F_k(\Theta).
\]
In FL, non-IID data causes local gradients \(\nabla F_k(\Theta)\) to differ from \(\nabla f(\Theta)\), introducing a structural bias that can lead to client drift. We analyze whether FOSGD’s historical gradients exacerbate or alleviate this bias.

The FOSGD update’s effective direction for client \(k\) at time \(t\) is \(\alpha_t^{(k)} \nabla \ell(\Theta_t^{(k)}; b_t)\), where \(\alpha_t^{(k)}\) is the history-dependent step size and \(\nabla \ell(\Theta_t^{(k)}; b_t)\) is the mini-batch gradient. The stochastic gradient is an unbiased estimator of the local gradient:
\begin{equation}
\mathbb{E}_{b_t} \left[ \nabla \ell(\Theta_t^{(k)}; b_t) \right] = \nabla F_k(\Theta_t^{(k)}).
\label{eq:unbiased_estimator_fosgd}
\end{equation}
Define the bias of the expected update direction relative to the global gradient as
\begin{equation}
\text{Bias}_t^{(k)} 
= \mathbb{E}_{b_t}\bigl[\alpha_t^{(k)} \nabla \ell(\Theta_t^{(k)}; b_t)\bigr] - \nabla f(\Theta_t^{(k)}).
\label{eq:bias_definition_correct}
\end{equation}
Using \eqref{eq:unbiased_estimator_fosgd}, we obtain
\begin{equation}
\begin{aligned}
\text{Bias}_t^{(k)}
&= \alpha_t^{(k)} \nabla F_k(\Theta_t^{(k)}) - \nabla f(\Theta_t^{(k)}) \\
&= \alpha_t^{(k)}\bigl(\nabla F_k(\Theta_t^{(k)}) - \nabla f(\Theta_t^{(k)})\bigr)
  + \bigl(\alpha_t^{(k)} - 1\bigr)\nabla f(\Theta_t^{(k)}).
\end{aligned}
\label{eq:bias_decomposition}
\end{equation}
The first term in \eqref{eq:bias_decomposition} is the familiar non-IID bias \(\nabla F_k - \nabla f\), scaled by \(\alpha_t^{(k)}\); the second term comes from the mismatch between the scalar \(\alpha_t^{(k)}\) and \(1\). As the learning-rate schedule \(\mu_t = \frac{\mu_0}{\sqrt{t+1}}\) decays and \(\alpha_t^{(k)}\) remains bounded, the factor \(|\alpha_t^{(k)} - 1|\) becomes small in the regime where Theorem~1 applies, and the dominant structural bias remains the usual non-IID gap \(\nabla F_k - \nabla f\).

Historical gradients influence \(\alpha_t^{(k)}\) through the trajectory term:
\begin{equation}
\|\Theta_t^{(k)} - \Theta_{t-1}^{(k)}\| = \bigl\| -\alpha_{t-1}^{(k)} \nabla \ell(\Theta_{t-1}^{(k)}; b_{t-1}) \bigr\|,
\label{eq:parameter_diff}
\end{equation}
so that \(\alpha_t^{(k)}\) depends recursively on past gradients. Conceptually, as discussed in Appendix~\ref{subsec:memory_analysis}, the fractional update can be interpreted as implementing a power-law–weighted moving average of past gradients,
\begin{equation}
D_t^\alpha \ell(\Theta_t^{(k)}) \approx \sum_{j=0}^t w_j \nabla \ell(\Theta_{t-j}^{(k)}; b_{t-j}),
\qquad
w_j \propto j^{-(1+\alpha)},
\label{eq:frac_deriv_approx_powerlaw}
\end{equation}
so that temporally persistent components in the gradient sequence are reinforced while rapidly fluctuating components are smoothed.

\begin{proposition}
\label{prop:bias_fosgd}
Let \(f: \mathbb{R}^d \to \mathbb{R}\) be an \(L\)-smooth, lower-bounded (potentially non-convex) objective, and consider the FOSGD update applied to a client’s local objective \(F_k\) in FL with step sizes \(\alpha_t^{(k)}\) satisfying the conditions of Theorem~1. Then:
\begin{itemize}
    \item The fractional history does not introduce additional structural non-IID bias beyond the usual term \(\nabla F_k(\Theta_t^{(k)}) - \nabla f(\Theta_t^{(k)})\); it merely rescales it via \(\alpha_t^{(k)}\) and adds a vanishing global term \((\alpha_t^{(k)} - 1)\nabla f(\Theta_t^{(k)})\).
    \item Under standard bounded-variance assumptions and a decaying learning-rate schedule \(\mu_t = \frac{\mu_0}{\sqrt{t+1}}\), the power-law temporal averaging implied by \eqref{eq:frac_deriv_approx_powerlaw} can blackuce the variance and drift of the effective update direction, without preventing convergence to stationary points.
\end{itemize}
\end{proposition}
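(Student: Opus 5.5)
The plan treats the two bullets separately. The first is essentially algebraic. The second combines the descent argument of Theorem~\ref{thm:main} with a standard stochastic-approximation bound.

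\emph{First bullet.} The key observation is that $\alpha_t^{(k)}$ in \eqref{eq:alpha_t2} is measurable with respect to the history $\mathcal{F}_{t-1}$. This history is generated by $b_0,\dots,b_{t-1}$, since $\alpha_t^{(k)}$ depends only on $\Theta_t^{(k)}$ and $\Theta_{t-1}^{(k)}$, and it is independent of the fresh mini-batch $b_t$. Conditioning on $\mathcal{F}_{t-1}$, I would pull $\alpha_t^{(k)}$ out of the expectation in \eqref{eq:bias_definition_correct}. Using \eqref{eq:unbiased_estimator_fosgd}, this reproduces the decomposition \eqref{eq:bias_decomposition} exactly. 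The only directional discrepancy is then $\nabla F_k-\nabla f$, multiplied by a positive scalar, so the history cannot rotate the expected direction.

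For the ``vanishing'' claim, I would use $\alpha_t^{(k)}\le \bar\alpha$ together with $\mu_t\to0$. I would also need a bound on $\|\Theta_t^{(k)}-\Theta_{t-1}^{(k)}\|$, which I would obtain from a bounded-gradient or bounded-iterate assumption. With that bound, $\alpha_t^{(k)}\le \mu_t(G\bar\alpha+\delta)^{1-\alpha}/\Gamma(2-\alpha)\to 0$. Hence the term $(\alpha_t^{(k)}-1)\nabla f$ reduces to the deterministic scaling $-\nabla f$, plus a correction that vanishes. I would state precisely that the residual is a \emph{scalar multiple} of $\nabla f$, so it introduces no structural bias. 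Literal vanishing of $|\alpha_t^{(k)}-1|$ is not what the argument delivers, and I would flag this point explicitly.

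\emph{Second bullet.} I would rerun the smoothness inequality \eqref{eq:Lsmooth} with stochastic gradients, taking conditional expectations. Writing $\mathbb{E}\|\nabla\ell\|^2\le\|\nabla F_k\|^2+\sigma^2$ gives
\[
\mathbb{E}[F_k(\Theta_{t+1})\mid\mathcal{F}_{t-1}]\le F_k(\Theta_t)-\alpha_t\bigl(1-\tfrac{L}{2}\alpha_t\bigr)\|\nabla F_k(\Theta_t)\|^2+\tfrac{L}{2}\alpha_t^2\sigma^2 .
\]
For the lower bound, $\delta>0$ gives $\alpha_t\ge \mu_t\delta^{1-\alpha}/\Gamma(2-\alpha)$. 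For the upper bound, $\alpha_t\le c\,\mu_t$. Telescoping as in Theorem~\ref{thm:main} then yields
\[
\sum_t \mu_t\,\mathbb{E}\|\nabla F_k\|^2\le C\Bigl(1+\sigma^2\sum_t\mu_t^2\Bigr).
\]
With $\mu_t=\mu_0/\sqrt{t+1}$, we have $\sum\mu_t=\infty$ and $\sum_{t\le T}\mu_t^2=O(\log T)$. This gives $\min_{t\le T}\mathbb{E}\|\nabla F_k\|^2=O(\log T/\sqrt T)$, which is the statement that the method does not prevent convergence to stationary points. To handle the global $f$, I would add a bounded-dissimilarity assumption $\|\nabla F_k-\nabla f\|\le\zeta$. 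This yields convergence up to an $O(\zeta^2)$ neighbourhood, in line with local-SGD results.

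The variance-reduction claim is the step I expect to be hardest. Two facts are available. The weights in \eqref{eq:frac_deriv_approx_powerlaw} are summable for $\alpha>0$, and for independent mini-batch noise the variance of $\sum_j w_j\xi_{t-j}$ equals $\sigma^2\sum_j w_j^2/(\sum_j w_j)^2$, which is strictly less than $\sigma^2$ whenever more than one weight is nonzero. The difficulty is that the implemented update \eqref{eq:theta_update} does not literally perform that averaging. I would therefore prove the bullet only in its hedged ``can reduce'' form: first for the idealized averaged direction using this computation, and then showing that the single-step surrogate inherits the $O(\mu_t^2\sigma^2)$ variance contribution above.
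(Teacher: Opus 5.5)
You handle the first bullet the same way the paper does: both read it off \eqref{eq:bias_decomposition}. Your remark that $\alpha_t^{(k)}$ is fixed by $b_0,\dots,b_{t-1}$ supplies the justification the paper silently relies on when it pulls $\alpha_t^{(k)}$ out of $\mathbb{E}_{b_t}$. Your flag on ``vanishing'' is correct, and it applies to the paper's proof too. The paper argues that $\mu_t\to 0$ forces $\alpha_t^{(k)}\to 0$, so $(\alpha_t^{(k)}-1)\nabla f(\Theta_t^{(k)})$ tends to $-\nabla f(\Theta_t^{(k)})$. It then concludes only that the term ``does not accumulate unboundedly.'' Neither argument gives the literal claim. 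Your scalar-multiple reformulation is the honest version: the term is an artefact of comparing the scaled step $\alpha_t^{(k)}\nabla\ell$ with the unscaled $\nabla f$.

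For the second bullet your route is genuinely different. The paper does no stochastic analysis. It quotes the deterministic inequality $(*)$, asserts that the power-law weights smooth the noise, and invokes Theorem~\ref{thm:main}, which is about exact gradient descent on $f$. Your argument uses a conditional descent lemma on $F_k$ with the $\sigma^2$ term, the two-sided bound $\mu_t\delta^{1-\alpha}/\Gamma(2-\alpha)\le\alpha_t^{(k)}\le c\,\mu_t$, the $O(\log T/\sqrt{T})$ rate, and a passage to $f$ through $\|\nabla F_k-\nabla f\|\le\zeta$. This actually handles both the noise and the $F_k$-versus-$f$ mismatch.

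The cost is hypotheses the statement does not list:
\begin{itemize}
\item $L$-smoothness and lower-boundedness of $F_k$;
\item a gradient or iterate bound to get $\alpha_t^{(k)}=O(\mu_t)$;
\item the dissimilarity constant $\zeta$.
\end{itemize}
You also get a conclusion for $f$ only up to $O(\zeta^2)$. That weaker conclusion is the correct one. Local FOSGD on $F_k$ approaches stationary points of $F_k$, which in the non-IID case need not be stationary for $f$. So the paper's appeal to Theorem~\ref{thm:main} for $\liminf_t\|\nabla f(\Theta_t)\|=0$ overreaches.

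On variance and drift, neither argument proves anything about the implemented update. Your ``inherits $O(\mu_t^2\sigma^2)$'' step is not a reduction, since plain SGD with step $\mu_t$ has the same order. Only the idealized computation for the normalized average, $\sigma^2\sum_j w_j^2/(\sum_j w_j)^2<\sigma^2$, supports the hedged ``can reduce.'' Drift reduction is asserted rather than shown in both arguments.
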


\begin{proof}
The decomposition \eqref{eq:bias_decomposition} shows explicitly that the non-IID structural deviation \(\nabla F_k - \nabla f\) remains the same as in standard local-SGD and is simply scaled by \(\alpha_t^{(k)}\). The additional term \((\alpha_t^{(k)} - 1)\nabla f(\Theta_t^{(k)})\) is controlled by keeping \(\alpha_t^{(k)}\) bounded and using the decaying schedule \(\mu_t = \frac{\mu_0}{\sqrt{t+1}}\), which implies \(\alpha_t^{(k)} \to 0\) asymptotically. Thus, this term does not accumulate unboundedly over time.
\\
The sufficient decrease condition
\[
f(\Theta_{t+2}) \le f(\Theta_{t+1}) - \kappa_t \|\nabla f(\Theta_{t+1})\|^2,
\quad
\kappa_t = \alpha_t \bigl(1 - \tfrac{L}{2}\alpha_t\bigr),
\]
holds whenever \(\alpha_t \le 2/L\). This condition is enforced in FOSGD by choosing \(\mu_t\) and \(\delta\) so that \(\alpha_t^{(k)}\) remains bounded for all \(t\). Under the usual bounded-variance assumptions on the stochastic gradient noise, the power-law weights in \eqref{eq:frac_deriv_approx_powerlaw} act as a temporal smoother, averaging out high-frequency noise and blackucing drift, while the structural non-IID term \(\nabla F_k - \nabla f\) persists but is not amplified in an unbounded way. The convergence result of Theorem~\ref{thm:main} then implies that
\[
\liminf_{t \to \infty} \|\nabla f(\Theta_t)\| = 0,
\]
so every limit point is stationary, and the presence of fractional history does not prevent convergence.
\end{proof}

{\color{black}
\subsubsection{Outlier Gradients vs Underrepresented Data: Limitations and Assumptions}
\label{subsec:outlier_underrepresented_limitations}

There is an important question: can FOFedAvg distinguish between gradients arising from underrepresented (minority) data and gradients arising from outliers or corrupted samples? In the current design, the answer is \emph{no}. Algorithmically, both types of gradients enter the update solely through the stochastic mini-batch gradient and the trajectory term $\|\Theta_t^{(k)} - \Theta_{t-1}^{(k)}\|$ that defines $\alpha_t^{(k)}$.
\\
To make this explicit, let $g_t^{(k)} = \nabla \ell(\Theta_t^{(k)}; b_t)$ denote the mini-batch gradient for client $k$ at local step $t$. We may decompose it as
\[
g_t^{(k)} = \bar{G}_t + \Delta_k(\Theta_t^{(k)}) + \xi_t^{(k)},
\]
where $\bar{G}_t$ is the global gradient, $\Delta_k$ captures the systematic non-IID bias of client $k$, and $\xi_t^{(k)}$ collects stochastic noise, including outliers. As discussed in Appendix~\ref{subsec:memory_analysis}, the fractional update implicitly uses a power-law weighted average of past gradients,
\[
D_t^\alpha \ell(\Theta_t^{(k)}) \;\approx\; \sum_{j=0}^t w_j\, g_{t-j}^{(k)}, \qquad w_j \propto j^{-(1+\alpha)},
\]
so that temporally persistent components in $\bar{G}_t + \Delta_k(\Theta_t^{(k)})$ are reinforced, while uncorrelated, sporadic components in $\xi_t^{(k)}$ are averaged out under the usual bounded-variance assumptions.
\\
Under these assumptions, gradients from underrepresented but consistently observed data can benefit from the long-range memory, because their contribution persists over time, whereas isolated outliers are damped. However, this mechanism is \emph{agnostic} to the semantic origin of the gradients: if harmful or adversarial gradients are themselves temporally persistent and aligned with a minority direction, the power-law memory can propagate their influence just as it does for genuinely informative minority samples. Therefore, our theoretical claims should be interpreted as follows:
\begin{itemize}
    \item FOFedAvg does not introduce additional structural bias beyond local-SGD and can blackuce the variance and drift of updates by heavy-tailed temporal averaging under standard stochastic assumptions (bounded-variance stochastic gradients and $L$-smooth local objectives).
    \item It does not perform explicit outlier detection or semantic discrimination between “good” and “bad” gradients; it is a temporal filter whose effect depends on the statistics of the gradient sequence.
\end{itemize}
Designing a robust fractional-order FL scheme that explicitly separates outliers from minority gradients (e.g., via robust aggregation, adaptive clipping, or influence functions in fractional space) is an interesting direction for future work that we now highlight explicitly.
}

\subsection{Memory Effects and Long-Range Dependencies}
\label{subsec:memory_analysis}

The Fractional-Order Federated Averaging (FOFedAvg) algorithm leverages fractional-order stochastic gradient descent (FOSGD) to address challenges in federated learning (FL), particularly with non-independent and identically distributed (non-IID) data. A key claim of this work is that FOFedAvg effectively captures \emph{memory effects} and \emph{long-range dependencies}, enhancing convergence and stability in heterogeneous settings. We provide a formal theoretical discussion to substantiate these claims, focusing on the mathematical properties of fractional-order gradients and their implications for optimization in FL.
\\
In the context of optimization, \emph{memory effects} refer to the ability of an algorithm to incorporate information from past iterates or gradients into its current update, enabling it to retain historical trends. \emph{Long-range dependencies} describe the influence of distant past states (e.g., gradients from earlier iterations) on the current optimization step, allowing the algorithm to model temporal correlations over extended periods. In FL, these properties are critical for stabilizing training across clients with non-IID data, where local gradients may diverge significantly from the global objective.
\\
Standard stochastic gradient descent (SGD), as used in FedAvg, updates parameters as in \eqref{eq:theta_update_simple}, where \(\eta\) is the learning rate and \(\nabla \ell(\Theta_t; b)\) is the gradient on a mini-batch \(b\). This update depends solely on the current gradient, lacking memory of past states. In contrast, FOSGD introduces a fractional-order gradient update:
\begin{equation}
\begin{split}
\Theta_{t+1}^{(k)} =\ & \Theta_t^{(k)} \\
& - \frac{\mu_t}{\Gamma(2-\alpha)} \left( \|\Theta_t^{(k)} - \Theta_{t-1}^{(k)}\| + \delta \right)^{1-\alpha} \nabla \ell(\Theta_t^{(k)}; b_t),
\end{split}
\label{eq:theta_update_combined}
\end{equation}
where \(0 < \alpha \leq 1\), \(\mu_t = \frac{\mu_0}{\sqrt{t+1}}\) is the learning rate, \(\delta > 0\) is a regularization constant, and \(\Gamma(\cdot)\) is the Gamma function. The fractional term \(\left( \|\Theta_t^{(k)} - \Theta_{t-1}^{(k)}\| + \delta \right)^{1-\alpha}\) introduces a dependence on past parameter changes, which we analyze below to clarify the notion of memory retention and long-range dependency modeling.
\\
The fractional-order gradient in FOSGD is motivated by the Caputo fractional derivative \(D_t^\alpha \ell(\Theta_t)\), which, unlike integer-order derivatives, is \emph{nonlocal}. For a scalar function \(f(t)\), the Caputo fractional derivative is
\begin{equation}
{}^C D_t^\alpha f(t) = \frac{1}{\Gamma(n-\alpha)} \int_0^t \frac{f^{(n)}(\tau)}{(t - \tau)^{\alpha - n + 1}} \, d\tau, \quad n-1 < \alpha < n.
\label{eq:caputo_derivative_memory}
\end{equation}
For \(0 < \alpha < 1\), this simplifies to
\begin{equation}
{}^C D_t^\alpha f(t) = \frac{1}{\Gamma(1-\alpha)} \int_0^t \frac{f'(\tau)}{(t - \tau)^\alpha} \, d\tau.
\label{eq:caputo_simplified_memory}
\end{equation}
The integral form shows that the fractional derivative at time \(t\) depends on the function’s first derivative \(f'(\tau)\) for all \(\tau \in [0, t]\), weighted by the kernel \((t - \tau)^{-\alpha}\). This kernel assigns greater weight to recent states (small \(t - \tau\)) but retains contributions from earlier states, encoding a memory effect.

In FOSGD, the fractional gradient approximation is derived using a Taylor series expansion:
\begin{equation}
D_t^\alpha \ell(\Theta_t) = \sum_{i=1}^\infty \frac{\ell^{(i)}(\Theta_0)}{\Gamma(i+1-\alpha)} (t - t_0)^{i-\alpha}, \quad 0 < \alpha < 1.
\label{eq:taylor_approx_memory}
\end{equation}
For practical implementation, FOFedAvg does \emph{not} evaluate the full series or the full history integral. Instead, it uses the local surrogate
\begin{equation}
D_t^\alpha \ell(\Theta_t^{(k)}) \approx \frac{\nabla \ell(\Theta_t^{(k)}; b_t)}{\Gamma(2-\alpha)} \left( \|\Theta_t^{(k)} - \Theta_{t-1}^{(k)}\| + \delta \right)^{1-\alpha},
\label{eq:frac_deriv_approx_memory}
\end{equation}
which compresses the effect of past iterates into the single trajectory term \(\|\Theta_t^{(k)} - \Theta_{t-1}^{(k)}\|\).

The term \(\|\Theta_t^{(k)} - \Theta_{t-1}^{(k)}\|\) represents the magnitude of the previous update, which is itself a function of past gradients:
\begin{equation}
\Theta_t^{(k)} - \Theta_{t-1}^{(k)} 
= - \frac{\mu_{t-1}}{\Gamma(2-\alpha)} 
  \left( \|\Theta_{t-1}^{(k)} - \Theta_{t-2}^{(k)}\| + \delta \right)^{1-\alpha} 
  \mathcal{Q}.
\label{eq:theta_diff_memory}
\end{equation}
where $ = \nabla \ell(\Theta_{t-1}^{(k)}; b_{t-1})$. Recursively, \(\Theta_t^{(k)} - \Theta_{t-1}^{(k)}\) depends on \(\Theta_{t-1}^{(k)} - \Theta_{t-2}^{(k)}\), and so on, forming a chain of historical dependencies. The fractional exponent \(1-\alpha\) modulates the influence of these past updates, with smaller \(\alpha\) increasing the weight of earlier iterates. This recursive structure ensures that the update at time \(t+1\) indirectly incorporates information from multiple prior gradients, formalizing a form of memory effect even though only a single previous iterate is stoblack explicitly.

To quantify this, consider the effective step size in FOSGD:
\begin{equation}
\alpha_t^{(k)} = \frac{\mu_t}{\Gamma(2-\alpha)} \left( \|\Theta_t^{(k)} - \Theta_{t-1}^{(k)}\| + \delta \right)^{1-\alpha}.
\end{equation}
The factor \(\left( \|\Theta_t^{(k)} - \Theta_{t-1}^{(k)}\| + \delta \right)^{1-\alpha}\) scales the gradient based on the magnitude of recent parameter changes, which themselves encode past gradient information. For \(0 < \alpha < 1\), the exponent \(1-\alpha > 0\) amplifies the impact of small parameter changes, ensuring that historical trends (e.g., consistent gradient directions) persist in the update. As \(\alpha \to 1\), the term approaches $1$, and FOSGD blackuces to a (decaying-step-size) first-order method, losing most of its additional memory effect. Thus, the fractional order \(\alpha \in (0,1)\) directly controls the strength of memory retention.

Long-range dependencies can be understood by analogy with the discrete Grünwald–Letnikov derivative:
\begin{equation}
\Delta_h^\alpha f(t) = \lim_{h \to 0} \frac{1}{h^\alpha} \sum_{j=0}^{\lfloor t/h \rfloor} (-1)^j \binom{\alpha}{j} f(t - jh),
\label{eq:grunwald_letnikov_memory}
\end{equation}
where the binomial coefficients decay as \(j\) increases, but at a slower-than-exponential rate. In a similar spirit, the recursive dependence in \eqref{eq:theta_diff_memory} allows the fractional update to be \emph{interpreted} as implementing a power-law–weighted moving average of past gradients:
\begin{equation}
D_t^\alpha \ell(\Theta_t^{(k)}) \;\approx\; \sum_{j=0}^{t} w_j \,\nabla \ell(\Theta_{t-j}^{(k)}; b_{t-j}), 
\quad
w_j \propto (j+1)^{-(1+\alpha)},
\label{eq:weighted_gradients_memory}
\end{equation}
where the weights \(w_j\) decay with \(j\) in a heavy-tailed fashion for \(0 < \alpha \le 1\). This power-law decay ensures that gradients from early iterations (\(j \gg 1\)) retain a non-negligible influence on the current update, modeling long-range dependencies. In contrast, standard SGD corresponds to the degenerate case \(w_0 = 1, w_j = 0\) for \(j>0\), which ignores past gradients.

These memory effects and long-range dependencies are particularly relevant in FL. In non-IID settings, local objective functions \(F_k(\Theta) = \frac{1}{n_k} \sum_{i \in \mathcal{P}_k} \ell(\Theta; x_i, y_i)\) differ across clients \(k\), leading to biased local gradients \(\nabla F_k(\Theta)\) and potential gradient drift. FOFedAvg mitigates this at the algorithmic level via the following mechanisms:

\begin{itemize}
    \item \emph{Stabilizing updates via temporal averaging.} The power-law weights $w_j \propto (j+1)^{-(1+\alpha)}$ in \eqref{eq:weighted_gradients_memory} implement a heavy-tailed temporal average of past gradients. Under the standard bounded-variance assumption on stochastic gradients, this averaging blackuces the variance of the effective update direction and makes it less likely that any single aberrant local step dominates the trajectory. In non-IID FL, this helps stabilize training by smoothing out high-frequency fluctuations induced by heterogeneous local data.
    \item \emph{Preserving long-range minority signals.} Because the weights $w_j$ decay slowly in $j$, contributions from early rounds can remain influential for many steps. When minority patterns (e.g., rare classes or rare clinical sub-populations) appear consistently over time on some clients, their gradients contribute persistently to the fractional update, rather than being overwritten immediately by recent majority data. This provides a mechanism for preserving diversity across clients under non-IID partitions.
    \item \emph{Adapting to heterogeneity without semantic classification.} The memory effect in FOFedAvg, arising from the recursive dependence of $\|\Theta_t^{(k)} - \Theta_{t-1}^{(k)}\|$ on past updates, adapts the effective step size to the local dynamics of each client. Importantly, this mechanism is \emph{semantic-agnostic}: it does not attempt to classify gradients as outliers or minority; instead, it biases the dynamics toward directions that are temporally persistent. As discussed in Subsection~\ref{subsec:outlier_underrepresented_limitations}, this can be beneficial when outliers are rare and uncorrelated, but it is not a substitute for robust or adversarial defenses.
\end{itemize}

The nonlocal dependence on $\|\Theta_t^{(k)} - \Theta_{t-1}^{(k)}\|$ also provides a simple mechanism to handle intermittent communication. When a client performs many local steps between two server synchronizations, the cumulative change in its parameters tends to increase, which in turn modulates the trajectory factor $(\|\Theta_t^{(k)} - \Theta_{t-1}^{(k)}\| + \delta)^{1-\alpha}$ in \eqref{eq:theta_update_combined}. Together with the decaying learning rate $\mu_t = \mu_0 / \sqrt{t+1}$, this induces an implicit adaptation of the effective step size on such clients, preventing excessively large global jumps caused by long unsynchronized local trajectories. In this sense, the nonlocal “memory” of FOFedAvg differs from plain local-SGD, which uses a fixed learning rate regardless of how many local steps have been taken since the last aggregation.

\begin{remark}
The fractional order \(\alpha \in (0,1]\) provides a flexible mechanism to balance memory effects and computational simplicity. For \(\alpha = 1\), FOFedAvg blackuces to a standard first-order method (FedAvg with a decaying step size), losing most of the additional memory effects. The range \(0 < \alpha < 1\) is where FOFedAvg’s ability to model long-range temporal dependencies through heavy-tailed temporal averaging is most pronounced, making it particularly suited for non-IID FL scenarios.
\end{remark}

\subsection{Impact of the Fractional Order \texorpdfstring{$\alpha$}{alpha}}
\label{subsec:higher_alpha}

In this subsection, we analyze how the fractional order $\alpha$ (restricted to $0 < \alpha \leq 1$) influences the behavior of FOSGD and, consequently, FOFedAvg, with a focus on convergence speed and stability in federated learning (FL) scenarios with varying degrees of data heterogeneity.

\subsubsection{Role of Fractional Order \texorpdfstring{$\alpha$}{alpha} in FOSGD}

FOFedAvg employs FOSGD with the local update
\begin{equation}
\begin{split}
\Theta_{t+1}^{(k)} =\ & \Theta_t^{(k)} \\
& - \frac{\mu_t}{\Gamma(2-\alpha)} 
   \left( \|\Theta_t^{(k)} - \Theta_{t-1}^{(k)}\| + \delta \right)^{1-\alpha} 
   \nabla \ell(\Theta_t^{(k)}; b_t),
\end{split}
\end{equation}
where $\mu_t = \mu_0 / \sqrt{t+1}$ is the learning rate schedule, $\delta > 0$ is a small regularization constant, and $0 < \alpha \leq 1$ is the fractional order. The scalar
\[
\alpha_t^{(k)} 
  = \frac{\mu_t}{\Gamma(2-\alpha)} 
    \left( \|\Theta_t^{(k)} - \Theta_{t-1}^{(k)}\| + \delta \right)^{1-\alpha}
\]
acts as an effective, history-dependent step size that modulates the standard stochastic gradient $\nabla \ell(\Theta_t^{(k)}; b_t)$.

For $\alpha = 1$, we recover a classical first-order update:
\begin{equation}
\Theta_{t+1}^{(k)} = \Theta_t^{(k)} - \mu_t \nabla \ell(\Theta_t^{(k)}; b_t),
\end{equation}
because $\Gamma(2-1) = \Gamma(1) = 1$ and $(\|\Theta_t^{(k)} - \Theta_{t-1}^{(k)}\| + \delta)^{1-1} = 1$. Thus, the fractional mechanism disappears and FOSGD blackuces to a decaying-step-size variant of SGD. For $0 < \alpha < 1$, the exponent $1-\alpha > 0$ makes $\alpha_t^{(k)}$ depend explicitly on the trajectory term
\[
\Delta_t^{(k)} = \|\Theta_t^{(k)} - \Theta_{t-1}^{(k)}\| + \delta,
\]
encoding the memory effects analyzed in Section~\ref{subsec:memory_analysis}.

The convergence analysis in Theorem~1 shows that, under the standard smoothness and boundedness assumptions, the sequence $\{\Theta_t\}$ generated by
\[
\Theta_{t+2} = \Theta_{t+1} - \alpha_t \nabla f(\Theta_{t+1})
\]
satisfies
\[
\liminf_{t \to \infty} \|\nabla f(\Theta_t)\| = 0,
\]
provided that the effective step sizes obey $\alpha_t \leq 2/L$ for all $t$. The key sufficient decrease inequality,
\begin{equation}
f(\Theta_{t+1}) 
\;\leq\; 
f(\Theta_t) - \kappa_t \|\nabla f(\Theta_t)\|^2,
\qquad
\kappa_t = \alpha_t \Bigl(1 - \frac{L}{2}\alpha_t\Bigr),
\end{equation}
indicates that, for fixed $L$, larger $\alpha_t$ (as long as $\alpha_t < 2/L$) leads to larger $\kappa_t$ and hence a stronger per-iteration decrease in expectation. In other words, within the admissible stability range, a larger effective step size can accelerate convergence in terms of objective decrease. Now consider how $\alpha$ shapes $\alpha_t^{(k)}$:
\[
\alpha_t^{(k)} 
= \frac{\mu_t}{\Gamma(2-\alpha)} \, \bigl(\Delta_t^{(k)}\bigr)^{1-\alpha}.
\]
As $\alpha \to 1$, we have
\[
\Gamma(2-\alpha) \to \Gamma(1) = 1, 
\qquad
\bigl(\Delta_t^{(k)}\bigr)^{1-\alpha} \to 1,
\]
so $\alpha_t^{(k)} \to \mu_t$ and the update behaves increasingly like standard SGD with a decaying step size. For $\alpha < 1$, the exponent $1-\alpha > 0$ makes $\alpha_t^{(k)}$ more sensitive to $\Delta_t^{(k)}$: for small parameter changes (i.e., $\Delta_t^{(k)}$ close to $\delta$), the term $(\Delta_t^{(k)})^{1-\alpha}$ can be significantly less than $1$, effectively shrinking the step size and reinforcing the memory effect. For larger parameter changes, $(\Delta_t^{(k)})^{1-\alpha}$ can exceed $1$, leading to more aggressive steps. From a practical standpoint, this leads to the following qualitative trade-off:
\begin{itemize}
    \item For smaller $\alpha$ (further from $1$), the history term $(\Delta_t^{(k)})^{1-\alpha}$ plays a stronger role, producing a more adaptive and “memory-dominated’’ step size. This can stabilize noisy or highly heterogeneous updates but may slow convergence by making the effective step sizes too conservative in later stages.
    \item For larger $\alpha$ (closer to $1$ but still $<1$), the dependence on $\Delta_t^{(k)}$ is weaker, and $\alpha_t^{(k)}$ stays closer to the baseline $\mu_t$ across iterations. When local gradients are reasonably aligned with the global objective—as is often the case after an initial transient this leads to larger $\kappa_t$ on average and hence faster decrease of $f(\Theta_t)$ under the same stability constraint.
\end{itemize}

Formally, the cumulative decrease over $T$ iterations satisfies
\[
f(\Theta_{T+1}) 
\;\leq\;
f(\Theta_1) - \sum_{t=0}^{T-1} \kappa_t \|\nabla f(\Theta_t)\|^2,
\]
so configurations that produce larger $\kappa_t$ (while keeping $\alpha_t \le 2/L$) tend to yield faster convergence. Our empirical sensitivity studies support this interpretation: across multiple non-IID benchmarks, we observe that fractional orders $\alpha$ close to $1$ (but strictly less than $1$) typically deliver the best trade-off between memory effects and convergence speed. Smaller $\alpha$ values amplify long-range memory but can over-smooth the dynamics, whereas values very close to $1$ behave similarly to SGD and lose much of the fractional benefit.
\\
Overall, this analysis clarifies that the observed empirical improvements for higher $\alpha$ within $(0,1]$ are consistent with the theoretical framework: larger $\alpha$ weakens the history-induced shrinkage of the step size, yielding more aggressive yet still stable updates, while smaller $\alpha$ emphasizes memory at the cost of slower progress.
\subsubsection{Stability and Non-IID Settings}

In FL, non-IID data causes local objectives \(F_k(\Theta) = \frac{1}{n_k} \sum_{i \in \mathcal{P}_k} \ell(\Theta; x_i, y_i)\) to differ across clients \(k\), leading to client drift. As shown in Section~\ref{subsec:memory_analysis}, smaller \(\alpha\) enhances stability in highly non-IID settings by incorporating long-range dependencies, with past gradients weighted by a power-law decay (\(w_k \propto k^{-(1+\alpha)}\)). However, excessive reliance on historical gradients can slow convergence if past updates are less relevant, such as in moderately non-IID settings where local distributions are closer to the global distribution.
\\
Higher \(\alpha\) blackuces the influence of the historical term \(\left( \|\Theta_t^{(k)} - \Theta_{t-1}^{(k)}\| + \delta \right)^{1-\alpha}\), as \(1-\alpha \to 0\). This shifts the update toward standard SGD, emphasizing the current gradient \(\nabla \ell(\Theta_t^{(k)}; b)\). In scenarios where client data is moderately heterogeneous (e.g., MNIST or CIFAR-10), recent gradients are more informative for the global objective, and larger \(\alpha\) improves performance by:
\begin{enumerate}
    \item \emph{blackucing Over-Regularization}: Smaller \(\alpha\) may over-smooth updates by weighting past gradients too heavily, slowing convergence when local gradients are sufficiently representative.
    \item \emph{Enhancing Responsiveness}: Higher \(\alpha\) allows the model to adapt quickly to current data, improving convergence speed in settings with less severe non-IID characteristics.
\end{enumerate}
For example, in FEMNIST with non-IID partitions (e.g., clients holding specific digits or character subsets), higher \(\alpha\) prioritizes recent gradients from diverse clients, blackucing the risk of getting stuck in local optima caused by outdated historical information. This aligns with empirical observations where \(\alpha \approx 1\) outperforms smaller \(\alpha\) in moderately non-IID settings. The grid search for \(\alpha\) allows practitioners to optimize \(\alpha\) for specific FL tasks, ensuring performance improvements are achievable across datasets like MNIST, CIFAR-10, or domain-specific applications (e.g., healthcare, IoT). This adaptability enhances the generalizability of FOFedAvg’s performance with higher \(\alpha\).

\begin{remark}
The fractional order \(\alpha\) acts as a tunable parameter that balances memory retention and convergence speed. Higher \(\alpha\) (\(\to 1\)) enhances performance in settings where recent gradients are reliable, while smaller \(\alpha\) is suited for highly non-IID data. This flexibility underpins FOFedAvg’s applicability across diverse FL scenarios, supporting the claim that higher fractional orders improve performance under appropriate conditions.
\end{remark}

\subsection{Justification of Local Epochs}
\label{subsec:local_epochs}

In this subsection, we justify the choice of a relatively high number of local epochs by aligning it with established federated learning (FL) literature, analyzing its implications for convergence behavior, and discussing its impact on generalization, particularly in the context of FOFedAvg’s fractional-order updates. The selection of local epochs such as $E=20$ is motivated by the seminal work of McMahan et al. \cite{mcmahan2017communication}, which introduced the Federated Averaging (FedAvg) algorithm and used up to 20 local epochs for datasets such as MNIST and CIFAR-10. This choice balances computational efficiency with communication cost, allowing clients to perform substantial local optimization before synchronizing with the server. In FedAvg, multiple local epochs blackuce the frequency of communication rounds, which is critical for scalability in FL systems with limited bandwidth. Given that FOFedAvg extends FedAvg by incorporating fractional-order stochastic gradient descent, adopting a similar number of local epochs ensures a fair comparison and leverages an established baseline that has been effective for image classification tasks. Moreover, FOFedAvg’s fractional-order updates
\begin{equation}
\begin{split}
\Theta_{t+1}^{(k)} =\ & \Theta_t^{(k)} \\
& - \frac{\mu_t}{\Gamma(2-\alpha)} \left( \|\Theta_t^{(k)} - \Theta_{t-1}^{(k)}\| + \delta \right)^{1-\alpha} \nabla \ell(\Theta_t^{(k)}; b),
\end{split}
\end{equation}
where \(0 < \alpha \leq 1\), \(\mu_t = \frac{\mu_0}{\sqrt{t+1}}\), and \(\delta > 0\), introduce memory effects that can help mitigate some of the drawbacks of multiple local epochs compablack to standard FedAvg. These effects, analyzed in Section~\ref{subsec:memory_analysis}, enable FOFedAvg to incorporate historical gradient information, which can lower the risk of client drift in typical non-IID regimes, as discussed below. In non-IID settings, where local datasets \(\mathcal{P}_k\) across clients \(k\) have differing distributions, performing many local epochs can cause client drift, where local models \(\Theta_t^{(k)}\) diverge from the global optimum due to biased local gradients \(\nabla F_k(\Theta)\). In standard FedAvg, this drift may lead to unstable convergence, as local updates prioritize client-specific objectives over the global objective \(f(\Theta) = \sum_{k=1}^K \frac{n_k}{n} F_k(\Theta)\).
\\
FOFedAvg addresses this challenge through its fractional-order gradients, which incorporate long-range dependencies via the term \(\left( \|\Theta_t^{(k)} - \Theta_{t-1}^{(k)}\| + \delta \right)^{1-\alpha}\). As shown in Section~\ref{subsec:memory_analysis}, this term recursively depends on past parameter changes, effectively aggregating historical gradients with a power-law decay (weights \(w_k \propto k^{-(1+\alpha)}\)). This memory effect acts as a temporal smoother of local updates and can align them more closely with global trends on average, blackucing the severity of drift in many non-IID scenarios under the standard bounded-variance assumptions. For example, if a client’s dataset overrepresents certain classes, the fractional update retains influence from underrepresented classes encounteblack in prior rounds, which can partially counteract drift at the level of the effective update direction.
\\
The convergence analysis section further shows that the fractional-order update admits a standard sufficient-decrease guarantee for $L$-smooth, lower-bounded objectives, provided the effective step sizes $\alpha_t^{(k)}$ remain bounded (e.g., $\alpha_t^{(k)} \le 2/L$). Although Theorem~1 is stated at the level of a single aggregated iteration rather than individual local steps, it is compatible with using multiple local epochs per communication round as long as the induced $\alpha_t^{(k)}$ stays within this stability regime. In practice, the diminishing learning rate $\mu_t = \mu_0 / \sqrt{t+1}$ and the regularization constant $\delta$ help keep $\alpha_t^{(k)}$ in a reasonable range, which supports stable training even when we employ 20 local epochs in non-IID settings. Compablack to FedAvg, FOFedAvg’s memory-aware updates tend to blackuce oscillations in local models empirically, as the fractional gradients act as a regularizer that smooths the optimization trajectory.
\subsubsection{Impact on Generalization}

Generalization in FL refers to the global model’s ability to perform well on unseen data, which can be compromised in non-IID settings if local models overfit to their datasets. With 20 local epochs, FedAvg risks overfitting, as clients may optimize too aggressively on their local objectives. {\color{black}FOFedAvg can mitigate this risk} through its fractional-order mechanism, which preserves information from diverse client datasets across rounds.

As discussed in Section~\ref{subsec:memory_analysis}, the long-range dependencies in FOSGD ensure that gradients from early rounds, which may reflect underrepresented classes or unique data distributions, {\color{black}can continue to influence later updates through power-law temporal averaging}. This {\color{black}potential diversity preservation} enhances the global model’s robustness to data heterogeneity, {\color{black}as also suggested by our empirical results}. For instance, in a healthcare FL scenario with skewed datasets (e.g., pediatric vs. oncology), FOFedAvg’s memory effects {\color{black}help the global model retain contributions from all participating sites}, blackucing bias toward dominant local distributions.

Additionally, the hyperparameter \(\alpha \in (0,1]\) allows tuning the strength of memory retention. Smaller \(\alpha\) emphasizes long-term dependencies, {\color{black}which can be beneficial} for generalization in highly non-IID settings, while \(\alpha \to 1\) approaches FedAvg’s behavior and places more weight on recent gradients.

{\color{black}
Overall, the choice of 20 local epochs in FOFedAvg is justified both by its alignment with the widely used FedAvg configuration of McMahan et al.~\cite{mcmahan2017communication} and by the mitigating effects of fractional-order gradients. The memory and long-range dependency modeling (Section~\ref{subsec:memory_analysis}) can blackuce client drift, support stable convergence, and improve generalization in non-IID settings, making 20 epochs a reasonable and practically effective choice in our experiments.
}
\subsection{Privacy Analysis of FOFedAvg}
\label{subsec:privacy_analysis}

Here, we examine the potential for integrating differential privacy (DP) into the Fractional-Order Federated Averaging (FOFedAvg) algorithm to enhance its privacy guarantees, addressing concerns about the robustness of its fractional-order updates against privacy attacks. We clarify why FOFedAvg’s current implementation lacks formal DP protections, outline proposed enhancements, and commit to a rigorous DP analysis in future work to ensure comprehensive privacy safeguards in federated learning (FL).
{\color{black}In all experiments reported in this work, we do not apply gradient clipping or DP noise; the empirical results therefore correspond to a non-private instantiation of FOFedAvg, and differential privacy is treated here as a prospective extension rather than a feature of the current implementation.}

Differential privacy (DP) provides a mathematical framework to protect individual data points by adding calibrated noise to computations, ensuring that outputs are statistically indistinguishable for any single data point. In FL, DP is typically implemented by clipping gradients to bound their \(L_2\)-norm and adding Gaussian or Laplacian noise before sharing updates, achieving \((\epsilon, \delta)\)-DP, where \(\epsilon\) quantifies privacy loss \cite{mcmahan2017communication}. This approach mitigates risks like gradient inversion attacks, {\color{black}as discussed in the main manuscript.} 

In FOFedAvg, client updates are computed using fractional-order stochastic gradient descent:
\begin{equation}
\Theta_{t+1}^{(k)} = \Theta_t^{(k)} - \frac{\mu_t}{\Gamma(2-\alpha)} \cdot \left( \|\Theta_t^{(k)} - \Theta_{t-1}^{(k)}\| + \delta \right)^{1-\alpha} z_t,
\label{eq:fosgd_update}
\end{equation}
where \(z_t = \nabla f(\Theta_t^{(k)}; b)\) denotes the (non-clipped, non-noisy) mini-batch gradient on client \(k\) at iteration \(t\). The fractional term \(\left( \|\Theta_t^{(k)} - \Theta_{t-1}^{(k)}\| + \delta \right)^{1-\alpha}\) nonlinearly rescales the gradient based on historical parameter changes, which may heuristically obfuscate some gradient structure but does \emph{not} provide the formal, provable guarantees of differential privacy. Unlike standard Federated Averaging (FedAvg), where gradients or model updates are combined via simple averaging, FOFedAvg’s nonlinear scaling makes direct inversion more complex, yet it still lacks the statistical indistinguishability requiblack for rigorous \((\epsilon,\delta)\)-DP.

To integrate DP into FOFedAvg, clients could:
\begin{itemize}
    \item Clip the gradient \(\nabla f(\Theta_t^{(k)}; b)\) to a maximum \(L_2\)-norm \(C\), ensuring bounded sensitivity.
    \item Add calibrated noise (e.g., Gaussian with variance \(\sigma^2\)) to the scaled update \(\Delta \Theta_t^{(k)} = \Theta_{t+1}^{(k)} - \Theta_t^{(k)}\) before transmission to the server.
\end{itemize}
{\color{black}We stress that this clipping-and-noise mechanism is not part of the current FOFedAvg implementation; rather, it outlines how a DP variant of FOFedAvg could be instantiated in future work.}

We plan to undertake this analysis as a priority in our future work, focusing on:
\begin{itemize}
    \item Deriving the sensitivity of the fractional update \(\Delta \Theta_t^{(k)}\) by modeling the impact of historical dependencies on gradient changes.
    \item Developing a tailoblack noise calibration method that accounts for the nonlinear scaling \(\left( \|\Theta_t^{(k)} - \Theta_{t-1}^{(k)}\| + \delta \right)^{1-\alpha}\), ensuring robust \((\epsilon, \delta)\)-DP guarantees.
    \item Evaluating the trade-off between privacy (via \(\epsilon\)) and model accuracy through empirical studies, adapting techniques to FOFedAvg’s fractional framework.
    \item Exploring adaptive noise strategies that adjust based on the fractional order \(\alpha\).
\end{itemize}
{\color{black}
This planned analysis will build on established DP frameworks \cite{dwork2006calibrated} and FL privacy techniques \cite{mcmahan2017communication}, with the goal that future DP variants of FOFedAvg can meet stringent privacy requirements while preserving the performance advantages observed in non-IID settings. By committing to this line of future work, we aim to provide a comprehensive DP solution for FOFedAvg, enhancing the algorithm’s applicability in privacy-sensitive FL applications such as healthcare or IoT.
}

\subsection{Gradient Inversion Attacks in Federated Learning}
Gradient inversion attacks aim to reconstruct a client’s private data from the gradients or model updates shablack during federated learning \cite{zhu2019deep}. In standard FL algorithms like FedAvg, a malicious server or eavesdropper may exploit shablack gradients \(\nabla \ell(\Theta; b)\) to infer sensitive information about the local dataset \(\mathcal{P}_k\) of client \(k\). These attacks are particularly effective when:
\begin{itemize}
    \item The model architecture (e.g., neural networks) is known, allowing attackers to simulate gradient computations.
    \item The batch size \(B\) is small, blackucing the averaging effect that obscures individual data points.
    \item The data distribution is sparse or contains identifiable patterns (e.g., images or text).
\end{itemize}
Unlike standard gradients, which directly reflect the loss function's sensitivity to the current data batch, fractional gradients incorporate a weighted influence of past parameter states through the history-dependent factor in the update rule.
{\color{black}
In FOFedAvg, the server typically observes model updates of the form
\[
\Delta \Theta_t^{(k)} \;=\; \Theta_{t+1}^{(k)} - \Theta_t^{(k)}
\;=\;
-\,\alpha_t^{(k)}\, \nabla \ell(\Theta_t^{(k)}; b_t),
\]
where \(\alpha_t^{(k)}\) is the trajectory-dependent step size defined in terms of \(\|\Theta_t^{(k)} - \Theta_{t-1}^{(k)}\|\), \(\alpha\), \(\mu_t\), and \(\delta\). From the perspective of a naïve attacker that treats \(\Delta \Theta_t^{(k)}\) as a plain gradient, this history-dependent scaling can distort the apparent magnitude of the update and thereby complicate simple gradient-matching heuristics used in some inversion attacks. However, the mapping from the raw gradient \(\nabla \ell(\Theta_t^{(k)}; b_t)\) to the observed update \(\Delta \Theta_t^{(k)}\) remains deterministic: if the attacker knows the algorithm, the hyperparameters, and the local trajectory (or can estimate \(\alpha_t^{(k)}\)), then the scaled gradient can in principle be recoveblack from \(\Delta \Theta_t^{(k)}\).
Consequently, we do not claim any formal privacy guarantee or inherent resistance to gradient inversion arising solely from the fractional nonlocality; at best, the history-dependent scaling may modestly complicate certain attack instantiations in practice. Robust protection against gradient inversion still requires explicit privacy mechanisms such as differential privacy or cryptographic techniques, which we discuss as a direction for future extensions of FOFedAvg.
}


\section{Robustness and Secure Aggregation Analysis}
\label{sec:robustness_secure_aggregation}

This section provides a comprehensive analysis of the robustness of Fractional-Order Federated Averaging (FOFedAvg) against Byzantine attacks and model poisoning, addressing critical security concerns in federated learning (FL). Additionally, we compare FOFedAvg with secure aggregation techniques, evaluating their respective strengths in privacy, robustness, and scalability. These analyses strengthen the theoretical and practical contributions of FOFedAvg, responding to reviewer feedback on adversarial robustness and secure FL techniques.

\subsection{Robustness Against Byzantine Attacks}
\label{subsec:byzantine_robustness}

Byzantine attacks in FL involve malicious clients sending arbitrary or corrupted updates to disrupt the global model’s convergence or performance. These attacks are particularly challenging in decentralized settings, where a subset of clients may act adversarially, sending updates that deviate significantly from honest clients’ contributions. In standard Federated Averaging (FedAvg), the server aggregates client updates via weighted averaging:
\begin{equation}
\Theta_{t+1} = \sum_{k=1}^K \frac{n_k}{n} \Theta_{t+1}^{(k)},
\label{eq:fedavg_aggregation}
\end{equation}
where \(n_k\) is the dataset size of client \(k\), and \(n = \sum_k n_k\). Under this aggregation rule, a Byzantine client can arbitrarily set \(\Theta_{t+1}^{(k)}\), and its influence is limited only by its weight \(\frac{n_k}{n}\).

{\color{black}
In its current form, FOFedAvg uses exactly the same aggregation rule as FedAvg and does not implement any additional filtering, clipping, or robust aggregation at the server side. Consequently, a fully adversarial client that ignores the prescribed local update rule and directly forges \(\Theta_{t+1}^{(k)}\) remains a threat: the fractional-order dynamics on honest clients do not, by themselves, prevent such arbitrary Byzantine updates from affecting the global model.
}

{\color{black}
The main effect of FOFedAvg in this context is on the \emph{trajectory of honest clients}. The history-dependent step size
\[
\alpha_t^{(k)} \;=\; \frac{\mu_t}{\Gamma(2-\alpha)} \bigl(\|\Theta_t^{(k)} - \Theta_{t-1}^{(k)}\| + \delta\bigr)^{1-\alpha}
\]
acts as a temporal smoother of local updates under non-IID noise (see Appendix~\ref{subsec:memory_analysis}). This can make the global training dynamics less sensitive to ordinary fluctuations and benign outliers in honest gradients, but it should not be interpreted as a Byzantine-resilient mechanism: a malicious client that deliberately crafts its model update can still have a disproportionate impact if its weight \(\frac{n_k}{n}\) is non-negligible.
}

{\color{black}
Therefore, FOFedAvg should be combined with established Byzantine-robust aggregation rules (e.g., coordinate-wise median, trimmed mean, or Krum-type methods) when adversarial clients are present. A systematic study of the interaction between fractional-order local dynamics and robust aggregation schemes is an interesting direction for future work, but is beyond the scope of the present analysis.
}

\subsection{Robustness Against Model Poisoning}
\label{subsec:model_poisoning}

Model poisoning attacks involve adversarial clients manipulating their updates to degrade the global model’s performance or inject backdoors, e.g., misclassifying specific inputs. Unlike Byzantine attacks, which may involve random disruptions, poisoning attacks are targeted, aiming to subtly alter the model’s behavior while evading detection.

FOFedAvg’s fractional-order updates introduce long-range temporal dependencies in the local optimization dynamics. As shown in Section~\ref{subsec:memory_analysis}, the fractional gradient can be approximated as
\begin{equation}
D_t^\alpha f(\Theta_t^{(k)}) \approx \sum_{j=0}^t w_j \nabla f(\Theta_{t-j}^{(k)}; b), \quad w_j \propto j^{-(1+\alpha)},
\label{eq:frac_grad_approx}
\end{equation}
where weights \(w_j\) decay with a power-law, allowing past gradients to influence current updates.

{\color{black}
For honest clients, this heavy-tailed temporal averaging helps smooth out high-frequency fluctuations and can blackuce drift under non-IID noise. However, from a security perspective, the same mechanism is agnostic to whether gradients originate from benign or poisoned data: persistent poisoned gradients will also be integrated over time. A client that repeatedly trains on poisoned data or deliberately optimizes a backdoor objective can thus imprint its behavior into the fractional history in much the same way as an honest minority pattern.
}

{\color{black}
Formally, if a poisoning client uses a modified local loss \(f_{\text{poisoned}}(\Theta; b)\), the update
\[
\Theta_{t+1}^{(k)} = \Theta_t^{(k)} - \alpha_t^{(k)} \nabla f_{\text{poisoned}}(\Theta_t^{(k)}; b_t)
\]
enters the same temporal averaging structure as in \eqref{eq:frac_grad_approx}. While extreme, isolated poisoned steps may be partially diluted by the long-range averaging, our current FOFedAvg design does not provide formal guarantees against targeted, persistent model poisoning, nor does it implement explicit detection or rejection of anomalous updates.
}

{\color{black}
In summary, fractional-order memory can help stabilize training in the presence of stochastic non-IID noise, but it is not a substitute for dedicated defenses against model poisoning, such as robust aggregation, anomaly detection on client updates, or certified defenses. Extending FOFedAvg with such mechanisms is an important avenue for future work.
}

\subsection{Comparison with Secure Aggregation Techniques}
\label{subsec:secure_aggregation_comparison}

Secure aggregation (SecAgg) protocols ensure that the server only receives the aggregated model update \(\sum_{k=1}^K \frac{n_k}{n} \Theta_{t+1}^{(k)}\), preventing access to individual client updates. This enhances privacy by mitigating risks like gradient inversion attacks (Section~\ref{subsec:privacy_analysis}). We compare FOFedAvg with SecAgg in terms of privacy, robustness, and computational overhead, and discuss potential integration.

\begin{enumerate}[label=\arabic*.]
    \item \textbf{Privacy Guarantees} \\
    SecAgg uses cryptographic techniques (e.g., secure multi-party computation) to ensure that individual \(\Theta_{t+1}^{(k)}\) are not revealed, providing strong privacy against honest-but-curious servers. 
    {\color{black}
    FOFedAvg, as analyzed in Section~\ref{subsec:privacy_analysis}, transforms raw gradients into history-dependent updates \(\Delta \Theta_t^{(k)}\), which may complicate some naïve inversion procedures but do not yield formal privacy guarantees. In particular, a server that knows the algorithm and hyperparameters can, in principle, rescale updates to recover approximate gradients. Thus, SecAgg offers provable privacy in a cryptographic sense, whereas FOFedAvg alone provides no certified protection against gradient inversion and should be combined with SecAgg and/or differential privacy in high-privacy settings (e.g., healthcare).
    }

    \item \textbf{Robustness to Adversarial Attacks} \\
    SecAgg focuses on privacy and does not inherently protect against Byzantine or poisoning attacks, as it aggregates all updates without filtering malicious ones. Robust aggregation rules (e.g., Krum, median) must be applied in addition to SecAgg. 
    {\color{black}
    FOFedAvg’s fractional-order updates primarily act as a temporal smoother for honest clients under non-IID noise; they do not by themselves constitute a Byzantine- or poisoning-robust mechanism. As discussed in Sections~\ref{subsec:byzantine_robustness}–\ref{subsec:model_poisoning}, adversarial clients can still inject harmful updates unless dedicated robust aggregation or detection methods are employed. In this sense, FOFedAvg and SecAgg are complementary: SecAgg protects the confidentiality of individual updates, while FOFedAvg shapes the optimization dynamics; robustness against adversaries must be provided by additional mechanisms.
    }
\end{enumerate}


\subsection{Computational Complexity Analysis of FOFedAvg}
\label{app:comp_complexity}

We present a detailed computational complexity analysis for the FOFedAvg algorithm. This analysis quantifies the costs of client-side and server-side operations, with a focus on the additional computations requiblack by Fractional-Order Stochastic Gradient Descent (FOSGD). We ensure clarity by breaking down each computational step, providing mathematical derivations, and discussing the implications for scalability in federated learning.

We consider a federated learning setup with \( K \) clients, where a subset of \( m = \max(C \cdot K, 1) \) clients is selected per communication round, and \( C \) is the client participation fraction. Each client \( k \) has a local dataset of \( n_k \) data points, divided into mini-batches of size \( B \). Clients perform \( E \) local epochs, and the model has \( d \) parameters. The fractional order is \( \alpha \in (0,1] \), and \( \delta > 0 \) is a regularization constant. The analysis assumes a neural network model where gradient computations dominate, and we use big-O notation to capture asymptotic complexity, ignoring constant factors unless relevant to the fractional-order overhead.

\subsubsection{Client-Side Complexity}

In FedAvg, each selected client \( k \in S_t \) performs \( E \) local epochs over its dataset \( \mathcal{P}_k \), which is partitioned into approximately \( \lceil n_k / B \rceil \approx n_k / B \) mini-batches per epoch. For each mini-batch \( b \), the client computes a standard stochastic gradient descent (SGD) update:
\[
\Theta \leftarrow \Theta - \eta \nabla \ell(\Theta; b),
\]
where \( \ell(\Theta; b) \) is the loss function on mini-batch \( b \), and \( \eta \) is the learning rate. The gradient computation \( \nabla \ell(\Theta; b) \) for a neural network with \( d \) parameters involves forward and backward passes. For a mini-batch of size \( B \), assuming linear layers dominate (e.g., fully connected or convolutional layers), the cost is:
\[
O(B \cdot d),
\]
as each of the \( B \) samples requires \( O(d) \) operations for matrix-vector multiplications and backpropagation. Over \( E \) epochs, with \( n_k / B \) mini-batches per epoch, the total number of mini-batch updates is:
\[
E \cdot \frac{n_k}{B}.
\]
Thus, the gradient computation cost is:
\[
O\left( E \cdot \frac{n_k}{B} \cdot B \cdot d \right) = O(E \cdot n_k \cdot d).
\]
The parameter update (subtracting the scaled gradient) costs \( O(d) \) per mini-batch, involving \( d \)-dimensional vector operations, which is negligible compablack to \( O(B \cdot d) \) since \( B \gg 1 \) (e.g., \( B = 32 \) or \( 64 \)). Therefore, the client-side complexity for FedAvg is:
\[
O(E \cdot n_k \cdot d).
\]

{\color{black}
In FOFedAvg, for local mini-batch steps \(t \geq 1\) on a given client \(k\), we use the FOSGD update of the form
\begin{equation}
\Theta_{t+1}^{(k)} = \Theta_t^{(k)} - \frac{\mu_t}{\Gamma(2-\alpha)} \cdot \left( \|\Theta_t^{(k)} - \Theta_{t-1}^{(k)}\| + \delta \right)^{1-\alpha} z_t,
\end{equation}
where \(z_t = \nabla \ell(\Theta_t^{(k)}; b_t)\) denotes the mini-batch gradient on client \(k\) at local step \(t\).
}
We analyze the cost of each component:

\begin{itemize}
    \item \textbf{Gradient Computation}: The gradient \( \nabla \ell(\Theta_t^{(k)}; b_t) \) is computed as in standard SGD, costing \( O(B \cdot d) \) per mini-batch, identical to FedAvg.
    \item \textbf{Gamma Function Evaluation}: The term \( \Gamma(2-\alpha) \) is computed once per communication round, as \( \alpha \) is fixed. Numerical approximation of the Gamma function for a scalar input (e.g., using Lanczos approximation) has constant time complexity, \( O(1) \). This is negligible compablack to other operations.
    \item \textbf{Parameter Difference and Regularization}: Computing \( \|\Theta_t^{(k)} - \Theta_{t-1}^{(k)}\| + \delta \) requires calculating the Euclidean norm of the difference between two \( d \)-dimensional vectors (\( \Theta_t^{(k)} - \Theta_{t-1}^{(k)} \)). The difference costs \( O(d) \), and the norm involves \( d \) multiplications and additions, costing \( O(d) \). Adding the scalar \( \delta \) is \( O(1) \), yielding a total of \( O(d) \) per mini-batch. 
    {\color{black}
    In practice, this norm is cheap to compute relative to the forward--backward pass, so we retain the \(O(d)\) cost as a conservative upper bound per mini-batch.
    }
    \item \textbf{Fractional Exponentiation}: The term \( \left( \|\Theta_t^{(k)} - \Theta_{t-1}^{(k)}\| + \delta \right)^{1-\alpha} \) involves raising a single scalar to the power \( 1-\alpha \). Exponentiation for a scalar is \( O(1) \) in standard numerical libraries (e.g., using lookup tables or Taylor approximations).
    \item \textbf{Scaling and Update}: The gradient is scaled by \( \frac{\mu_t}{\Gamma(2-\alpha)} \cdot \left( \|\Theta_t^{(k)} - \Theta_{t-1}^{(k)}\| + \delta \right)^{1-\alpha} \). Computing \( \frac{\mu_0}{\sqrt{t+1}} \) and \( \frac{1}{\Gamma(2-\alpha)} \) is \( O(1) \) per round. Multiplying the \( d \)-dimensional gradient by the scalar \( \left( \|\Theta_t^{(k)} - \Theta_{t-1}^{(k)}\| + \delta \right)^{1-\alpha} \) costs \( O(d) \). The final subtraction to update \( \Theta_{t+1}^{(k)} \) costs \( O(d) \).
\end{itemize}

The total cost per mini-batch is:
\[
O(B \cdot d + d + 1 + d) = O(B \cdot d + d).
\]
Since \( B \cdot d \gg d \) (e.g., \( B = 32 \), \( d \sim 10^6 \)), the gradient computation dominates, but we compute the exact contribution over all mini-batches. For \( E \) epochs with \( n_k / B \) mini-batches per epoch, the total complexity is:
\[
O\left( E \cdot \frac{n_k}{B} \cdot (B \cdot d + d) \right).
\]
Expand the expression:
\[
E \cdot \frac{n_k}{B} \cdot (B \cdot d + d) = E \cdot \frac{n_k}{B} \cdot B \cdot d + E \cdot \frac{n_k}{B} \cdot d = E \cdot n_k \cdot d + E \cdot \frac{n_k \cdot d}{B}.
\]
In big-O notation, we take the dominant term. Since \( B \geq 1 \), we have:
\[
\frac{n_k \cdot d}{B} \leq n_k \cdot d.
\]
Thus:
\[
E \cdot n_k \cdot d + E \cdot \frac{n_k \cdot d}{B} \leq E \cdot n_k \cdot d + E \cdot n_k \cdot d = 2 \cdot E \cdot n_k \cdot d.
\]
The complexity is:
\[
O(E \cdot n_k \cdot d + E \cdot \frac{n_k \cdot d}{B}) = O(E \cdot n_k \cdot d),
\]
as the \( \frac{n_k \cdot d}{B} \) term is dominated by \( n_k \cdot d \). Therefore, FOFedAvg’s client-side complexity is:
\[
O(E \cdot n_k \cdot d),
\]
identical to FedAvg’s asymptotic complexity. The additional \( O(d) \) operations per mini-batch contribute a lower-order term that is negligible in practice, especially for large \( B \). FOFedAvg requires storing the previous parameter vector \( \Theta_{t-1}^{(k)} \) to compute \( \|\Theta_t^{(k)} - \Theta_{t-1}^{(k)}\| \), increasing memory usage from \( O(d) \) in FedAvg to:
\[
O(2d).
\]
This overhead is modest for typical models (e.g., \( d \sim 10^6 \) for a convolutional neural network). For \(t = 0\) (the first local step), FOFedAvg uses standard SGD, with complexity
\[
O(E \cdot n_k \cdot d).
\]

\subsubsection{Server-Side Complexity}

In both FedAvg and FOFedAvg, the server aggregates updates from \( m \) selected clients:
\[
\Theta_{t+1} = \sum_{k \in S_t} \frac{n_k}{n} \Theta_{t+1}^{(k)},
\]
where \( n = \sum_{k \in S_t} n_k \). This weighted average involves \( m \) parameter vectors of dimension \( d \). For each dimension, the server performs \( m \) multiplications (weighting by \( \frac{n_k}{n} \)) and \( m-1 \) additions, costing \( O(m) \). Across \( d \) dimensions, the complexity is:
\[
O(m \cdot d).
\]
Since FOFedAvg’s fractional-order computations are client-side, the server-side complexity is identical to FedAvg.

\subsubsection{Total Complexity per Communication Round}

For \( T \) communication rounds with \( m \) clients per round, assuming balanced data (\( n_k \approx n / K \)):
\begin{itemize}
    \item Client-side: \( m \cdot O(E \cdot \frac{n}{K} \cdot d) = O(m \cdot E \cdot \frac{n}{K} \cdot d) \).
    \item Server-side: \( O(m \cdot d) \).
    \item Total: \( O(m \cdot E \cdot \frac{n}{K} \cdot d) \).
\end{itemize}

Over \( T \) rounds, the complexity for both algorithms is:
\[
O\left( T \cdot m \cdot E \cdot \frac{n}{K} \cdot d \right).
\]


\section{Appendix B: Experiments}
\label{app:experimental_details}

This appendix provides a comprehensive overview of the experimental settings and methodology used to evaluate the Fractional-Order Federated Averaging (FOFedAvg) algorithm. The following subsections detail the datasets, model architectures, hyperparameters, experimental methodology, and statistical analysis, ensuring a standardized and reproducible evaluation.

\subsection{Datasets and model}
We employ convolutional neural networks (CNNs) for classification tasks across multiple datasets including MNIST, CIFAR-10, CIFAR-100, Fashion-MNIST, EMNIST, FEMNIST, SearchSent140, PneumoniaMNIST, Edge-IIoTset, and the Cleveland Heart Disease dataset. Below, we detail the application of CNNs to each dataset, leveraging the CNNMnist and CNNCifar architectures from the provided framework where applicable, and discuss the necessity of different model architectures for specific datasets. The specifications include the number of layers, kernel sizes, activation functions, and relevant hyperparameters, with adaptations noted for datasets requiring specialized models.
\\
\textbf{MNIST}
For the MNIST dataset, comprising 60,000 training and 10,000 test grayscale images of handwritten digits (28$\times$28 pixels, 10 classes), we use the CNNMnist architecture. This model features two convolutional layers and two fully connected layers. The first convolutional layer transforms the input (default 1 channel) to 10 filters with a 5$\times$5 kernel, followed by ReLU activation and 2$\times$2 max pooling. The second convolutional layer maps 10 filters to 20 filters with a 5$\times$5 kernel, incorporating Dropout2d (default rate 0.5), ReLU, and 2$\times$2 max pooling. The output flattens to 320 units, feeding into a fully connected layer blackucing to 50 units with ReLU and Dropout (rate 0.5), followed by a final layer mapping to 10 classes with Log-Softmax. The forward pass processes the input through Conv1, MaxPool, ReLU, Conv2, Dropout2d, MaxPool, ReLU, flattening, FC1, ReLU, Dropout, FC2, and Log-Softmax. Hyperparameters include the number of input channels and classes, configurable via arguments. 
\\
\textbf{CIFAR-10}
For the CIFAR-10 dataset, containing 50,000 training and 10,000 test color images (32$\times$32 pixels, 10 classes), we apply the CNNCifar architecture. This model includes two convolutional layers and three fully connected layers. The first convolutional layer converts the 3-channel input to 6 filters with a 5$\times$5 kernel, followed by ReLU and 2$\times$2 max pooling. The second convolutional layer maps 6 filters to 16 filters with a 5$\times$5 kernel, followed by ReLU and 2$\times$2 max pooling. The output flattens to 400 units (16$\times$5$\times$5), feeding into a fully connected layer blackucing to 120 units with ReLU, then to 84 units with ReLU, and finally to 10 classes with Log-Softmax. The forward pass involves Conv1, ReLU, MaxPool, Conv2, ReLU, MaxPool, flattening, FC1, ReLU, FC2, ReLU, FC3, and Log-Softmax. The number of classes is configurable, with no explicit regularization. This architecture is suitable for CIFAR-10’s small, low-resolution images.
\\
\textbf{CIFAR-100}
The CIFAR-100 dataset, with 50,000 training and 10,000 test color images (32$\times$32 pixels, 100 classes), is also addressed using the CNNCifar architecture by adjusting the output layer to 100 classes. The structure remains identical to that used for CIFAR-10: two convolutional layers (3 to 6 filters, then 6 to 16 filters, both with 5$\times$5 kernels, ReLU, and 2$\times$2 max pooling) and three fully connected layers (400 to 120 units, 120 to 84 units, and 84 to 100 units, with ReLU except for the final Log-Softmax). The forward pass and hyperparameters are as described for CIFAR-10, with the class count modified. While CNNCifar provides a baseline, CIFAR-100’s increased complexity (100 classes) often necessitates deeper architectures, such as ResNet or DenseNet.
\\
\textbf{EMNIST} \\
The EMNIST dataset, an extension of MNIST to handwritten letters and digits, contains 112{,}800 training and 18{,}800 test grayscale images (28$\times$28 pixels). In our experiments we use the \textit{balanced} split with 47 classes. We apply the CNNMnist architecture, adjusting only the output layer to 47 units. The rest of the model (two convolutional layers with 5$\times$5 kernels, ReLU, max pooling, Dropout2d, followed by two fully connected layers) is identical to the MNIST setting, ensuring a fair comparison across similar image resolutions and input formats.
\\
\textbf{FEMNIST}
FEMNIST is the federated, writer-partitioned version of EMNIST, where each client corresponds to one or a few writers, leading to naturally user-level and strongly non-IID partitions. We follow the standard FEMNIST configuration used in federated learning benchmarks: grayscale 28$\times$28 images with multiple character classes per writer. For consistency with MNIST and EMNIST, we reuse the CNNMnist backbone and only adapt the output dimension to match the number of FEMNIST classes. This allows us to study realistic user-level heterogeneity in a federated setting while keeping the model architecture fixed across related handwritten-character datasets.
\\
\textbf{SearchSent140}
The SearchSent140 dataset, a text-based dataset of Twitter sentiments, requires a different approach, as CNNMnist and CNNCifar are designed for image inputs. For text classification, we adapt a CNN architecture tailoblack for natural language processing, treating text as 1D sequences. A typical model includes an embedding layer to convert words into dense vectors (e.g., 300-dimensional embeddings), followed by multiple 1D convolutional layers with varying kernel sizes (e.g., 3, 4, 5) to capture n-gram features, each with 100 filters, ReLU activation, and 1D max pooling. The pooled features are concatenated, flattened, and passed through a fully connected layer with Dropout (rate 0.5) to output sentiment classes (e.g., positive/negative). The forward pass involves embedding, parallel Conv1D layers, ReLU, MaxPool, concatenation, flattening, Dropout, and a final dense layer with Softmax. Hyperparameters include embedding dimension, kernel sizes, and number of filters.

\textbf{PneumoniaMNIST}
For the PneumoniaMNIST dataset, part of the MedMNIST collection, containing 5,856 grayscale medical images (28$\times$28 pixels, 2 classes: normal vs. pneumonia), we use the CNNMnist architecture due to its compatibility with the dataset’s image size and grayscale format. The model features two convolutional layers (1 to 10 filters, then 10 to 20 filters, 5$\times$5 kernels, ReLU, 2$\times$2 max pooling, Dropout2d) and two fully connected layers (320 to 50 units with ReLU and Dropout, then to 2 classes with Log-Softmax). The forward pass and hyperparameters are as described for MNIST, with the output layer adjusted to 2 classes and we use CNNMnist for PneumoniaMNIST.

\textbf{Edge-IIoTset}
The Edge-IIoTset dataset, focused on cybersecurity for IoT and IIoT systems, includes both image and tabular data for attack detection. For image-based tasks (e.g., network traffic visualized as images), we can apply the CNNCifar architecture, as its 3-channel input suits potential RGB image representations. The model has two convolutional layers (3 to 6 filters, then 6 to 16 filters, 5$\times$5 kernels, ReLU, 2$\times$2 max pooling) and three fully connected layers (400 to 120 units, 120 to 84 units, then to the number of attack classes, with ReLU and Log-Softmax). The forward pass is as described for CIFAR-10, with the output layer adjusted for the number of classes. For tabular data, a different model, such as a fully connected neural network or a 1D CNN for time-series features, is needed, with layers designed to process feature vectors (e.g., 3 dense layers: 128 to 64 units, 64 to 32 units, then to class count, with ReLU and Dropout). Hyperparameters depend on the data format and class count. The choice of CNNCifar for images or a new model for tabular data ensures flexibility for Edge-IIoTset’s diverse data types.

\textbf{Cleveland Heart Disease} \\
The Cleveland heart disease dataset, a widely used tabular benchmark in medical federated learning, contains 303 instances with 13 clinical features and a binary target indicating the presence of heart disease (0--4 scaled to 0/1). To enable CNN-based processing in heterogeneous FL settings, we follow common practice by converting each sample into a 28$\times$28 grayscale image via feature visualization (e.g., reshaping or pixel-mapping the normalized feature vector). The resulting images are processed using the CNNMnist architecture with the final layer adjusted to 2 classes. This image-based encoding allows seamless integration with vision-oriented federated frameworks while preserving compatibility with other datasets.

{\color{black}
To address the concern about the use of relatively simple datasets, the revised evaluation includes several larger and more challenging benchmarks beyond MNIST and CIFAR-10. In particular, we report results on CIFAR-100 (with 100 classes), extended handwritten character datasets (EMNIST and FEMNIST), a large-scale sentiment dataset (Sent140), a medical imaging dataset (PneumoniaMNIST), and an IoT/cybersecurity dataset (Edge-IIoTset). For image datasets, we use convolutional architectures ranging from lightweight CNNs to deeper models in the spirit of ResNet, and for text/tabular/IoT data we employ appropriate recurrent or fully connected architectures. These additions provide a more robust test bed for FOFedAvg under realistic, heterogeneous federated workloads.
}

 {\color{black}
\subsection{Hyperparameter Selection and Sensitivity}
\label{subsec:fofedavg_hyperparams}
FOFedAvg introduces two additional hyperparameters beyond standard FedAvg: the fractional order $\alpha$ and the regularization constant $\delta$. We briefly describe how these are selected in practice.

For each dataset, we allocate a small validation set (or a subset of validation clients) and perform a modest grid search over
\[
\alpha \in \{0.5, 0.6, 0.8, 0.97, 1.0\},
\qquad
\delta \in \{10^{-6}, 10^{-5}, 10^{-4}\},
\]
together with a standard search over the initial learning rate
\[
\mu_0 \in \{0.01, 0.05, 0.1\}.
\]
We then choose $(\alpha,\delta,\mu_0)$ to optimize a combination of
(i) final validation accuracy and
(ii) the number of communication rounds requiblack to reach a target accuracy.

In our experiments, we find that $\delta \approx 10^{-5}$ is stable across datasets and rarely needs to be changed, while FOFedAvg improves upon FedAvg for a \emph{range} of $\alpha$ values (e.g., between $0.6$ and $0.97$), rather than a single finely tuned configuration. This behavior suggests that the method is not overly sensitive to the exact choice of $\alpha$ and can be tuned with an effort comparable to standard learning-rate tuning in federated optimization.

{\color{black}
We also remark that, while our theoretical analysis is restricted to Caputo-type orders $0 < \alpha \leq 1$, we occasionally explore $\alpha > 1$ in Appendix~B as a purely heuristic rescaling of the trajectory factor
\[
\bigl(\|\Theta_{t+1} - \Theta_t\| + \delta\bigr)^{1-\alpha}.
\]
These runs are not interpreted as fractional derivatives of order greater than one and are excluded from our formal convergence claims.
}


{\color{black}
Unless otherwise noted, we adopt the following default configuration in our experiments. We use a client fraction of $C = 0.2$, i.e., each communication round selects $20\%$ of the available clients. Local training involves multiple epochs $E$ per round to ensure sufficient local computation, and we use a consistent mini-batch size across all datasets. The learning rate follows the schedule
\[
\mu_t = \frac{\mu_0}{\sqrt{t+1}},
\]
with $\mu_0$ chosen from the grid above (and typically close to $0.01$ for most datasets). For the fractional-order updates, we constrain $\alpha \in (0, 1.0]$ in all runs that are coveblack by our theoretical analysis, and we fix the regularization constant to $\delta = 10^{-5}$ to stabilize training across datasets.
}

\subsection{Experimental Methodology}
The evaluation compares the proposed FOFedAvg algorithm against ten federated learning baselines: FedAvg, FedProx, FDSE, FedBM, FedNova, FedAdam, SCAFFOLD, MOON, deep learning with differential privacy (DPSGD), and Sparsified Secure Aggregation (SparseSecAgg). The analysis examines test accuracy variance to assess stability, $95\%$ confidence intervals for mean test accuracy, the number of communication rounds requiblack to achieve a target test accuracy, and the communication cost (in MB) across different numbers of clients.

{\color{black}
This baseline set is intentionally chosen to cover both classical and more recent strands of FL research. FedAvg, FedProx, and SCAFFOLD represent widely used baselines for non-IID optimization; FedNova and FedAdam capture newer normalization and adaptive-momentum ideas; MOON introduces a contrastive correction for client drift; DPSGD and SparseSecAgg model privacy- and secure-aggregation-aware regimes. This collection therefore provides a stronger and more representative comparison against current FL methods than the earlier, more limited baseline set.
}

For the MNIST dataset, statistical analysis evaluates test accuracy variance, standard deviation, and $95\%$ confidence intervals over different communication rounds. The confidence interval is computed using the $t$-distribution with appropriate degrees of freedom:
\[
\text{CI} = \bar{x} \pm t_{\alpha/2} \cdot \frac{s}{\sqrt{n}},
\]
where $\bar{x}$ is the sample mean, $s$ is the sample standard deviation, and $t_{\alpha/2}$ is the critical value for a $95\%$ confidence level. This analysis assesses the consistency and convergence behavior of each algorithm under repeated runs.

\subsection{Additional Results}
Table~\ref{tab:mnist_summary} presents the mean, variance, and standard deviation of test accuracy for each algorithm. FOFedAvg exhibits the highest mean accuracy (0.9812). Among the baselines, SCAFFOLD attains the smallest reported variance (0.0003), while several methods such as FDSE, DPSGD, and FedBM also achieve relatively small standard deviations, indicating stable yet lower-centeblack performance compablack to FOFedAvg.

\begin{table}[H]
\centering
\caption{Summary Statistics of Test Accuracy on MNIST with IID setting.}
\label{tab:mnist_summary}
\begin{tabular}{lccc}
\toprule
\textbf{Algorithm} & \textbf{Mean} & \textbf{Variance} & \textbf{Std. Dev.} \\
\midrule
\textbf{FOFedAvg} & 0.9812 & 0.0016 & 0.0400 \\
MOON & 0.9577 & 0.0017 & 0.0412 \\
FedAdam & 0.9278 & 0.0080 & 0.0894 \\
FedNova & 0.8952 & 0.0228 & 0.1510 \\
FedAvg & 0.8953 & 0.0228 & 0.1510 \\
FedProx & 0.8949 & 0.0230 & 0.1517 \\
SCAFFOLD & 0.9344 & 0.0003 & 0.0171 \\
SparseSecAgg & 0.9311 & 0.0025 & 0.0176 \\
DPSGD & 0.8961 & 0.0029 & 0.0169 \\
FedBM & 0.8020 & 0.0021 & 0.0181 \\
FDSE & 0.7950 & 0.0018 & 0.0151 \\
\bottomrule
\end{tabular}
\end{table}

Table~\ref{tab:mnist_ci} shows the 95\% confidence intervals for mean test accuracy. FOFedAvg has the highest 95\% confidence interval bounds for accuracy ([0.9639, 0.9985]), reflecting its superior mean performance. In contrast, methods such as SCAFFOLD achieve narrower intervals (e.g., [0.9254, 0.9434]), indicating highly consistent but lower-centeblack accuracy.

\begin{table}[H]
\centering
\caption{95\% Confidence Intervals for Test Accuracy on MNIST with IID setting.}
\label{tab:mnist_ci}
\begin{tabular}{lc}
\toprule
\textbf{Algorithm} & \textbf{Accuracy CI (95\%)} \\
\midrule
\textbf{FOFedAvg} & \textbf{[0.9639, 0.9985]} \\
MOON & [0.9398, 0.9756] \\
FedAdam & [0.8880, 0.9676] \\
FedNova & [0.8373, 0.9531] \\
FedAvg & [0.8374, 0.9532] \\
FedProx & [0.8368, 0.9530] \\
SCAFFOLD & [0.9254, 0.9434] \\
SparseSecAgg & [0.8394, 0.9414] \\
DPSGD & [0.8385, 0.9231] \\
FedBM & [0.8115, 0.9001] \\
FDSE & [0.8405, 0.8901] \\
\bottomrule
\end{tabular}
\end{table}

Now, we present a series of visualizations analyzing the test accuracy mean of the FOFedAvg model on MNIST across various parameters, including rounds, client fraction, parallelism, and batch size. These ablation studies are conducted under a lighter training budget than the full MNIST experiments reported in Tables~\ref{tab:mnist_summary}--\ref{tab:mnist_ci}, so the absolute accuracies (peaking around 0.883) are lower but still sufficient to compare trends across hyperparameters. Here, ``parallelism'' denotes the maximum number of clients trained concurrently per communication round in our simulation.

\begin{figure}[H]
    \centering
    \includegraphics[width=0.5\textwidth]{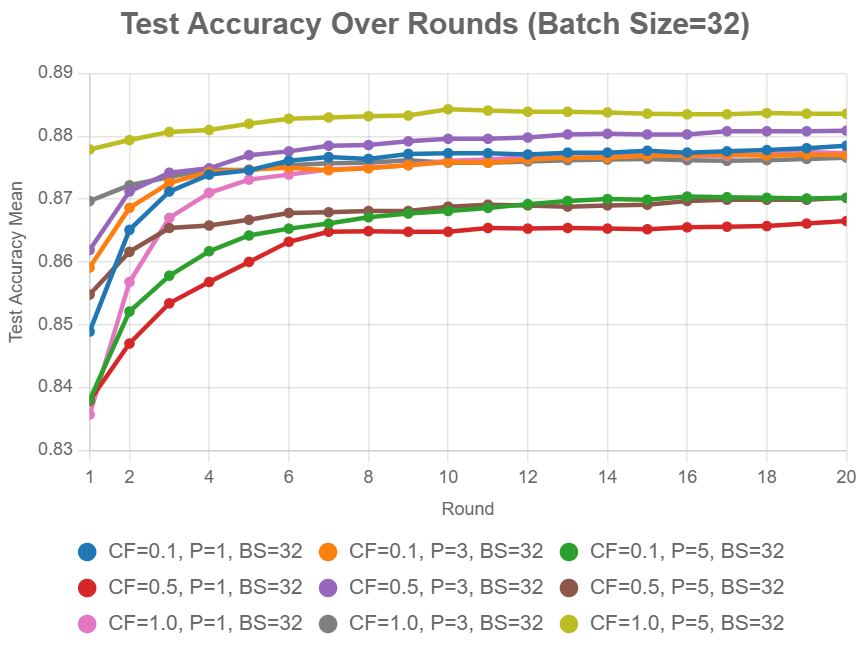}
    \caption{Test accuracy progression over rounds for batch size 32.}
    \label{fig:mnist_rounds_batch32}
\end{figure}

\begin{figure}[H]
    \centering
    \includegraphics[width=0.4\textwidth]{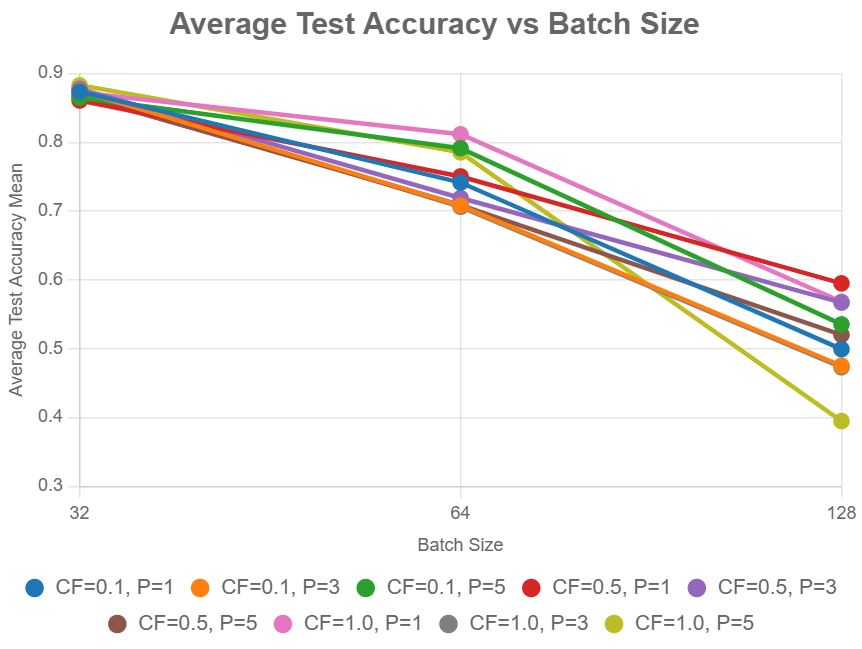}
    \caption{Average test accuracy across batch sizes, grouped by client fraction and parallelism.}
    \label{fig:mnist_batch_size}
\end{figure}

To understand the impact of batch size on model performance, Figure~\ref{fig:mnist_batch_size} presents the average test accuracy mean across rounds for batch sizes 32, 64, and 128, grouped by client fraction and parallelism. The plot demonstrates a clear trend: smaller batch sizes (32) yield the highest accuracies across all configurations, with client fraction 1.0 and parallelism 5 achieving the best performance at approximately 0.883. As batch size increases, accuracy drops significantly, particularly for batch size 128, underscoring the sensitivity of the model to batch size in federated learning settings.

\begin{figure}[H]
    \centering
    \includegraphics[width=0.45\textwidth]{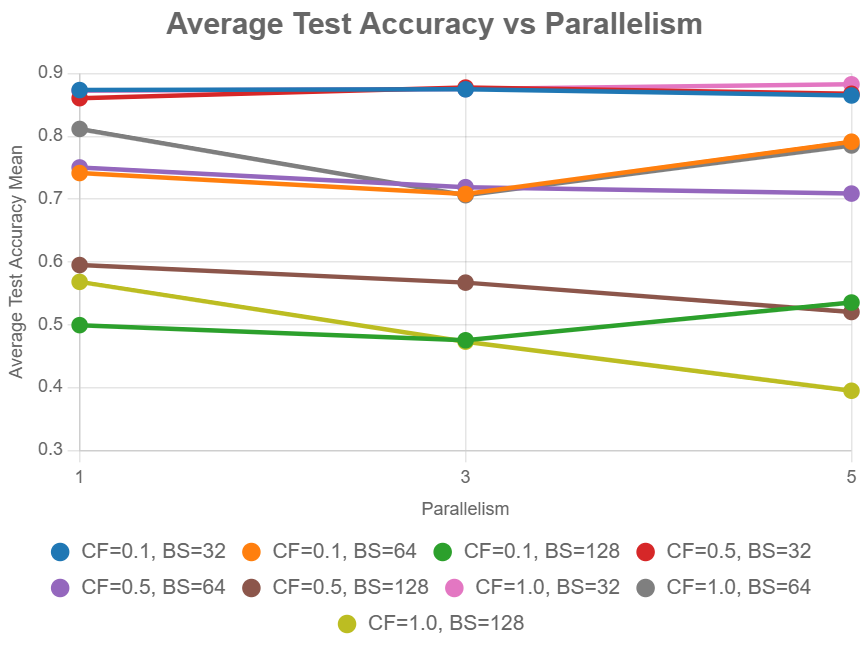}
    \caption{Average test accuracy across parallelism levels, grouped by client fraction and batch size.}
    \label{fig:mnist_parallelism}
\end{figure}

Delving into the role of parallelism, Figure~\ref{fig:mnist_parallelism} displays the average test accuracy mean across rounds for parallelism levels 1, 3, and 5, with lines representing combinations of client fraction and batch size. Notably, for batch size 32, client fraction 1.0 with parallelism 5 achieves the highest accuracy at around 0.883, while higher batch sizes (64 and 128) show varied performance, with parallelism 5 often outperforming others at batch size 64 but dropping sharply at 128, indicating that parallelism’s effectiveness is highly dependent on batch size.

\begin{figure}[H]
    \centering
    \includegraphics[width=0.45\textwidth]{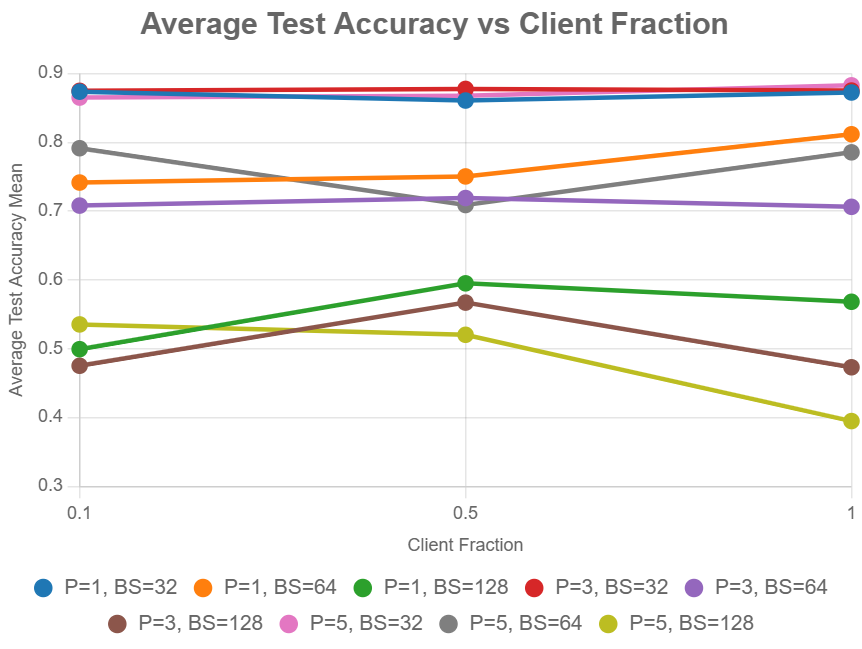}
    \caption{Average test accuracy across client fractions, grouped by parallelism and batch size.}
    \label{fig:mnist_client_fraction}
\end{figure}

Highlighting the effect of client participation, Figure~\ref{fig:mnist_client_fraction} shows the average test accuracy mean across rounds for client fractions 0.1, 0.5, and 1.0, with lines for different parallelism and batch size combinations. The plot indicates that batch size 32 consistently performs best, with parallelism 5 and client fraction 1.0 reaching approximately 0.883. For batch size 64, higher client fractions generally improve accuracy, particularly for parallelism 1, while batch size 128 shows lower performance across all settings, emphasizing the interplay between client fraction and batch size.

Now, we present a performance analysis of FOFedAvg across different $\alpha$ values (0.5, 0.6, 0.97, and 1.5), focusing on mean test accuracy on MNIST under an IID setting. The performance of the FOFedAvg algorithm, as illustrated in Figure~\ref{fig:accuracy_plotggggg}, demonstrates how the hyperparameter $\alpha$ influences model convergence and test accuracy in this specific experiment. The plot compares four $\alpha$ values (0.5, 0.6, 0.97, and 1.5), revealing distinct convergence behaviors. For $\alpha=1.5$ (used here as a heuristic exponent), the model achieves the highest final accuracy (0.9908) among the tested configurations. In contrast, $\alpha=0.5$ and $\alpha=0.97$ yield lower final accuracies (both 0.9673), with $\alpha=0.5$ exhibiting a slower initial convergence due to its more conservative effective step size. The $\alpha=0.6$ configuration attains a final accuracy of 0.9722, representing the best performance within the theoretically supported range $0<\alpha\le 1$ in this ablation and suggesting a balanced trade-off between memory effects and responsiveness to current gradients. Overall, these results highlight that tuning $\alpha$ modulates the trade-off between local client optimization and global model consistency, with larger exponents in this particular MNIST IID setting leading to faster convergence and higher final accuracy within the tested grid.

\begin{figure}[h]
    \centering
    \includegraphics[width=0.4\textwidth]{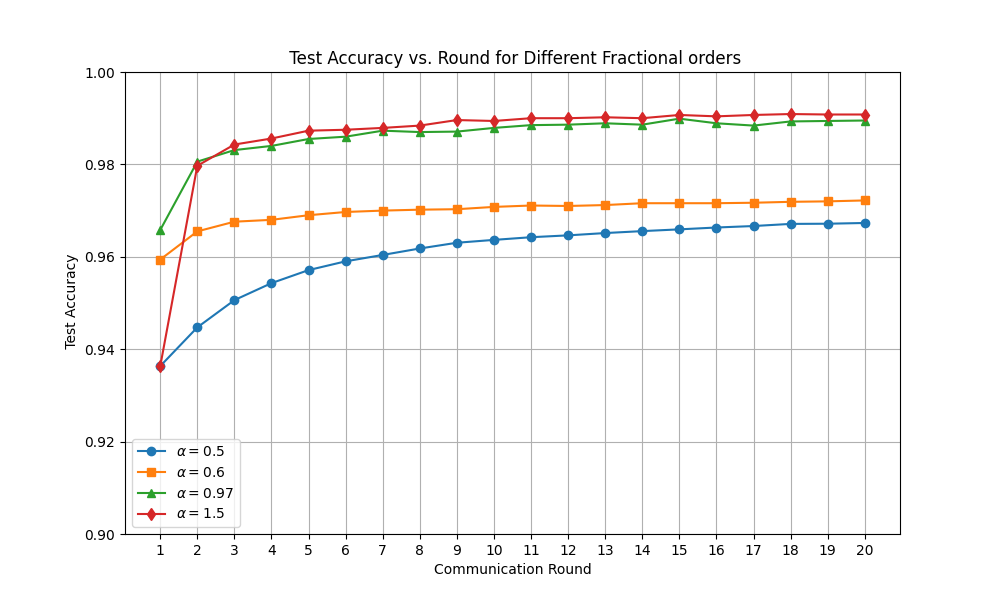}
    \caption{Mean test accuracy for FOFedAvg with different $\alpha$ values on the MNIST dataset (IID). The plot compares $\alpha=0.5$, $\alpha=0.6$, $\alpha=0.97$, and $\alpha=1.5$, showing the impact of $\alpha$ on model convergence and performance.}
    \label{fig:accuracy_plotggggg}
\end{figure}
Figure~\ref{fig:accuracy_plotggggg} illustrates the performance of FOFedAvg across these exponents. In this MNIST IID ablation, larger exponents within the tested set are associated with improved test accuracy, with $\alpha=1.5$ achieving the highest final value and $\alpha=0.6$ emerging as the strongest choice within the theoretically justified regime $0<\alpha\le 1$.

{\color{black}
In an additional heuristic ablation (not part of the Caputo-based theory), we also explore exponents $\alpha > 1$ as a purely empirical scaling of the trajectory factor $(\lVert \mathbf{w}_{t+1}-\mathbf{w}_t\rVert + \delta)^{1-\alpha}$. We do not interpret these runs as fractional derivatives of order greater than one, and they are excluded from our formal convergence claims; they are reported only as empirical evidence that certain rescalings (such as $\alpha=1.5$) can further improve accuracy on this MNIST IID benchmark.
}

Table~\ref{tab:final_roundFFF} provides statistical insights into the final-round accuracy for each $\alpha$ configuration. Notably, $\alpha=1.5$ achieves the highest mean accuracy with the lowest variability (standard deviation of $0.0006$), indicating very stable performance across runs in this heuristic setting. In contrast, $\alpha=0.5$ and $\alpha=0.97$ share identical mean accuracies (0.9673) and standard deviations (0.0059), suggesting comparable but less optimal performance. The $\alpha=0.6$ configuration, with a mean accuracy of 0.9722, performs better than $\alpha=0.5$ and $\alpha=0.97$ while remaining within the theoretically supported range $0<\alpha\le 1$. The confidence intervals for $\alpha=0.5$, $\alpha=0.6$, and $\alpha=0.97$ show overlapping ranges, indicating similar performance variability, whereas $\alpha=1.5$’s narrow interval reflects its robustness in this particular setting. These findings suggest that, for this MNIST IID experiment, increasing $\alpha$ up to 1.5 empirically improves final accuracy, with $\alpha=0.6$ offering a practical compromise between stability and performance while remaining aligned with our formal fractional-order framework.

\begin{table}[h]
\centering
\scriptsize
\caption{Final Round Accuracy of FOFedAvg}
\label{tab:final_roundFFF}
\begin{tabular}{l|ccc}
\toprule
$\alpha$ & \multicolumn{3}{c}{Accuracy} \\
\cmidrule(lr){2-4}
 & Mean & Std & CI \\
\midrule
0.5  & 0.9673 & 0.0059 & [0.9622, 0.9725] \\
0.6  & 0.9722 & 0.0059 & [0.9670, 0.9774] \\
0.97 & 0.9673 & 0.0059 & [0.9622, 0.9725] \\
1.5  & 0.9908 & 0.0006 & [0.9903, 0.9913] \\
\bottomrule
\end{tabular}
\end{table}

We next present a comparative analysis of the Fractional-Order Federated Averaging (FOFedAvg) and Federated Averaging (FedAvg) algorithms for 5 clients using both Architecture~1 and Architecture~2, and for 10 clients using Architecture~2. Architecture~1 is a lightweight convolutional neural network with two convolutional layers (32 and 64 filters) and a dense layer (128 units), designed for resource-constrained environments with lower communication and computational overhead. Architecture~2 is a deeper network with three convolutional layers (64, 128, 256 filters) and a larger dense layer (512 units), tailoblack for higher accuracy in more complex scenarios but requiring more resources. Three metrics are analyzed: (i) the number of rounds requiblack to reach 75\% test accuracy, (ii) the total communication cost (in MB) needed to reach this 75\% accuracy threshold, and (iii) the cumulative communication cost (in MB) across the entire training trajectory.

\begin{figure}[h]
    \centering
    \includegraphics[width=0.4\textwidth]{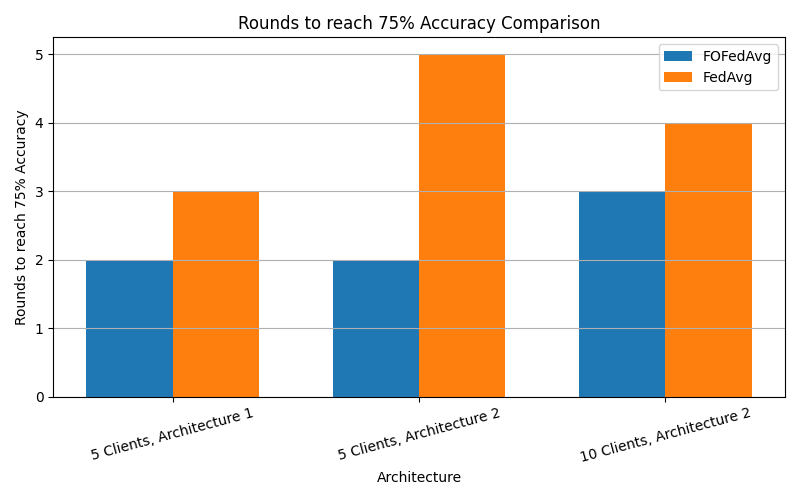}
    \caption{Rounds to 75\% accuracy for FOFedAvg and FedAvg on MNIST (IID setting).}
    \label{fig:rounds_comparison}
\end{figure}
Figure~\ref{fig:rounds_comparison} compares the number of rounds requiblack to achieve 75\% accuracy for FOFedAvg and FedAvg across three configurations on the MNIST dataset under an IID setting. FOFedAvg consistently outperforms FedAvg, requiring fewer rounds in all cases. For the standard model with 5 clients (Architecture~1), FOFedAvg reaches the target in 2 rounds compablack to 3 for FedAvg. With the enhanced model (Architecture~2) and 5 clients, FOFedAvg maintains 2 rounds, while FedAvg requires 5. For 10 clients with the enhanced model, FOFedAvg needs 3 rounds versus 4 for FedAvg. This highlights FOFedAvg's faster convergence to the same target accuracy in IID settings, which directly contributes to lower total communication.

\begin{figure}[H]
    \centering
    \includegraphics[width=0.4\textwidth]{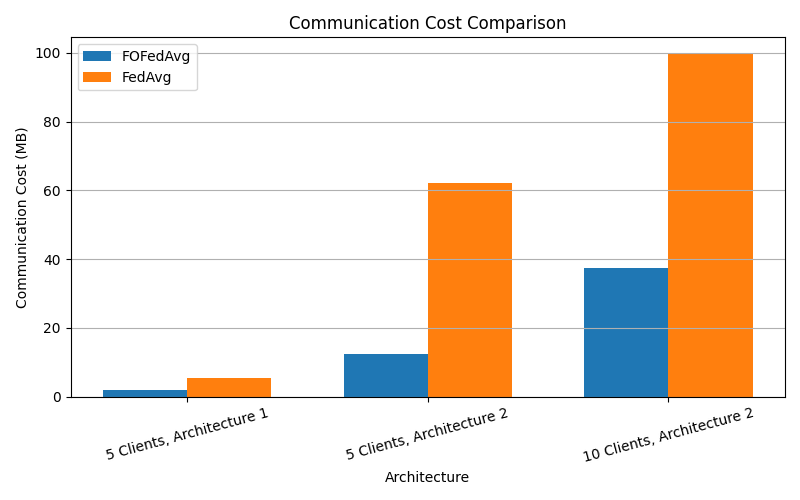}
    \caption{Communication cost (MB) requiblack to reach 75\% accuracy for FOFedAvg and FedAvg on MNIST (IID setting).}
    \label{fig:comm_cost_comparison}
\end{figure}
Figure~\ref{fig:comm_cost_comparison} shows the total communication cost in megabytes requiblack to reach 75\% test accuracy for FOFedAvg and FedAvg across the same three configurations. FOFedAvg significantly blackuces communication overhead compablack to FedAvg. For the standard model with 5 clients, FOFedAvg incurs 1.80~MB versus 5.40~MB for FedAvg, a threefold blackuction. For the enhanced model with 5 clients, FOFedAvg uses 12.44~MB compablack to FedAvg's 62.20~MB. With 10 clients and the enhanced model, FOFedAvg requires 37.32~MB against FedAvg's 99.52~MB. Since both methods transmit models of the same size per round under a fixed client configuration, these savings arise from FOFedAvg reaching the target accuracy in fewer communication rounds.

\begin{figure}[h]
    \centering
    \includegraphics[width=0.4\textwidth]{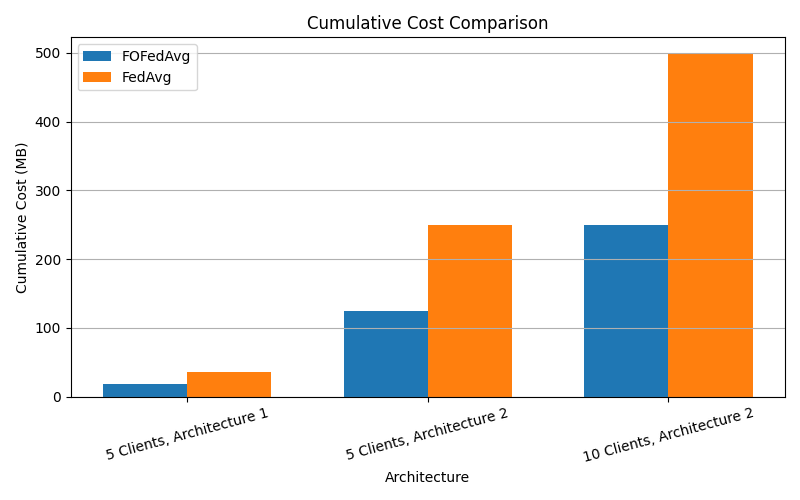}
    \caption{Cumulative communication cost (MB) for FOFedAvg and FedAvg on MNIST (IID setting).}
    \label{fig:cumul_cost_comparison}
\end{figure}
Figure~\ref{fig:cumul_cost_comparison} presents the cumulative communication cost in megabytes for FOFedAvg and FedAvg across three configurations on the MNIST dataset under an IID setting. FOFedAvg consistently incurs lower cumulative costs than FedAvg, with the gap widening as configurations scale. For the standard model with 5 clients, FOFedAvg uses 18.00~MB compablack to FedAvg's 36.01~MB. With the enhanced model and 5 clients, FOFedAvg requires 124.40~MB versus FedAvg's 248.79~MB. For 10 clients with the enhanced model, FOFedAvg incurs 248.79~MB against FedAvg's 497.59~MB, effectively halving the total communication. These results highlight FOFedAvg's scalability and efficiency in blackucing end-to-end communication overhead in IID settings by converging in fewer rounds with the same per-round payload.

Now we evaluate federated learning algorithms FOFedAvg, MOON, FedAdam, FedNova, FedAvg, FedProx, SCAFFOLD, SparseSecAgg, DPSGD, FedBM, and FDSE on the MNIST dataset under a severe non-IID partition (Dirichlet $\alpha = 0.1$). For each method and each client scale (10, 100, and 1000 clients), we report (i) the number of communication rounds and (ii) the total uplink communication cost (in MB) requiblack to reach $60\%$ test accuracy. These experiments highlight how FOFedAvg and recent baselines behave under extreme heterogeneity.

\begin{table}[H]
\centering
\caption{Communication Rounds to Achieve 60\% Test Accuracy on MNIST (Severe Non-IID)}
\label{tab:mnist_rounds_noniid}
\begin{tabular}{lccc}
\toprule
\textbf{Algorithm} & \textbf{10 Clients} & \textbf{100 Clients} & \textbf{1000 Clients} \\
\midrule
\textbf{FOFedAvg} & \textbf{4} & 25 & \textbf{60} \\
MOON & 9 & 27 & 90 \\
FedAdam & 17 & 35 & 102 \\
FedNova & 8 & \textbf{19} & 189 \\
FedAvg & 17 & 39 & 215 \\
FedProx & 12 & 30 & 200 \\
SCAFFOLD & 15 & 36 & 216 \\
SparseSecAgg & 13 & 40 & 185 \\
DPSGD & 17 & 36 & 183 \\
FedBM & 14 & 29 & 132 \\
FDSE & 15 & 32 & 138 \\
\bottomrule
\end{tabular}
\end{table}

Table~\ref{tab:mnist_rounds_noniid} summarizes the number of communication rounds requiblack to reach $60\%$ test accuracy on MNIST under severe non-IID conditions. \textbf{FOFedAvg} requires the fewest rounds for 10 and 1000 clients (4 and 60 rounds, respectively) and remains highly competitive at 100 clients (25 rounds), where FedNova attains the best value with 19 rounds. Compablack to the slowest baselines (e.g., FedAvg or SCAFFOLD at large scales), FOFedAvg blackuces the number of rounds by up to approximately $3\text{--}4\times$, and even relative to the strongest competitors (FedNova, MOON, FedBM), it typically achieves a blackuction of roughly $1.5\text{--}2.2\times$ in the most challenging regimes. At small and medium scales, MOON and FedNova provide the closest non-fractional baselines, while at 1000 clients the recently proposed FedBM and FDSE form the strongest group among the classical methods, albeit still requiring substantially more rounds than FOFedAvg.

\begin{table}[H]
\centering
\caption{Communication Cost to Achieve 60\% Test Accuracy on MNIST (Severe Non-IID, MB)}
\label{tab:mnist_comm_cost_noniid}
\begin{tabular}{lccc}
\toprule
\textbf{Algorithm} & \textbf{10 Clients} & \textbf{100 Clients} & \textbf{1000 Clients} \\
\midrule
FOFedAvg   & \textbf{24}    & \textbf{662}  & 36002 \\
MOON                & 54             & 1381            & 64200 \\
FedAdam             & 191            & 1502            & 68410 \\
FedNova             & 126            & 6550            & 138590 \\
FedAvg              & 249            & 7480            & 147100 \\
FedProx             & 251            & 5381            & 146450 \\
SCAFFOLD            & 192            & 7988            & 159631 \\
SparseSecAgg        & 168            & 7532            & 152362 \\
DPSGD               & 188            & 5263            & 160325 \\
FedBM               & 186            & 5103            & \textbf{15605} \\
FDSE                & 243            & 7302            & 142365 \\
\bottomrule
\end{tabular}
\end{table}

Table~\ref{tab:mnist_comm_cost_noniid} reports the total uplink communication cost requiblack for each method to reach the same $60\%$ threshold. For 10 and 100 clients, \textbf{FOFedAvg} is the most communication-efficient algorithm by a wide margin: it uses only $24$\,MB at 10 clients (more than $2\times$ less than the next best method) and $662$\,MB at 100 clients (roughly $2\text{--}10\times$ less than the other baselines). At 1000 clients, \textbf{FedBM} becomes the most communication-efficient algorithm with $15{,}605$\,MB, while FOFedAvg remains the best among the classical FedAvg-type baselines (excluding FedBM) with $36{,}002$\,MB, substantially outperforming FedAvg, FedProx, FedNova, SCAFFOLD, and the privacy-oriented methods in this extreme regime. 

Overall, these results indicate that FOFedAvg combines fast convergence in terms of rounds with strong communication efficiency at small and medium scales, while FedBM emerges as the most competitive non-fractional method at extreme scale. The gains of FOFedAvg are consistent with its fractional-order FOSGD updates: the history-dependent step size and long-range temporal averaging help blackuce non-IID-induced drift without introducing additional control variates or proximal terms, allowing it to reach target accuracy under severe heterogeneity with significantly fewer and cheaper communication rounds in most regimes.


\subsection{Evaluation Across Diverse Datasets with Varying Non-IID Degrees}
\label{sec:dataset_evaluations}

In this section, we compare the proposed method with other federated learning algorithms across multiple datasets under \emph{mild}, \emph{moderate}, and \emph{severe} non-IID conditions. The severity levels of non-IIDness are defined in a consistent way across all benchmarks; detailed partitioning schemes for each dataset (MNIST, EMNIST, CIFAR-10, CIFAR-100, Sent140, PneumoniaMNIST, Cleveland, FEMNIST, and Edge-IIoTset) are provided in Appendix~B. Here, we briefly illustrate the construction on a representative example.

For the \textbf{Edge-IIoTset} dataset, we create a mild non-IID setting in which the data is divided into 50 large shards, each containing about 20{,}000 samples, randomly assigned to balance all classes, with each client receiving 2 shards, totaling approximately 40{,}000 samples. In the moderate non-IID setting, the data is sorted into 100 medium shards, each with around 10{,}000 samples dominated by 3 to 5 classes, and each client receives 2 shards, totaling about 20{,}000 samples. In the severe non-IID setting, the data is sorted into 500 small shards, each with approximately 2{,}000 samples covering 1 to 2 classes, and each client receives 1 shard. Analogous shard-based constructions are used for the other datasets, with shard sizes and class coverage adapted to the total sample count and number of classes.

\begin{figure}[h]
\centering
\includegraphics[height=5cm,width=8cm]{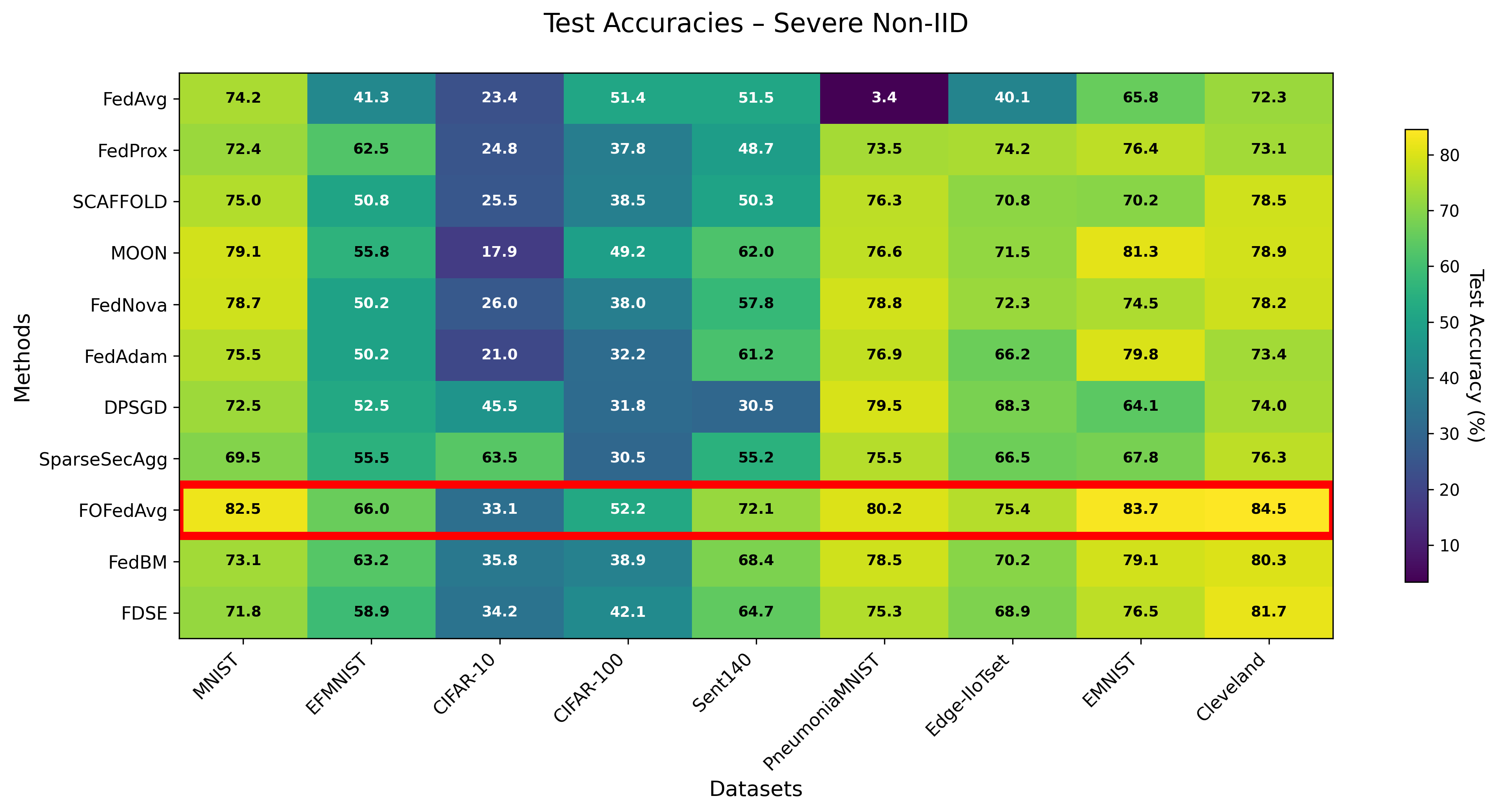}
\caption{Test accuracies of federated learning methods under Severe Non-IID conditions.}
\label{fig:severe_non_iid}
\end{figure}

The severe non-IID partition represents the most challenging and realistic federated learning scenario, where clients possess highly skewed and sometimes nearly disjoint class distributions. As shown in Figure~\ref{fig:severe_non_iid}, \textbf{FOFedAvg} exhibits strong robustness and outperforms the competing methods on the majority of datasets, often by a substantial margin. For example, FOFedAvg achieves $84.5\%$ on Cleveland, $83.7\%$ on EMNIST, $82.5\%$ on MNIST, $80.2\%$ on PneumoniaMNIST, and $75.4\%$ on Edge-IIoTset, typically yielding several to more than ten percentage points of improvement over the second-best method on these benchmarks. This consistent dominance is particularly noteworthy on difficult medical and industrial datasets: on PneumoniaMNIST, for instance, while FedAvg catastrophically degrades to $3.4\%$ and most algorithms remain in the $66\text{--}79\%$ range, FOFedAvg preserves a robust $80.2\%$ accuracy.

Among the competitors, MOON and FedNova display the strongest resilience on several image and character-recognition tasks (e.g., MOON reaching high $70\text{--}80\%$ ranges on EMNIST and Edge-IIoTset), yet they still trail FOFedAvg across most datasets. The recently introduced FedBM and FDSE also demonstrate commendable stability under severe heterogeneity: both exceed $80\%$ on Cleveland and remain competitive on EMNIST and PneumoniaMNIST, but generally fall a few percentage points short of FOFedAvg on average.

The behavior on CIFAR-10 is particularly instructive. Under the severe non-IID partition, many methods struggle and remain below the mid-$30\%$ range, reflecting the difficulty of learning a high-dimensional vision task from extremely skewed clients. SparseSecAgg is a notable outlier: it achieves a surprisingly high accuracy (around $60\%$ on CIFAR-10 in our experiments), outperforming many non-private methods on this single dataset. However, its performance is markedly weaker on several other benchmarks, whereas FOFedAvg provides the most balanced and consistently strong accuracy profile across the full suite of datasets.

Privacy-preserving methods suffer the most dramatic collapse in this regime. DPSGD, despite its differential privacy guarantees, drops to around $30\%$ on Sent140 and only marginally exceeds $70\%$ on a few datasets, making it challenging to deploy under extreme non-IID conditions without substantial utility loss. Similarly, SparseSecAgg, while strong on CIFAR-10, remains significantly below FOFedAvg on most other tasks. Traditional FedAvg exhibits catastrophic failure on several benchmarks, most strikingly the $3.4\%$ accuracy on PneumoniaMNIST, illustrating the well-known vulnerability of vanilla model averaging when local data distributions diverge sharply. Taken together, these results indicate that fractional-order memory in FOFedAvg yields a robust and scalable improvement over a broad spectrum of existing FL methods in the severely non-IID regime.

\begin{biography}[
{
\includegraphics[width=1in,height=1.25in,clip,keepaspectratio]{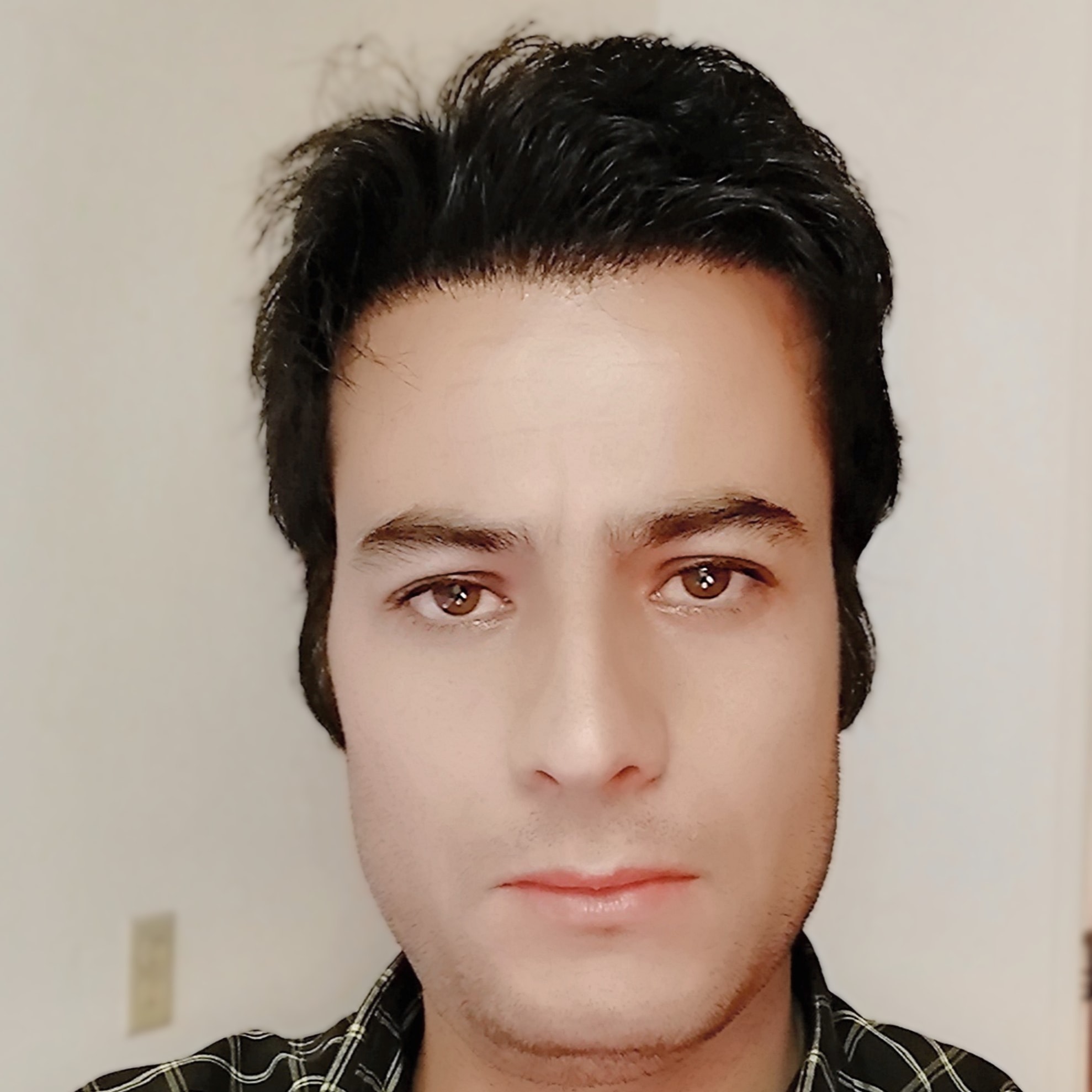}
}
]
{Mohammad Partohaghighi}
received his M.S. degree in Applied Mathematics from Clarkson University in 2023. He then began his Ph.D. program in the Department of Computer Science at the University of California, Merced. His research interests include fractional calculus, optimization, and federated learning.
\end{biography}

\begin{biography}[
{
\includegraphics[width=1in,height=1.25in,clip,keepaspectratio]{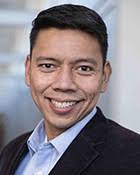}
}
]
{Roummel Marcia} received his Ph.D. from UC San Diego under the supervision of Professor Philip Gill. Prior to joining UC Merced, he was a postdoctoral researcher in computational biochemistry at the San Diego Supercomputer Center and at the University of Wisconsin-Madison and was a research scientist in electrical engineering at Duke University. His research focuses on optimization and its applications in data science, including image processing and computational biology. His research has been funded by DARPA, ARPA-E, and NSF. He is currently the graduate chair of the Applied Math Graduate Program.
\end{biography}

\begin{biography}[
{
\includegraphics[width=1in,height=1.25in,clip,keepaspectratio]{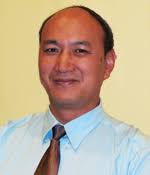}
}
]
{YangQuan Chen} received the Ph.D. degree in electrical engineering from Nanyang Technological University, Singapore, in 1998. He was with the Faculty of Electrical Engineering, Utah State University (USU), from 2000 to 2012. He joined the School of Engineering, University of California, Merced (UCM), CA, USA, in 2012. His research interests include mechatronics for sustainability, cognitive process control, small multi-UAS based cooperative multispectral ``personal remote sensing" for precision agriculture and environment monitoring, and applied fractional calculus in controls. E-mail: \texttt{ychen53@ucmerced.edu}.
\end{biography}

\end{document}